\newcommand{\shortversion}[1]{}
\newcommand{\longversion}[1]{#1}
\definecolor{mygray}{gray}{0.9}
\newtheorem{example}{Example}
\newtheorem{problem}{Problem}
\newtheorem{theorem}{Theorem}
\newtheorem{lemma}{Lemma}
\newcommand{\ParFor}[2]{\textbf{parallel for }#1 \textbf{do}\\\indent#2}
\newcommand{\RomanNumeralCaps}[1]{\MakeUppercase{\romannumeral #1}}
\title{\LARGE \bf
Online On-Demand Multi-Robot Coverage Path Planning
}
\author{Ratijit Mitra$^{1}$ and Indranil Saha$^{2}$
\thanks{$^{1}$Ratijit Mitra and $^{2}$Indranil Saha are with the Department of Computer Science and Engineering, Indian Institute of Technology Kanpur, Uttar Pradesh - 208016, India
        {\tt\small \{ratijit,isaha\}@iitk.ac.in}}%
}
\begin{document}

\maketitle
\thispagestyle{empty}
\pagestyle{empty}

\begin{abstract}
We present an online centralized path planning algorithm to cover a large, complex, unknown workspace with multiple homogeneous mobile robots. 
Our algorithm is horizon-based, synchronous, and on-demand. 
The recently proposed horizon-based synchronous algorithms compute all the robots' paths in each horizon, significantly increasing the computation burden in large workspaces with many robots. 
As a remedy, we propose an algorithm that computes the paths for a subset of robots that have traversed previously computed paths entirely (thus on-demand) and reuses the remaining paths for the other robots. 
We formally prove that the algorithm guarantees complete coverage of the unknown workspace. 
Experimental results on several standard benchmark workspaces show that our algorithm scales to hundreds of robots in large complex workspaces and consistently beats a state-of-the-art online centralized multi-robot coverage path planning algorithm in terms of the time needed to achieve complete coverage. 
For its validation, we perform ROS+Gazebo simulations in five $2$D grid benchmark workspaces with $10$ Quadcopters and $10$ TurtleBots, respectively. 
Also, to demonstrate its practical feasibility, we conduct one indoor experiment with two real TurtleBot2 robots and one outdoor experiment with three real Quadcopters. 
\end{abstract}

\section{Introduction}
\label{sec-intro}

Coverage Path Planning (CPP) deals with finding conflict-free routes for a fleet of robots to make them completely visit the obstacle-free regions of a given workspace to accomplish some designated task. 
It has numerous applications in indoor environments, e.g.,  vacuum cleaning \cite{DBLP:conf/icra/BormannJHH18, DBLP:conf/icra/VandermeulenGK19}, industrial inspection \cite{DBLP:conf/iros/JingPGRLS17}, etc.,  as well as in outdoor environments, e.g.\longversion{, road sweeping \cite{DBLP:conf/icra/EngelsonsTH22}, lawn mowing \cite{DBLP:conf/iros/SchirmerBS17}}, precision farming~\cite{DBLP:journals/jfr/BarrientosCCMRSV11}\longversion{, \cite{DBLP:conf/atal/LuoNKS19}}, surveying~\cite{DBLP:conf/icra/KarapetyanMLLOR18}\longversion{, \cite{DBLP:conf/icra/AgarwalA20}}, search and rescue operations \longversion{\cite{DBLP:conf/iros/LewisEBRO17}, }\cite{cabreira2019survey}, \longversion{demining a battlefield~\cite{DBLP:conf/icra/DogruM19}, }etc. 
A CPP algorithm, often called a Coverage Planner (CP), is said to be \textit{complete} if it guarantees coverage of the entire obstacle-free region. 
Though a \textit{single} robot is enough to achieve complete coverage of a small environment (e.g., \cite{DBLP:conf/iros/BouzidBS17, DBLP:conf/iros/KleinerBKPM17, DBLP:conf/atal/SharmaDK19}\longversion{, \cite{DBLP:conf/icra/WeiI18, DBLP:conf/icra/CoombesCL19}}, \cite{DBLP:conf/iros/ChenTKV19, DBLP:conf/icra/DharmadhikariDS20}\longversion{, \cite{DBLP:conf/icra/ZhuYLSJM21}}), 
\textit{multiple} robots (e.g., \cite{DBLP:journals/ras/GalceranC13, DBLP:conf/icra/ModaresGMD17, DBLP:conf/iros/KarapetyanBMTR17}\longversion{, \cite{DBLP:conf/atal/SalarisRA20, DBLP:conf/icra/AgarwalA20}}, \cite{DBLP:conf/icra/VandermeulenGK19, DBLP:conf/iros/HardouinMMMM20, DBLP:conf/icra/TangSZ21, DBLP:conf/icra/CollinsGEDDC21}) facilitate complete coverage of a large environment more quickly. 
However, the design complexity of the CP grows significantly to exploit the benefit of having multiple robots. 

For many CPP applications, the workspace's obstacle map is unknown initially. 
So, the \textit{offline} CPs (e.g., \longversion{\cite{DBLP:conf/icra/HazonK05, DBLP:conf/iros/ZhengJKK05, DBLP:conf/atal/RivaA17}, }\cite{DBLP:conf/iros/KleinerBKPM17, DBLP:conf/icra/BormannJHH18}\longversion{, \cite{DBLP:conf/atal/SalarisRA20, DBLP:conf/icra/CaoZTC20, DBLP:conf/iros/RamachandranZPS20, DBLP:conf/icra/SongYQSLL0022}}) that require the obstacle map beforehand are not applicable here.
Instead, we need an \textit{online} CP (e.g., \cite{DBLP:conf/agents/Yamauchi98}\longversion{, \cite{DBLP:journals/ijrr/HowardPS06}}, \cite{DBLP:conf/icra/BircherKAOS16}\longversion{, \cite{DBLP:conf/mesas/MannucciNP17, DBLP:conf/atal/JensenG18}}, \cite{DBLP:conf/iccps/DasS18, DBLP:conf/icra/OzdemirGKHG19, DBLP:conf/iros/HardouinMMMM20}\longversion{, \cite{DBLP:conf/iros/DattaA21}}) that runs through multiple rounds to cover the entire workspace gradually. 
In each round, the robots explore some unexplored regions using attached sensors, and the CP subsequently finds their subpaths to cover the explored obstacle-free regions not covered so far, a.k.a. \textit{goals}. 

Based on where the CP runs, we can classify it as either \textit{centralized} or \textit{distributed}. 
A centralized CP (e.g., \cite{DBLP:conf/icra/GabrielyR01, DBLP:conf/agents/Yamauchi98, DBLP:conf/icra/BircherKAOS16}, \longversion{\cite{DBLP:conf/atal/RivaA17}, }\cite{DBLP:conf/iros/KleinerBKPM17, DBLP:conf/iccps/DasS18, DBLP:conf/atal/SharmaDK19}\longversion{, \cite{DBLP:conf/atal/SalarisRA20, DBLP:conf/icra/LiRS21}}, \cite{DBLP:conf/iros/MitraS22}) runs at a server and is responsible for finding all the paths alone. 
In contrast, a distributed CP (e.g., \longversion{\cite{butler2001complete}, }\cite{DBLP:conf/icra/HazonMK06}\longversion{, \cite{DBLP:journals/amai/RekleitisNRC08, DBLP:conf/ijcai/JensenG13, DBLP:conf/dars/HungerfordDG14}}, \cite{DBLP:journals/apin/VietDCC15}\longversion{, \cite{DBLP:conf/icra/SiligardiPKMGBS19, DBLP:journals/arobots/SongG20, DBLP:conf/iros/HassanML20}}) runs at every robot as a local instance, and these local instances \textit{collaborate} among themselves to find individual paths. 
The distributed CPs are computationally faster as they deal with only the local state spaces. 
However, they find highly inefficient paths due to the lack of global knowledge about the state space. 
Despite having high computation time, the centralized CPs can provide a shorter coverage 
completion time as they can find highly efficient paths by exploiting the global state space. 
The recently proposed \textit{receding horizon}-based (e.g., \cite{DBLP:conf/icra/BircherKAOS16, DBLP:conf/iccps/DasS18}) online multi-robot centralized coverage planner \FnGAMRCPP~\cite{DBLP:conf/iros/MitraS22} demonstrates this capability, thereby outperforming the state-of-the-art online multi-robot distributed coverage planner $\mathtt{BoB}$~\cite{DBLP:journals/apin/VietDCC15}. 

A major bottleneck of \FnGAMRCPP is that it generates the paths for \textit{all} the robots \textit{synchronously} (i.e., at the same time) in each \textit{horizon} (like \cite{DBLP:conf/iccps/DasS18}), which prevents it from scaling for hundreds of robots in large workspaces. 
Furthermore, \FnGAMRCPP decides the \textit{horizon length} based on the \textit{minimum} path length and \textit{discards} the remaining paths for the robots with longer paths. 
Finding a better goal assignment for those robots in the next horizon is possible, but discarding the already generated paths leads to considerable computational wastage. 
In this paper, we propose an alternative approach where, like \FnGAMRCPP, the CP decides the horizon length based on the minimum path length but \textit{keeps} the remaining paths for the robots with longer paths for traversal in the subsequent horizons. 
Thus, in a horizon, our proposed CP has to synchronously generate the paths for only those robots for which no remaining path is available (called the \textit{participant robots}), hence \textit{on-demand}. 
However, this new on-demand approach brings the additional challenge of computing the new paths under the constraint of the remaining paths for some robots. 
Our CP tackles this challenge soundly. 
Though the proposed approach misses the opportunity to find more optimal paths for the \textit{non-participant robots} in a horizon, the computation load in each horizon decreases significantly, which in turn leads to a faster coverage completion of large workspaces with hundreds of robots, promising \textit{scalability}. 

We formally prove that the proposed CP can achieve \emph{complete coverage}. 
To evaluate its performance, we consider eight large $2$D grid-based benchmark workspaces of varying size and obstacle density, and two types of robots, TurtleBot\longversion{ \cite{key_turtlebot}}, which is a ground robot, and a Quadcopter, which is an aerial robot, for their coverage. 
We vary the number of robots from $128$ to $512$ and choose \textit{mission time} as the comparison metric, which is the time required to attain complete coverage. 
We compare our proposed CP with \FnGAMRCPP and show that it outperforms \FnGAMRCPP consistently in large workspaces involving hundreds of robots. We further demonstrate the practical feasibility of our algorithm through ROS+Gazebo simulations and real experiments. 
\section{Problem}
\label{sec:problem}

\subsection{Preliminaries}
\label{subsec:preliminaries}

Let $\mathbb{R}$ and $\mathbb{N}$ denote the set of real numbers and natural numbers, respectively, and $\mathbb{N}_0$ denote the set $\mathbb{N} \cup \{0\}$. 
Also, for $m \in \mathbb{N}$, we write $[m]$ to denote the set $\{n \in \mathbb{N}\, |\, n \leq m\}$, and $[m]_0$ to denote the set $[m] \cup \{0\}$. 
The size of the countable set $\mathcal{S}$ is denoted by $|\mathcal{S}| \in \mathbb{N}_0$. 
Furthermore, we denote the set $\{0, 1\}$ of Boolean values by $\mathbb{B}$.

\subsubsection{Workspace}
\label{subsubsec:workspace}

We consider an unknown $2$D workspace $W$ represented as a grid of size $X \times Y$, where $X, Y \in \mathbb{N}$. 
Thus, $W$ is represented as a set of \textit{non-overlapping square-shaped grid cells} 
\mbox{$\{(x, y) \, | \, x \in [X] \wedge y \in [Y]\}$}, some of which are \textit{obstacle-free} (denoted by $W_{free}$) and \textit{traversable} by the robots, while the rest are \textit{static obstacle-occupied} (denoted by $W_{obs}$), which are not. 
Note that $W_{free}$ and $W_{obs}$ are not known initially. 
We assume that $W_{free}$ is \textit{strongly connected} and $W_{obs}$ is \textit{fully} occupied with obstacles.

\subsubsection{Robots and their States}
\label{subsubsec:robots}

We employ a team of \mbox{$R \in \mathbb{N}$} \textit{failure-free homogeneous mobile} robots, where each robot fits entirely within a cell. 
We denote the $i (\in [R])$-th robot by $r^i$ and assume that $r^i$ is \textit{location-aware}. 
Let the state of $r^i$ at the $j (\in \mathbb{N}_0)$-th \textit{discrete} time step be $s^i_j$, which is a tuple of its \textit{location} and possibly \textit{orientation} in the workspace. 
We define a function $\mathcal{L}$ that takes a state $s^i_j$ as input and returns the corresponding location as a tuple. 
Initially, the robots get deployed at different obstacle-free cells, comprising \textit{the set of start states} $S = \{s^i_0\ |\ i \in [R] \wedge \mathcal{L}(s^i_0) \in W_{free} \wedge \forall j \in [R] \setminus \{i\}.\ \mathcal{L}(s^j_0) \neq \mathcal{L}(s^i_0)\}$. 
We also assume that each $r^i$ is equipped with four \textit{rangefinders} on all four sides to detect obstacles in the four neighboring cells they are facing.

\subsubsection{Motions and Paths of the robots}
\label{subsubsec:motions}

The robots have a common \textit{set of motion primitives} $M$ to change their states in the next time step. 
It also contains a unique motion primitive $\mathtt{Halt (H)}$ to keep the state of a robot unchanged in the next step. 
Each motion primitive $\mu \in M$ is associated with some cost $\mathtt{cost}(\mu) \in \mathbb{R}$, e.g., distance traversed, energy consumed, etc. 
We assume that all the motion primitives take the same $\tau \in \mathbb{R}$ unit time for execution. 
Initially, the path $\pi^i$ for robot $r^i$ contains its start state $s^i_0$. 
So, the \textit{length} of $\pi^i$, denoted by $\Lambda \in \mathbb{N}_0$, is $0$. 
But, when a \textit{finite sequence} of motion primitives $(\mu_j \in M)_{j \in [\Lambda]}$ of length $\Lambda > 0$ gets applied to $s^i_0$, it results in generating the $\Lambda$-length path $\pi^i$. 
So, $\pi^i$ is a finite sequence $(s^i_j)_{j \in [\Lambda]_0}$ of length $\Lambda + 1$ of the states of $r^i$: 

\begin{center}
    $s^i_{j - 1} \xrightarrow{\mu_{j}} s^i_j$, $\forall {j \in [\Lambda]}$.
\end{center}

Thus, $\pi^i$ has the cost $\mathtt{cost}(\pi^i) = \sum_{j \in [\Lambda]} \mathtt{cost}(\mu_j)$. 
Note that we can make \textit{the set of paths} $\Pi = \{\pi^i | i \in [R]\}$ of all the robots \textit{equal-length} $\Lambda$ by suitably applying $\mathtt{H}$ at their ends. 

\begin{example}
\longversion{For illustration, we consider TurtleBot \cite{key_turtlebot2}. }
A TurtleBot~\shortversion{\cite{key_turtlebot2}} not only \textit{drives} forward to change its location but also \textit{rotates} around its axis to change its orientation. 
So, the state of a TurtleBot is $s^i_j = (x, y, \theta)$, where $(x, y) \in W$ is its location in the workspace and $\theta \in \{\mathtt{East (E), North (N), West (W), South (S)}\}$ is its orientation at that location. 
The set of motion primitives is given by $M = \{\mathtt{Halt (H)}, \mathtt{TurnRight (TR)}, \mathtt{TurnLeft (TL)}, \mathtt{MoveNext (MN)}\}$, where  $\mathtt{TR}$ turns the TurtleBot $90^{\circ}$ \textit{clockwise}, $\mathtt{TL}$ turns the TurtleBot $90^{\circ}$ \textit{counterclockwise}, and $\mathtt{MN}$ \textit{moves} the TurtleBot to the next cell pointed by its orientation $\theta$. 
\end{example}

\subsection{Problem Definition}
\label{subsec:problem_definition}

We now formally define the CPP problem below. 

\begin{problem}[Complete Coverage Path Planning]
Given an unknown workspace $W$, start states $S$ of $R$ robots, and their motion primitives $M$, find paths $\Pi$ of equal-length $\Lambda$ for the robots such that the following two conditions hold:

\textbf{Cond. \romannum{1}:} Each path $\pi^i$ must satisfy the following:
    \begin{enumerate}
        \item $\forall j \in [\Lambda]_0\ \mathcal{L}(s^i_j) \in W_{free}$,\ \ \ \ \ \ \ \ \ \ \ \ \ \ \ \ \ [\textit{Avoid obstacles}]
        \item $\forall j \in [\Lambda]_0\ \forall k \in [R] \setminus \{i\}\\
        \mathcal{L}(s^i_j) \neq \mathcal{L}(s^k_j)$,\ \ \ \ \ \ \ \ \ \ \ \ \ \ \ \ \ [\textit{Avert same cell collisions}]
        \item $\forall j \in [\Lambda]\ \forall k \in [R] \setminus \{i\}\\
        \lnot((\mathcal{L}(s^i_{j-1}) = \mathcal{L}(s^k_j)) \wedge (\mathcal{L}(s^i_j) = \mathcal{L}(s^k_{j-1})))$.\\
        \textcolor{white}{BLANK}\ \ \ \ \ \ \ \ \ \ \ \ \ \ \ \ \ \ \ \ \ \ \ \ \ \ \ \ \ [\textit{Avert head-on collisions}]
    \end{enumerate}

\textbf{Cond. \romannum{2}:} Each obstacle-free cell must get visited by at least one robot, i.e., $\bigcup\limits_{i \in [R]}\ \bigcup\limits_{j \in [\Lambda]_0} \{\mathcal{L}(s^i_j)\} = W_{free}$.
\end{problem}

\section{On-Demand CPP Framework}
\label{sec:asyn_cpp}

This section presents the proposed centralized horizon-based online multi-robot on-demand CPP approach for complete coverage of an unknown workspace whose size and boundary are only known to the CP and the mobile robots. 

Due to the limited range of the fitted rangefinders, each robot gets a \textit{partial} view of the unknown workspace, called the \textit{local view}. 
Initially, all the robots share their initial local views with the CP by sending \textit{requests} for paths. 
The CP then fuses these local views to get the \textit{global view} of the workspace. 
Based on the global view, it attempts to generate collision-free paths for the robots, possibly of different lengths. 
The robots with \textit{non-zero-length} paths are said to be \textit{active}, while the rest are \textit{inactive}. 
Next, the CP determines the horizon length as the minimum path length of the active robots. 
Subsequently, it makes the paths of the active robots of length equal to that horizon length. 
If any active robot's path length exceeds the horizon length, the CP stores the remaining part for future horizons. 
Finally, the CP provides these equal-length paths to respective active robots by sending \textit{responses}. 
As the CP has failed to find paths for the inactive robots in the current horizon, it considers them again in the next horizon. 
However, the active robots follow their received paths synchronously and update their local views accordingly. 
Upon finishing the execution of their current paths, they share their updated local views with the CP again. 
In the next horizon, after updating the global view, the CP attempts to generate paths for the previous horizon's inactive robots and active robots who have completed following their last planned paths. 
In other words, the CP generates paths for only those robots with no remaining path, called the \textit{participants}.
Note that the robots with remaining paths computed in the previous horizon become the \textit{non-participants} (inherently active) in the current horizon. 
Moreover, the CP does not alter the non-participants' remaining paths while generating the participants' paths in the current horizon. 

\begin{figure}[t]
    \centering
    \includegraphics[scale=0.35]{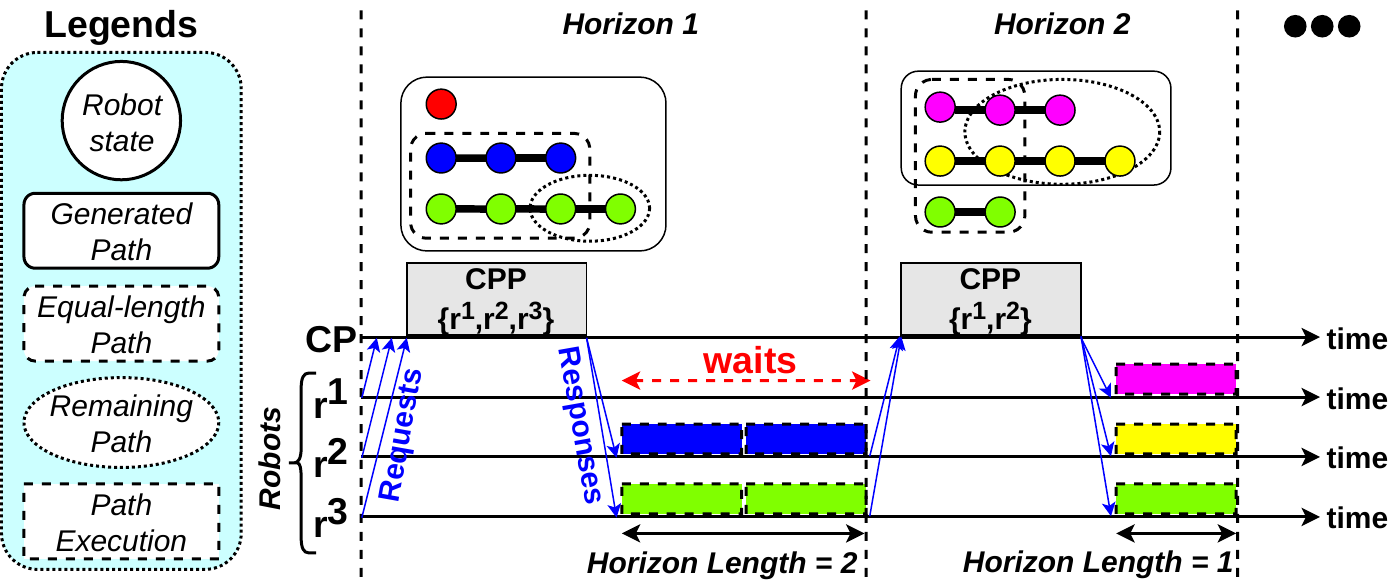}
    \caption{Overview of on-demand CPP}
    \label{fig:asyncpp}
\end{figure}

\begin{example}
\label{ex:update_gv}
In Figure~\ref{fig:asyncpp}, we show a schematic diagram of the on-demand CPP approach for three robots in an arbitrary workspace. 
In horizon $1$, the CP finds paths of lengths $0, 2$, and $3$ for the participants $r^1, r^2$, and $r^3$, respectively. 
Observe that only $r^2$ and $r^3$ are active. 
As the horizon length turns out to be $2$, the CP stores the remaining part of $r^3$'s path. 
Now, $r^2$ and $r^3$ follow their equal-length paths while $r^1$ waits. 
In horizon $2$, however, the CP finds paths only for $r^1$ and $r^2$ (notice the new colors) as non-participant $r^3$ already has its remaining path (notice the same color) found in horizon $1$. 
\end{example}

\subsection{Data structures at CP}
\label{subsec:ds}

Before delving deep into the processes that run at each robot $r^i$ (Algorithm~\ref{algo:robot}) and the CP (Algorithm \ref{algo:ondemcpp}), we define the data structures that the CP uses for computation. 

Let $\Sigma_{rem}$ denote the set $\{\sigma^i_{rem} \, |\, i \in [R]\}$ of the remaining paths of all the robots for the next horizon, where $\sigma^i_{rem}$ denotes the remaining path for $r^i$. 
Initially, none of the robots has any remaining path, i.e., $\sigma^i_{rem}$ is $\mathsf{NULL}$ for each $r^i$. 
We denote the set of indices of the robots participating in the current horizon by $I_{par} \subseteq [R]$ and the number of requests received in the current horizon by $N_{req} \in [R]_0$. 
Also, $N_{act} \in [R]_0$ denotes the number of active robots in the current horizon. 
By $I^{inact}_{par}$, we denote the set of indices of the inactive robots of the previous horizon, and by $S^{inact}$, their start states. 
So, the CP must incorporate the robots $I^{inact}_{par}$ into the planning of the current horizon. 
Finally, $\widetilde{W_g}$ denotes the goals at which the remaining paths end. 
So, $\widetilde{W_g}$ remains reserved for the corresponding robots, and the CP cannot assign $\widetilde{W_g}$ to any other robot hereafter. 
\subsection{Overall On-Demand Coverage Path Planning}
\label{subsec:overall_approach}

In this section, we present the overall on-demand CPP approach in detail. First, we explain Algorithm \ref{algo:robot}, which runs at the robots' end. 
Each robot $r^i$ initializes its local view $W^i = \langle W^i_u, W^i_o, W^i_g, W^i_c \rangle$ (line 1), where $W^i_u, W^i_o, W^i_g,$ and $W^i_c$ are the set of unexplored, obstacle, goal, and covered cells, respectively, due to the partial visibility of the workspace. 
Subsequently, $r^i$ creates a request message $\mathtt{M_{req}[id, state, view]}$ 
(line 2) and sends that to the CP (line 4) to inform about its local view. 
Then, $r^i$ waits for a response message $\mathtt{M_{res}[path]}$ (line 5) from the CP. 
Upon receiving, $r^i$ extracts its path into $\sigma^i$ and starts to follow it (line 6). 
While following, $r^i$ explores newly visible cells and accordingly updates $W^i$. 
Finally, once it reaches the last state $s^i_{|\sigma^i|}$ of $\sigma^i$, it updates its $\mathtt{M_{req}}$ (line 7) to inform the CP about its updated $W^i$. 
The same communication cycle continues between $r^i$ and the CP in a \textbf{while} loop (lines 3-7). 
Here, we assume that the \textit{communication medium} is \textit{reliable} and each communication takes negligible time. 

\begin{algorithm}[t]
    \small
    \DontPrintSemicolon
    
    \caption{$\mathtt{Robot}(i, s^i_0, X, Y)$}
    \label{algo:robot}
    
    $W^i \gets \mathtt{init\_localview}(s^i_0, X, Y)$\;
    $\mathtt{M_{req}[id, state, view]} \gets [i, s^i_0, W^i]$       \tcp*{Create $\mathtt{M_{req}}$}
    \While{true}{
        $\mathtt{send\_localview(M_{req})}$\;
        $\sigma^i \gets$ $\mathtt{receive\_path(M_{res})}$\;
        $W^i \gets \mathtt{follow\_path(\sigma^i)}$\;
        $\mathtt{M_{req}[state, view]} \gets [s^i_{|\sigma^i|}, W^i]$       \tcp*{Update $\mathtt{M_{req}}$}
    }
\end{algorithm}

\shortversion
{
\begin{algorithm}[t!]
    \small
    \DontPrintSemicolon
    
    \caption{$\mathtt{OnDemCPP}(R, X, Y)$}
    \label{algo:ondemcpp}
    

    $\Sigma_{rem} \gets \{\mathsf{NULL}\ |\ i \in [R]\}$      \tcp*{Init.}
    $I_{par} \gets \emptyset$, $S \gets \emptyset$, $I^{inact}_{par} \gets \emptyset$, $S^{inact} \gets \emptyset$, $W \gets \emptyset$, $\widetilde{W_g} \gets \emptyset$, $\Pi \gets \emptyset$\;
    $N_{req} \gets 0$       \tcp*{Number of requests}
    $N_{act} \gets R$       \tcp*{Number of active robots}    
    \BlankLine
    \Srv{\FRcvLV{$\mathtt{M_{req}}$}}{
        $i \gets \mathtt{M_{req}.id}$       \tcp*{Robot ID}
        
        \If{$\sigma^i_{rem} = \mathsf{NULL}$}{
            $I_{par} \gets I_{par} \cup \{i\}$      \tcp*{$r^i$ is a participant}
            $S \gets S \cup \{\mathtt{M_{req}.state}\}$\;
        }
        
        $\FnUpdateGlobalview(\mathtt{M_{req}.view})$\;
        $N_{req} \gets N_{req} + 1$\;

        \If{$(N_{req} = N_{act})$}{
            $I_{par} \gets I_{par} \cup I^{inact}_{par}$     \tcp*{Formerly inactive}
            $S \gets S \cup S^{inact}$\;
            $\Sigma \gets$ \FnOnDemCPPHor()      \tcp*{Eq.-length paths}
            \FnSndPathsToActRobs($\Sigma$)\;
            $I_{par} \gets \emptyset$, $S \gets \emptyset$      \tcp*{Reinit.}
            $N_{req} \gets 0$\;
        }
    }
    \longversion
    {
    \Fn{\FnUpdateGlobalview{$\mathtt{M_{req}.view}$}}{
        $W^i \gets \mathtt{M_{req}.view}$       \tcp*{Local view of $r^i$}
        \For{$(x, y) \in W^i_c$}{
            \If{$(x, y) \notin W_c$}{
                $W_c \gets W_c \cup \{(x, y)\}$     \tcp*{Covered}
            }
        }
        \For{$(x, y) \in W^i_g$}{
            \If{$(x, y) \in W_u$}{
                $W_g \gets W_g \cup \{(x, y)\}$     \tcp*{Goal}
            }
        }
        $W_g \gets W_g \setminus W_c$\;
        \For{$(x, y) \in W^i_o$}{
            \If{$(x, y) \in W_u$}{
                $W_o \gets W_o \cup \{(x, y)\}$     \tcp*{Obstacle}
            }
        }
        $W_o \gets W_o \setminus W_c$\;
        $W_u \gets ([X] \times [Y]) \setminus (W_c \cup W_g \cup W_o)$      \tcp*{Unexplored}
    }
    }
    \BlankLine
    \Fn{\FnSndPathsToActRobs{$\Sigma$}}{
        \ParFor{$i \in [R] \setminus I^{inact}_{par}$}{
              \ \ \ \ $\mathtt{M_{res}[path]} \gets \sigma^i$      \tcp*{Create $\mathtt{M_{res}}$}
              \ \ \ \ \ \FSndPath($\mathtt{M_{res}, i}$)\;
        }
    }
    \BlankLine
    \Fn{\FnOnDemCPPHor{}}{
        $\Sigma' \gets \emptyset$       \tcp*{Participants' coll.-free paths}
        
        \If{$(W_g\setminus\widetilde{W_g})\neq\emptyset$}{
            $\Sigma' \gets \FnCPPForPar(W, \widetilde{W_g}, I_{par}, S, M, R, \Sigma_{rem})$\;
        }
        \ElseIf{$I_{par} \neq [R]$}{
            \For{$i \in I_{par}$}{
                $\sigma'^i \gets s^i_0$\;
                $\Sigma' \gets \Sigma' \cup \{\sigma'^i\}$
            }
        }
        \Else{
            \Exit()     \tcp*{Coverage Complete !!!}
        }

        $\Sigma'_{all} \gets \mathtt{combine\_paths}(I_{par}, \Sigma', R, \Sigma_{rem})$\;
        $\lambda \gets \mathtt{determine\_horizon\_length}(\Sigma'_{all})$\;
        $\Sigma \gets$ \FGetEqLenPaths($\Sigma'_{all}, \lambda$)\;
        \Return $\Sigma$\;
    }
    \BlankLine
    \Fn{\FGetEqLenPaths{$\Sigma'_{all}, \lambda$}}{
        $\Sigma \gets \emptyset$, $\Sigma_{rem} \gets \emptyset$, $\widetilde{W_g} \gets \emptyset$, $I^{inact}_{par} \gets \emptyset$, $S^{inact} \gets \emptyset$\;
        $N_{act} \gets 0$       \tcp*{Reinit.}
    
        \For{$i \in [R]$}{
            \If{$|\sigma'^i_{all}| > 0$}{
                $N_{act} \gets N_{act} + 1$     \tcp*{$r^i$ is active}
                $\langle \sigma^i, \sigma^i_{rem} \rangle \gets \mathtt{split\_path}(\sigma'^i_{all}, \lambda)$\;
                $\Sigma \gets \Sigma \cup \{\sigma^i\}$\;

                \If{$|\sigma^i_{rem}| > 0$}{
                    $\widetilde{W_g} \gets \widetilde{W_g} \cup \{\mathcal{L}(s^i_{|\sigma'^i_{all}|})\}$       \tcp*{Reserve}
                }
                \Else{
                    $\sigma^i_{rem} \gets \mathsf{NULL}$\;
                }
                
                $\pi^i \gets \pi^i : \sigma^i$      \tcp*{Concat.}
            }
            \Else{
                $I^{inact}_{par} \gets I^{inact}_{par} \cup \{i\}$      \tcp*{$r^i$ is inactive}
                $S^{inact} \gets S^{inact} \cup \{s^i_0\}$\;
                $\sigma^i_{rem} \gets \mathsf{NULL}$\;
                $\pi^i \gets \pi^i : \mathtt{dummy\_path}(s^i_0, \lambda)$      \tcp*{Concat.}
            }
            
            $\Sigma_{rem} \gets \Sigma_{rem} \cup \{\sigma^i_{rem}\}$\;
        }

        \Return $\Sigma$
    }
\end{algorithm}
}
\longversion
{
\begin{algorithm*}[t!]
    \small
    \DontPrintSemicolon
    
    \caption{$\mathtt{OnDemCPP}(R, X, Y)$}
    \label{algo:ondemcpp}
    
    \begin{multicols}{2}

    $\Sigma_{rem} \gets \{\mathsf{NULL}\ |\ i \in [R]\}$      \tcp*{Init.}
    $I_{par} \gets \emptyset$, $S \gets \emptyset$, $I^{inact}_{par} \gets \emptyset$, $S^{inact} \gets \emptyset$, $W \gets \emptyset$, $\widetilde{W_g} \gets \emptyset$, $\Pi \gets \emptyset$\;
    $N_{req} \gets 0$       \tcp*{Number of requests}
    $N_{act} \gets R$       \tcp*{Number of active robots}    
    \BlankLine
    \Srv{\FRcvLV{$\mathtt{M_{req}}$}}{
        $i \gets \mathtt{M_{req}.id}$       \tcp*{Robot ID}
        
        \If{$\sigma^i_{rem} = \mathsf{NULL}$}{
            $I_{par} \gets I_{par} \cup \{i\}$      \tcp*{$r^i$ is a participant}
            $S \gets S \cup \{\mathtt{M_{req}.state}\}$\;
        }
        
        $\FnUpdateGlobalview(\mathtt{M_{req}.view})$\;
        $N_{req} \gets N_{req} + 1$\;

        \If{$(N_{req} = N_{act})$}{
            $I_{par} \gets I_{par} \cup I^{inact}_{par}$     \tcp*{Formerly inactive}
            $S \gets S \cup S^{inact}$\;
            $\Sigma \gets$ \FnOnDemCPPHor()      \tcp*{Equal-length paths}
            \FnSndPathsToActRobs($\Sigma$)\;
            $I_{par} \gets \emptyset$, $S \gets \emptyset$      \tcp*{Reinit.}
            $N_{req} \gets 0$\;
        }
    }
    \BlankLine
    \Fn{\FnUpdateGlobalview{$\mathtt{M_{req}.view}$}}{
        $W^i \gets \mathtt{M_{req}.view}$       \tcp*{Local view of $r^i$}
        \For{$(x, y) \in W^i_c$}{
            \If{$(x, y) \notin W_c$}{
                $W_c \gets W_c \cup \{(x, y)\}$     \tcp*{Covered}
            }
        }
        \For{$(x, y) \in W^i_g$}{
            \If{$(x, y) \in W_u$}{
                $W_g \gets W_g \cup \{(x, y)\}$     \tcp*{Goal}
            }
        }
        $W_g \gets W_g \setminus W_c$\;
        \For{$(x, y) \in W^i_o$}{
            \If{$(x, y) \in W_u$}{
                $W_o \gets W_o \cup \{(x, y)\}$     \tcp*{Obstacle}
            }
        }
        $W_o \gets W_o \setminus W_c$\;
        $W_u \gets ([X] \times [Y]) \setminus (W_c \cup W_g \cup W_o)$      \tcp*{Unexplored}
    }
    \BlankLine
    \Fn{\FnSndPathsToActRobs{$\Sigma$}}{
        \ParFor{$i \in [R] \setminus I^{inact}_{par}$}{
              \ \ \ \ $\mathtt{M_{res}[path]} \gets \sigma^i$      \tcp*{Create $\mathtt{M_{res}}$}
              \ \ \ \ \ \FSndPath($\mathtt{M_{res}, i}$)\;
        }
    }
    \BlankLine
    \Fn{\FnOnDemCPPHor{}}{
        $\Sigma' \gets \emptyset$       \tcp*{Participants' collision-free paths}
        
        \If{$(W_g\setminus\widetilde{W_g})\neq\emptyset$}{
            $\Sigma' \gets \FnCPPForPar(W, \widetilde{W_g}, I_{par}, S, M, R, \Sigma_{rem})$\;
        }
        \ElseIf{$I_{par} \neq [R]$}{
            \For{$i \in I_{par}$}{
                $\sigma'^i \gets s^i_0$\;
                $\Sigma' \gets \Sigma' \cup \{\sigma'^i\}$
            }
        }
        \Else{
            \Exit()     \tcp*{Coverage Complete !!!}
        }

        $\Sigma'_{all} \gets \mathtt{combine\_paths}(I_{par}, \Sigma', R, \Sigma_{rem})$\;
        $\lambda \gets \mathtt{determine\_horizon\_length}(\Sigma'_{all})$\;
        $\Sigma \gets$ \FGetEqLenPaths($\Sigma'_{all}, \lambda$)\;
        \Return $\Sigma$\;
    }
    \BlankLine
    \Fn{\FGetEqLenPaths{$\Sigma'_{all}, \lambda$}}{
        $\Sigma \gets \emptyset$, $\Sigma_{rem} \gets \emptyset$, $\widetilde{W_g} \gets \emptyset$, $I^{inact}_{par} \gets \emptyset$, $S^{inact} \gets \emptyset$\;
        $N_{act} \gets 0$       \tcp*{Reinit.}
    
        \For{$i \in [R]$}{
            \If{$|\sigma'^i_{all}| > 0$}{
                $N_{act} \gets N_{act} + 1$     \tcp*{$r^i$ is active}
                $\langle \sigma^i, \sigma^i_{rem} \rangle \gets \mathtt{split\_path}(\sigma'^i_{all}, \lambda)$\;
                $\Sigma \gets \Sigma \cup \{\sigma^i\}$\;

                \If{$|\sigma^i_{rem}| > 0$}{
                    $\widetilde{W_g} \gets \widetilde{W_g} \cup \{\mathcal{L}(s^i_{|\sigma'^i_{all}|})\}$       \tcp*{Reserve}
                }
                \Else{
                    $\sigma^i_{rem} \gets \mathsf{NULL}$\;
                }
                
                $\pi^i \gets \pi^i : \sigma^i$      \tcp*{Concat.}
            }
            \Else{
                $I^{inact}_{par} \gets I^{inact}_{par} \cup \{i\}$      \tcp*{$r^i$ is inactive}
                $S^{inact} \gets S^{inact} \cup \{s^i_0\}$\;
                $\sigma^i_{rem} \gets \mathsf{NULL}$\;
                $\pi^i \gets \pi^i : \mathtt{dummy\_path}(s^i_0, \lambda)$      \tcp*{Concat.}
            }
            
            $\Sigma_{rem} \gets \Sigma_{rem} \cup \{\sigma^i_{rem}\}$\;
        }

        \Return $\Sigma$
    }
    \end{multicols}
\end{algorithm*}
}

We now describe Algorithm~\ref{algo:ondemcpp} in detail, which runs at the CP's end. 
After initializing the required data structures (lines~1-4), the CP starts a \textit{service} (lines~5-18) to receive requests from the robots in a mutually exclusive manner. 
Upon receiving a $\mathtt{M_{req}}$ from $r^i$ (line 6), the CP checks whether $r^i$ has any remaining path $\sigma^i_{rem}$ or not (line 7). 
If $r^i$ has, it does not participate in the planning of the current horizon. 
Otherwise (lines~7-9), $r^i$ participates, and so the CP adds $r^i$'s ID $\{i\}$ into $I_{par}$, 
and current state into $S$. 
In either case, the CP updates the global view of the workspace $W = \langle W_u, W_o, W_g, W_c \rangle$ \textit{incrementally} (line 10). 
\shortversion{In the longer version of this paper \cite{DBLP:conf/arxiv/MitraS23}, please refer to lines 19-32 of Algorithm \ref{algo:ondemcpp} for the definition of the function \FnUpdateGlobalview, the third paragraph of Section \ref{subsec:overall_approach} for its description, and see Example 3 for an illustration.}

\longversion
{
In \FnUpdateGlobalview (lines 19-32), first, the CP extracts the local view $W^i$ of $r^i$ (line 20). 
Then, it marks all the cells visited by $r^i$ as covered in the global view $W$ (lines 21-23). 
Next, the CP marks any unexplored cell in $W$ as a goal if $r^i$ classifies that cell as a goal (lines 24-26). 
Similarly, the CP marks any unexplored cell in $W$ as an obstacle if $r^i$ classifies that cell as an obstacle (lines 28-30). 
As $r^i$ visits some previously found goals, making them covered, the CP filters out those covered cells from $W_g$ (line 27). 
Also, as a robot cannot distinguish between an obstacle and another robot using the fitted 
rangefinders, the CP filters out those covered cells from $W_o$ that previously posed as obstacles (line 31). 
Finally, the CP marks the remaining cells in $W$ as unexplored (line 32). 

\begin{figure}
    \centering
    \includegraphics[scale=0.32]{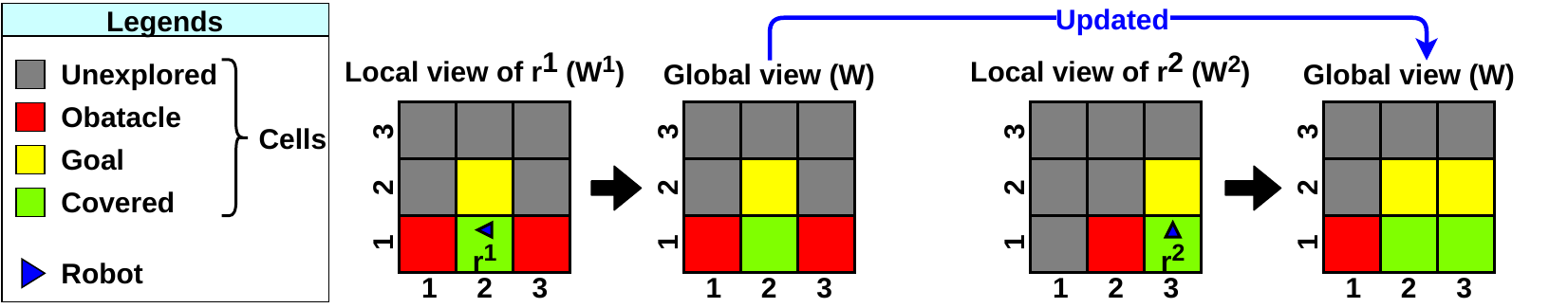}
    \caption{Incremental updation of the global view $W$}
    \label{fig:update_gv}
\end{figure}

\begin{example}
\label{ex:update_gv}
In Figure \ref{fig:update_gv}, we show an example, where two robots, namely $r^1$ and $r^2$ with initial states $(2, 1, \mathtt{W})$ and $(3, 1, \mathtt{N})$, respectively, get deployed in a $3 \times 3$ workspace. 
Notice that $r^2$ appears as an obstacle to $r^1$ in its local view $W^1$. 
Likewise, $r^1$ appears as an obstacle to $r^2$ in its local view $W^2$. 
Without loss of generality, we assume the CP receives requests from $r^1$ first and then from $r^2$ and incrementally updates the global view $W$ accordingly. 
\end{example}

\begin{figure*}[t]
    \centering
    \includegraphics[scale=0.275]{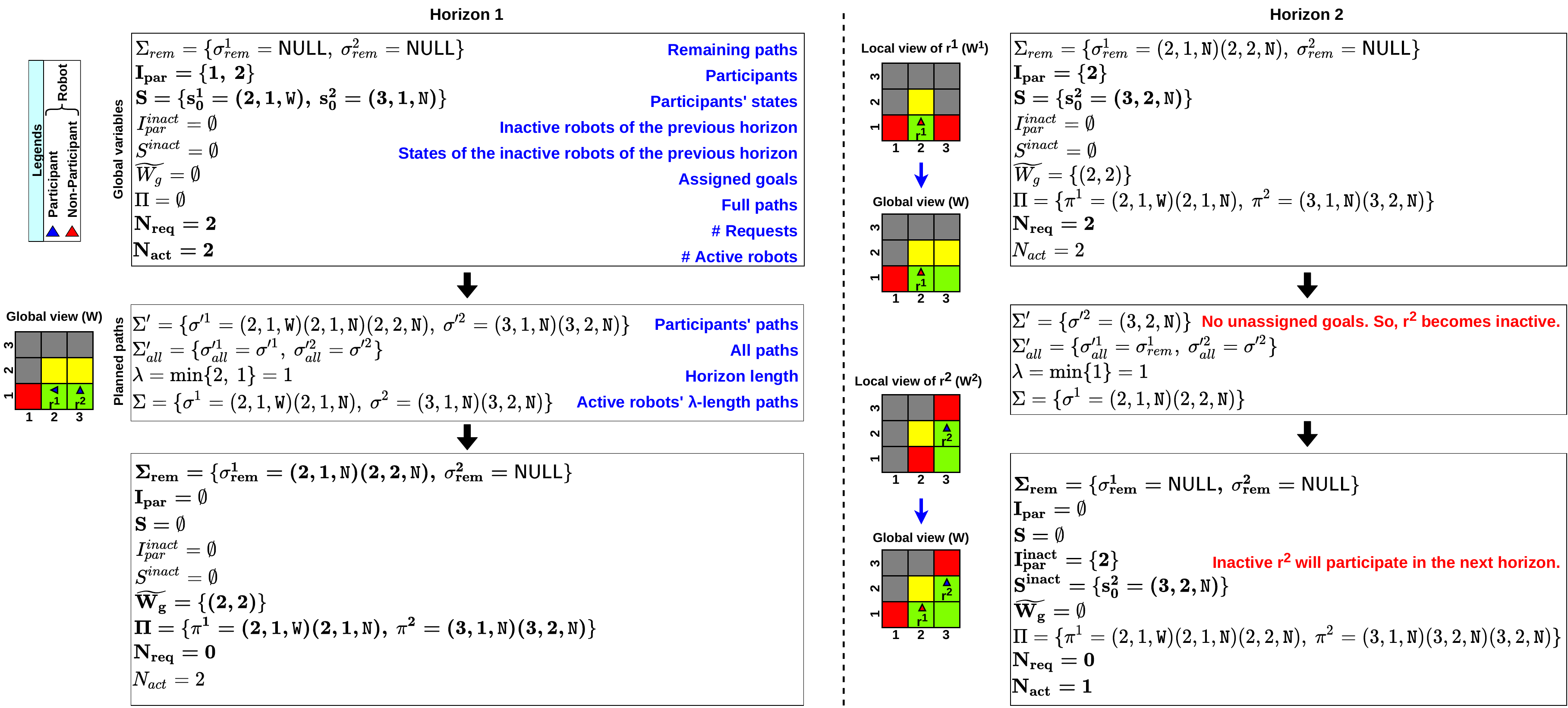}      
    \caption{Skip replanning the participants in the current horizon}
    \label{fig:skip_replan_par}
\end{figure*}
}

Once the CP receives all the requests from the active robots of the previous horizon (lines 11-12), it makes the inactive robots of the previous horizon $I^{inact}_{par}$ participants (lines 13-14). 
Then, it invokes \FnOnDemCPPHor (line 15), which does the path planning for the current horizon (we describe in Section \ref{subsec:cpp_horizon}). 
\FnOnDemCPPHor updates $I^{inact}_{par}$, which now contains the inactive robots of the current horizon, and returns the equal-length paths $\Sigma$ for the active robots $[R] \setminus I^{inact}_{par}$ of the current horizon. 
So, the CP simultaneously sends paths to those active robots through response messages $\mathtt{M_{res}}$ (lines 16 and \shortversion{19-22}\longversion{33-36}). 
Finally, it reinitializes some data structures for the next horizon (lines 17-18). 
\subsection{Coverage Path Planning in the Current Horizon}
\label{subsec:cpp_horizon}

Here, we describe the rest of Algorithm \ref{algo:ondemcpp}. 
Recall that the CP has already assigned the goals $\widetilde{W_g}$ to the non-participants in some past horizons. 
In \FnOnDemCPPHor (lines \shortversion{23-36}\longversion{37-50}), first, the CP checks whether there are \textit{unassigned} goals, i.e., $W_g \setminus \widetilde{W_g}$ left in the workspace (line \shortversion{25}\longversion{39}). 
If so, it attempts to generate the collision-free paths $\Sigma' = \{\sigma'^i\ |\ i \in I_{par}\}$ for the participants $I_{par}$ to visit some unassigned goals while keeping the remaining paths $\Sigma_{rem}$ of the non-participants $[R] \setminus I_{par}$ (hereafter denoted by $\overline{I_{par}}$) \textbf{intact} (line \shortversion{26}\longversion{40}). 
We defer the description of \FnCPPForPar to Section~\ref{subsec:cpp_for_par}. 
Otherwise, the CP cannot plan for the participants. 
Thereby, it \textit{skips} replanning the participants in the current horizon. 
Still, if there are non-participants (line \shortversion{27}\longversion{41}), they can proceed toward their assigned goals in the current horizon while the participants remain inactive. 
To signify this, the CP assigns the current states of the participants to their paths $\Sigma'$ (lines \shortversion{28-30}\longversion{42-44}). 
\shortversion{Please see Example 4 in \cite{DBLP:conf/arxiv/MitraS23}, where we show such a scenario. }
Note that when both the criteria, viz all the robots are participants, and there are no unassigned goals get fulfilled (lines \shortversion{31-32}\longversion{45-46}), it means complete coverage (established in Theorem \ref{theorem:asyncpp_complete_coverage} in Section \ref{sec:theoretical_analysis}). 
Next, the CP combines $\Sigma'$ with $\Sigma_{rem}$ into $\Sigma'_{all} = \{\sigma'^i_{all} | i \in [R]\}$, where 
$$
\sigma'^i_{all}=
    \begin{cases}
        \sigma'^i, & \text{if $i \in I_{par}$},\\
        \sigma^i_{rem}, & \text{otherwise}.
    \end{cases}
$$
Thus, $\sigma'^i_{all}$ contains the collision-free path $\sigma'^i$ if $r^i$ is a participant, otherwise contains the remaining path $\sigma^i_{rem}$ of the non-participant $r^i$. 
Notice that $\Sigma'_{all}$ contains the collision-free paths for all the robots (line \shortversion{33}\longversion{47}). 
A robot $r^i$ is said to be active in the current horizon if its path length is non-zero, i.e., $|\sigma'^i_{all}| > 0$. 
Otherwise, $r^i$ is said to be inactive. 
In the penultimate step, the CP determines the length of the current horizon (line \shortversion{34}\longversion{48}) as the minimum path length of the active robots, i.e., $\lambda = \min_{i \in [R]} \{|\sigma'^i_{all}| > 0\}$. 
Finally, it returns $\lambda$-length paths $\Sigma$ only for the active robots (lines \shortversion{35-36}\longversion{49-50}). 
This way, the CP ensures that at least one active robot (participant or non-participant) reaches its goal in the current horizon. 

In $\mathtt{get\_equal\_length\_paths}$ (lines \shortversion{37-56}\longversion{51-70}), after re-initialization of some of the data structures (lines \shortversion{38-39}\longversion{52-53}), the CP examines the paths $\Sigma'_{all}$ in a \textbf{for} loop (lines \shortversion{40-55}\longversion{54-69}) to determine which robots are active (lines \shortversion{41-49}\longversion{55-63}) and which are not (lines \shortversion{50-54}\longversion{64-68}). 
If $r^i$ is active, first, the CP increments $N_{act}$ by $1$ (line \shortversion{42}\longversion{56}). 
Then, the CP splits its path $\sigma'^i_{all}$ into two parts $\sigma^i$ and $\sigma^i_{rem}$ of lengths $\lambda$ and $|\sigma'^i_{all}| - \lambda$, respectively (line \shortversion{43}\longversion{57}). 
Active $r^i$ traverses the former part $\sigma^i$, containing $s^i_0 \ldots s^i_{\lambda}$, in the current horizon (line \shortversion{44}\longversion{58}) and the later part (if any), containing $s^i_{\lambda} \ldots s^i_{|\sigma'^i_{all}|}$, in the future horizons (line \shortversion{55}\longversion{69}). 
If it has the remaining path $\sigma^i_{rem}$, its goal location $\mathcal{L}(s^i_{|\sigma'^i_{all}|})$ gets added to $\widetilde{W_g}$ (lines \shortversion{45-46}\longversion{59-60}), signifying that this goal remains reserved for $r^i$. 
Otherwise, $\sigma^i_{rem}$ is set to $\mathsf{NULL}$ (lines \shortversion{47-48}\longversion{61-62} and \shortversion{55}\longversion{69}), indicating the absence of the remaining path for $r^i$, thereby enabling $r^i$ to become a participant in the next horizon. 
In contrast, if $r^i$ is inactive (lines \shortversion{50-54}\longversion{64-68}), the CP adds $r^i$'s ID $\{i\}$ and its start state $s^i_0$ into $I^{inact}_{par}$ and $S^{inact}$, respectively (lines \shortversion{51-52}\longversion{65-66}), as it would reattempt to find $r^i$'s path in the next horizon. 
The CP also sets $\sigma^i_{rem}$ to $\mathsf{NULL}$ (lines \shortversion{53}\longversion{67} and \shortversion{55}\longversion{69}). 
Note that the CP also generates the full path for the active robots (line \shortversion{49}\longversion{63}) and the inactive robots (line \shortversion{54}\longversion{68}), where $\mathtt{dummy\_paths}$ generates a path of length $\lambda$ containing only the current state. 
The CP sends no path to an inactive robot in the current horizon (lines \shortversion{20-22}\longversion{34-36}). 
So, the active robots of the current horizon send requests in the next horizon (lines \shortversion{42}\longversion{56} and 11-12). 

\longversion
{
\begin{example}
\label{ex:skip_plan_par}
In Figure \ref{fig:skip_replan_par}, we show an example (continuation of Example \ref{ex:update_gv}) where the CP skips planning for a participant due to the unavailability of an unassigned goal. 
In horizon $1$, both the robots $r^1$ and $r^2$ participate, and the CP finds their collision-free paths $\sigma'^1$ and $\sigma'^2$ of lengths $2$ and $1$, respectively. 
Note that both robots are active. 
As the horizon length ($\lambda$) is found to be $1$, the CP reserves $r^1$'s goal into $\widetilde{W_g}$, and sets its remaining path $\sigma^1_{rem}$ accordingly. 
Each robot follows its $1$-length path and sends the updated local view to the CP. 
In horizon $2$, though the CP receives $2$ requests, only $r^2$ participates. 
As no unassigned goal exists (i.e., $W_g \setminus \widetilde{W_g} = \emptyset$), the CP fails to find any path for $r^2$. 
So, $r^2$ becomes inactive, and the CP adds its ID into $I^{inact}_{par}$ and state into $S^{inact}$ as it would reattempt in horizon $3$. 
Now, only $r^1$ (non-participant) is active. 
As $\lambda$ is found to be $1$, $r^1$ visits its reserved goal, which the CP removes from $\widetilde{W_g}$ and resets $\sigma^1_{rem}$. 
Notice that in horizon $3$, the CP receives only $1$ request (from $r^1$ only), but both the robots participate. 
\end{example}
}
\shortversion
{
\subsection{Coverage Path Planning for the Participants}
\label{subsec:cpp_for_par}

At last, we give the outline of \FnCPPForPar (Algorithm \ref{algo:cpp_for_par}), which attempts to generate collision-free paths only for the participants without modifying the remaining paths of the non-participants. 
The in-depth explanation is in \cite{DBLP:conf/arxiv/MitraS23}. 
It is a modified version of Algorithms 2 and 3 of \cite{DBLP:conf/iros/MitraS22} combined, which uses the idea of \textit{Goal Assignment-based Prioritized Planning} to generate collision-free paths for all the robots in each horizon. 
Put differently, all the robots participate in each horizon of \cite{DBLP:conf/iros/MitraS22}. 

\begin{algorithm}[t]
    \small
    \DontPrintSemicolon
    
    \caption{$\mathtt{CPPForPar}(W, \widetilde{W_g}, I_{par}, S, M, R, \Sigma_{rem})$}
    \label{algo:cpp_for_par}
    
    \KwResult{Collision-free paths for the participants $(\Sigma')$}
    
    $\langle \Gamma,\ \Phi \rangle \gets \mathtt{COPForPar}(W, \widetilde{W_g}, I_{par}, S, M)$\;
    $\Sigma' \gets \mathtt{CFPForPar}(\Gamma, \Phi, I_{par}, R, \Sigma_{rem})$\;
\end{algorithm}

\smallskip
\subsubsection{Sum-of-Costs-optimal goal assignment and corresponding paths}

Let $R^*$ and $G^*$ be the number of participants and unassigned goals in the current horizon, respectively. 
So, $R^* = |I_{par}| = |S|$ and $G^* = |W_g \setminus \widetilde{W_g}|$. 
The CP uses the Hungarian Algorithm \cite{kuhn1955hungarian} to assign the participants $I_{par}$ to the unassigned goals having IDs [$G^*$], providing which participant would go to which goal (line 1). 
Formally, the assignment array is $\Gamma: I_{par} \rightarrow [G^*] \cup \{\mathsf{NULL}\}$. 
For a participant $r^i$, where $i \in I_{par}$, its assigned goal $\Gamma[i]$ contains $\mathsf{NULL}$ if the CP fails to assign any goal to $r^i$, e.g., when $R^* > G^*$. 
Such a participant is said to be \emph{inactive}, whose corresponding optimal path $\varphi^i$ only contains its current state $s^i_0$. 
Otherwise, the participant is said to be \emph{active}, whose corresponding optimal path $\varphi^i$ leads $r^i$ from its current state $s^i_0$ to the assigned goal $\Gamma[i]$ using motions $M$. 
Such an optimal path $\varphi^i$ passes through the cells $W_g \cup W_c$ but avoids the cells $W_u \cup W_o$. 
We denote the resultant optimal paths by $\Phi = \{\varphi^i\ |\ i \in I_{par}\}$. 
Keep in mind that the non-participants are inherently active because $\sigma^j_{rem} \neq \mathsf{NULL},\ \forall j \in \overline{I_{par}}$. 

\smallskip
\subsubsection{Collision-free paths}
\label{subsubsec:cfp_for_par}

The participants' optimal paths $\Phi$ are not necessarily \textit{inter-robot collision-free}. 
A participant $r^i$ may collide with either another participant $r^j$ or a non-participant $r^k$ having remaining path $\sigma^k_{rem}$. 
Though the CP cannot change the remaining paths $\Sigma_{rem}$ of the non-participants, it can change the paths $\Phi$ for the participants. 
To get rid of any collisions, the CP employs a two-step procedure. 
First, it \textit{prioritizes} the participants \textit{dynamically} based on their movement constraints in $\Phi$. 
For example, if a participant $r^i$'s start location is on the path $\varphi^j$ of another participant $r^j$, then $r^i$ must leave its start location before $r^j$ reaches there. 
Similarly, if $r^i$'s goal is on the path $\varphi^j$ of $r^j$, then $r^j$ must get past that goal before $r^i$ reaches there. 
The non-participants have higher priorities than the participants because the CP cannot change their remaining paths. 
Finally, the CP \textit{offsets} the participants' paths $\Phi$ in the order of their priority to ensure that a participant does not collide with its higher priority robots (some other participants and the non-participants). 
However, if a collision between a participant and a non-participant becomes inevitable (Example \shortversion{5 in \cite{DBLP:conf/arxiv/MitraS23}}\longversion{\ref{ex:sto}} shows such a scenario), the CP inactivates the participant and returns to the first step to recompute the priorities. 
Otherwise, the CP returns the participants' collision-free paths $\Sigma'$. 
Note that all the participants may get inactivated in the worst case during offsetting due to the non-participants. 


\longversion
{
\begin{figure}
    \centering
    \includegraphics[scale=0.25]{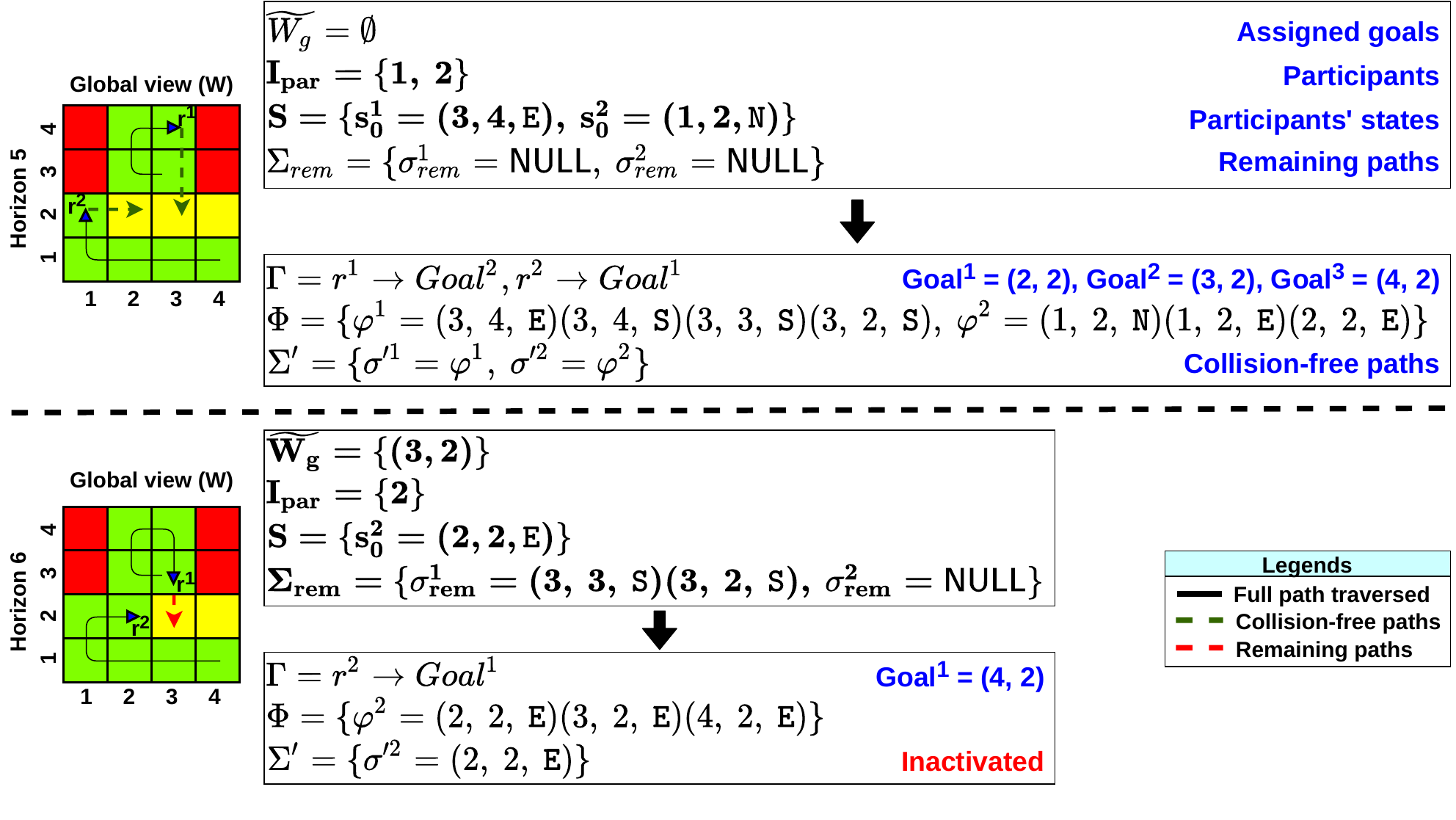}
    \caption{Inactivation of a participant during offsetting}
    \label{fig:sto}
\end{figure}

\begin{example}
\label{ex:sto}
In Figure \ref{fig:sto}, we show an extreme case of replanning where the CP inactivates a participant to avoid collision with a non-participant. 
Notice that both the robots $r^1$ and $r^2$ have traversed their full paths of length five till horizon $4$. 
In horizon $5$, both participate because none has any remaining path. 
The CP assigns the unassigned goal IDs $2$ and $1$ to $r^1$ and $r^2$, respectively, and subsequently finds their collision-free paths $\Sigma'$. 
As the horizon length ($\lambda$) is found to be $2$, the CP reserves the goal $(3,\ 2)$ for $r^1$ in $\widetilde{W_g}$ and sets its remaining path $\sigma^1_{rem}$ accordingly. 
Therefore, only $r^2$ participates in horizon $6$. 
The CP finds $r^2$'s optimal path $\varphi^2$ leading to the only unassigned goal $(4,\ 2)$. 
Though the prioritization succeeds trivially, the offsetting fails because the non-participant $r^1$ would reach its goal $(3,\ 2)$ before the participant $r^2$ gets past that goal. 
So, the CP inactivates $r^2$. 
In this horizon, only $r^1$ (non-participant) remains active with $\lambda = 1$. 
Observe that both will participate in the next horizon. 
\end{example}
}
}
\longversion
{
\subsection{Coverage Path Planning for the Participants}
\label{subsec:cpp_for_par}

In detail, we now present \FnCPPForPar (Algorithm~\ref{algo:cpp_for_participants}), which in every invocation, generates the collision-free paths only for the participants without altering the remaining paths for the non-participants. 
It is a modified version of Algorithms 2 and 3 of \cite{DBLP:conf/iros/MitraS22} combined, which uses the idea of \textit{Goal Assignment-based Prioritized Planning} to generate collision-free paths for all the robots. 
Let $R^*$ and $G^*$ be the number of participants and unassigned goals in the current horizon. 
So, $R^* = |I_{par}| = |S|$ and $G^* = |W_g \setminus \widetilde{W_g}|$. 
The problem of optimally visiting $G^*$ unassigned goals with $R^*$ participants is a \textit{Multiple Traveling Salesman Problem}~\cite{DBLP:conf/amcc/OberlinRD09}, which is \textit{NP-hard}. 
Hence, it is computationally costly when $R^*$, $G^*$, or both are large. 
So, the CP generates the participants' paths in two steps without guaranteeing optimality. 
First, it assigns the participants to the unassigned goals in a cost-optimal way and finds their optimal paths (line 1 and Algorithm \ref{algo:cop_for_par}). 
Then, if needed, it makes those optimal paths collision-free by adjusting them using a greedy method (line 2 and Algorithm \ref{algo:cfp_for_par}), thereby may lose optimality. 

\begin{algorithm}[t]
    \small
    \DontPrintSemicolon
    
    \caption{$\mathtt{CPPForPar}(W, \widetilde{W_g}, I_{par}, S, M, R, \Sigma_{rem})$}
    \label{algo:cpp_for_participants}
    
    \KwResult{Collision-free paths for the participants $(\Sigma')$}
    
    $\langle \Gamma,\ \Phi \rangle \gets \FnCOPForPar(W, \widetilde{W_g}, I_{par}, S, M)$\;
    $\Sigma' \gets \FnCFPForPar(\Gamma, \Phi, I_{par}, R, \Sigma_{rem})$\;
\end{algorithm}

\begin{algorithm}[t]
    \small
    \DontPrintSemicolon
    
    \caption{$\mathtt{COPForPar}(W, \widetilde{W_g}, I_{par}, S, M)$}
    \label{algo:cop_for_par}
    
    \KwResult{Opt. assignment ($\Gamma$) and corresponding paths $(\Phi)$}
    
    $\langle \Delta, L_{\Delta} \rangle \gets \mathtt{compute\_optimal\_costs}(W, \widetilde{W_g}, I_{par}, S, M)$\;
    $\Gamma \gets \mathtt{compute\_optimal\_assignments}(\Delta, I_{par})$\;
    $\Phi \gets \mathtt{get\_optimal\_paths}(L_{\Delta}, \Gamma, I_{par})$\;
\end{algorithm}

\smallskip
\subsubsection{Sum-of-Costs-optimal goal assignment and corresponding paths}

First, we construct a \textit{weighted state transition graph} $\mathcal{G}_\Delta$. 
Its vertices $\mathcal{G}_\Delta.V$ are the all possible states of the participants s.t. $\mathcal{L}(s) \in W_g \cup W_c$, $\forall s \in \mathcal{G}_\Delta.V$. 
And its edges $\mathcal{G}_\Delta.E$ are the all possible transitions among the states, i.e., $e(s', s'') \in \mathcal{G}_\Delta.E$ iff $s', s'' \in \mathcal{G}_\Delta.V$ and $\exists \mu \in M$ s.t. $s' \xrightarrow{\mu}s''$. 
Moreover, the weight of an edge $e$ is $\mathtt{cost}(\mu)$. 
Now, for each pair of participant $r^i$ (i.e., $i \in I_{par}$) and unassigned goal $\gamma^j \in W_g \setminus \widetilde{W_g}$ (i.e., $j \in [G^*]$), we run the A* algorithm \cite{DBLP:books/cu/L2006, DBLP:books/daglib/0023820} to compute the optimal path that originates from the start state $s^i_0$ and terminates at the state $s$ s.t. $\mathcal{L}(s) = \gamma^j$. 
We store the cost of the path in $\Delta[i][j] \in \mathbb{R}$, and the corresponding path in $L_\Delta[i][j]$ (line 1 of Algorithm \ref{algo:cop_for_par}). 
Next, we apply the Hungarian algorithm \cite{kuhn1955hungarian, DBLP:journals/cacm/BourgeoisL71} on $\Delta$ to find the cost-optimal assignment $\Gamma: I_{par} \rightarrow [G^*] \cup \{\mathsf{NULL}\}$ (line 2), signifying which participant would go to which unassigned goal. 
Notice that some participants would get assigned $\mathsf{NULL}$ when $R^* > G^*$. 
We call such participants inactive and the rest of the participants active. 
Finally, for each participant $r^i$, its optimal path $\varphi^i \in \Phi$ is set to $L_\Delta[i][j]$ if $\Gamma[i]=j$, otherwise (i.e., when participant $r^i$ is inactive), set to its start state $s^i_0$ (line 3). 
Thus, we get the set of optimal paths for the participants $\Phi = \{\varphi^i\ |\ i \in I_{par}\}$. 
Keep in mind that the non-participants $\overline{I_{par}}$ are inherently active as their remaining paths $\sigma^k_{rem} \neq \mathsf{NULL}, \forall k \in \overline{I_{par}}$. 


\begin{figure}[tb]
    \centering
    \includegraphics[scale=0.33]{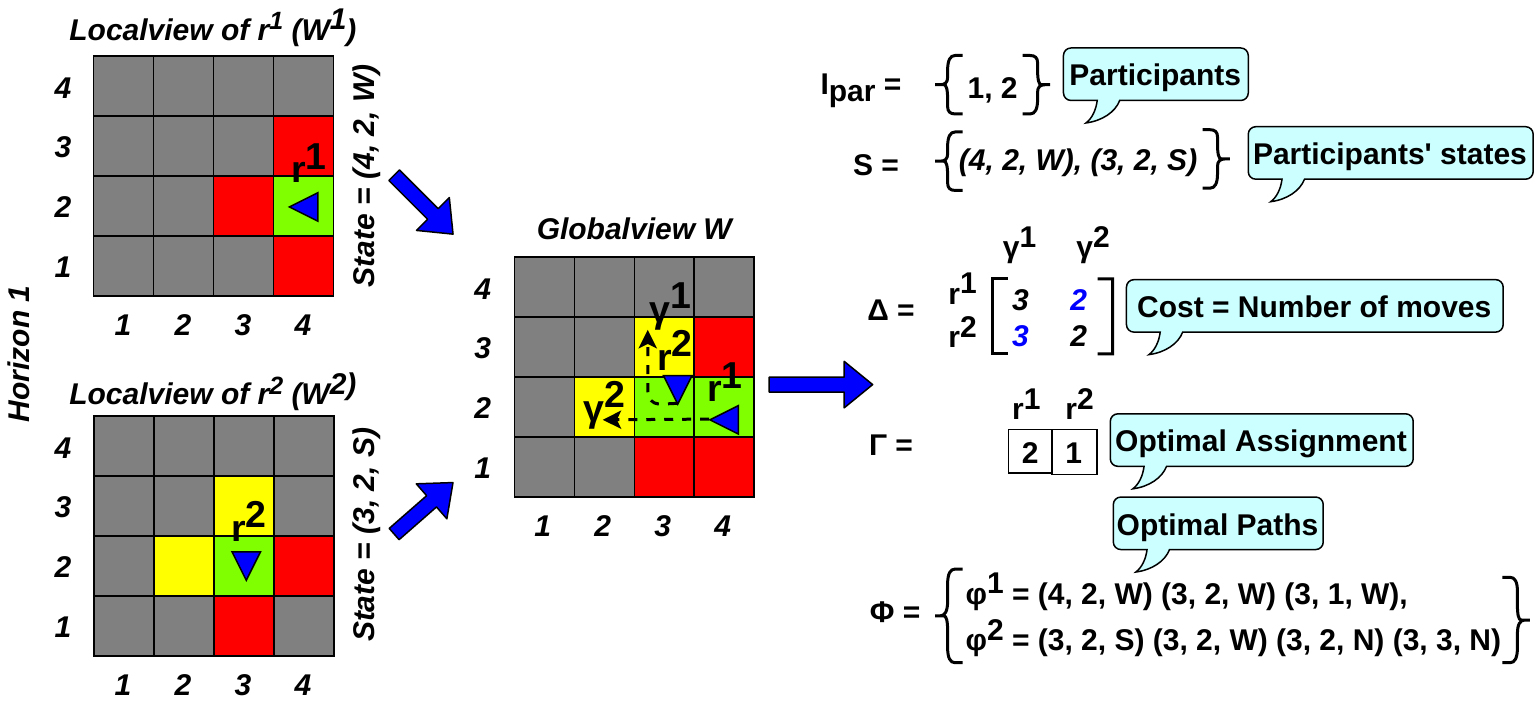}
    \caption{Cost-optimal goal assignment and optimal paths}
    \label{fig:assignment_and_paths}
\end{figure}

\begin{example}
\label{ex:assignment_and_paths}
We show an example of a cost-optimal goal assignment and corresponding optimal paths in Figure \ref{fig:assignment_and_paths} involving two participants $r^1$ and $r^2$ in a $4 \times 4$ workspace. 
\end{example}

\begin{algorithm}[t]
    \small
    \DontPrintSemicolon
    
    \caption{$\mathtt{CFPForPar}(\Gamma, \Phi, I_{par}, R, \Sigma_{rem})$}
    \label{algo:cfp_for_par}
    
    \KwResult{Collision-free paths for the participants $(\Sigma')$}
    
    \While{$true$}{
        $\langle \Gamma, \Omega \rangle \gets \mathtt{get\_feasible\_paths}(\Gamma, \Phi, I_{par})$\;
        $\Theta_r \gets \mathtt{compute\_relative\_precedences}(\Omega, I_{par})$\;
        $\Theta_a \gets \mathtt{compute\_absolute\_precedence}(\Theta_r, I_{par})$\;
        
        \If{$\Theta_a$ is valid}{
            $\Gamma' = \Gamma$     \tcp*{Itemwise copy}
            $\langle \Gamma, \Upsilon \rangle \gets \mathtt{compute\_sto}(\Gamma, \Omega, \Theta_a, I_{par}, R, \Sigma_{rem})$\;
            
            \lIf{$\Gamma' = \Gamma$}{
                \Break
            }
        }
        \Else{
            $\Gamma \gets \mathtt{break\_precedence\_cycles}(\Gamma, \Theta_r, I_{par})$\;
        }
        
        $\Phi \gets \mathtt{adjust\_paths}(\Gamma, \Omega, I_{par})$\;
    }
    
    $\Sigma' \gets \mathtt{get\_collision\_free\_paths}(\Omega, \Upsilon, I_{par})$\;
\end{algorithm}

\smallskip
\subsubsection{Prioritization}

Prioritization is an \textit{inter-robot collision-avoidance} mechanism, which assigns priorities to the robots either \textit{statically} (e.g., \cite{DBLP:conf/icra/ErdmannL86, DBLP:conf/aiide/Silver05}) or \textit{dynamically} (e.g., \cite{DBLP:conf/iros/BergO05, DBLP:conf/rss/TurpinMMK13}) so that a CP can subsequently find their collision-free paths in order of their priority. 
In \FnCFPForPar (Algorithm \ref{algo:cfp_for_par}), prioritization of the participants $I_{par}$ takes place (in the \textbf{while} loop in lines 1-11 except the \textbf{if} block in lines 5-8) based on the movement constraints among their paths in $\Phi$. 
But, prioritization surely fails if there is any \textit{infeasible} path in $\Phi$. 
So, detection and correction of the infeasible paths are necessary (line 2). 
There are two types of infeasible paths, namely \textit{crossover paths} and \textit{nested paths}. 
Optimal paths $\varphi^i$ and $\varphi^j$, where $\varphi^i,\ \varphi^j \in \Phi$, of distinct participants $r^i$ and $r^j$, where $i,\ j \in I_{par}$, are said to be - 

\begin{enumerate}
    \item a \textit{crossover path pair} if one of the participants is inactive, which sits on the path of the other active participant or their start locations are on each other's path, and 
    \item a \textit{nested path pair} if the active participant $r^i$'s start and goal locations are on $\varphi^j$ of the active participant $r^j$, or vice versa. 
\end{enumerate}

\begin{figure}[t]
    \centering
    \includegraphics[scale=0.37]{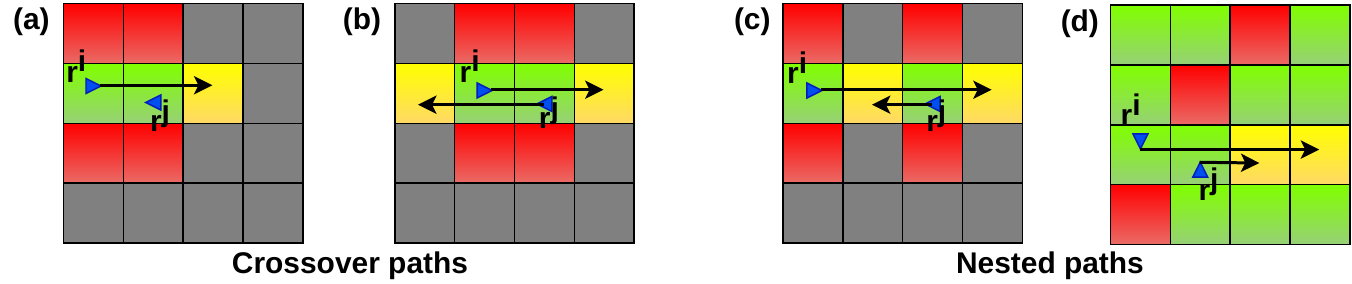}
    \caption{Infeasible paths}
    \label{fig:infeasible_paths}
\end{figure}

\begin{example}
\label{ex:infeasible_paths}
We show four simple examples of infeasible paths involving two participants $r^i$ and $r^j$ in Figure ~\ref{fig:infeasible_paths}. 
\end{example}

The CP can modify the paths $\Phi$ of the participants, but not the remaining paths $\Sigma_{rem}$ of the non-participants. 
So, first, we mark those active participants as \textit{killed}, which form infeasible path pairs with an inactive, active, or already killed participant. 
In the revival step, we initially mark all the inactive and the killed participants as \textit{unrevived}. 
Then, each path $\varphi^i$ of the killed participants is processed in reverse until an unrevived participant $r^j$ gets found. 
If $r^j$ can reach $r^i$'s goal by following $\omega^j \in \Omega$ (note that $\omega^j$ gets generated from $\varphi^i$) without forming any infeasible path pair, then $r^j$ gets \textit{revived}. 
Thus, a greedy procedure (a modified version of Algorithm 4 in \cite{DBLP:conf/iros/MitraS22}, which works only for $R$ participants) generates the feasible paths $\Omega = \{\omega^i\ |\ i\ \in\ I_{par}\}$ for the participants through \textit{goal reassignments}. 

\begin{example}
As a continuation of Example \ref{ex:infeasible_paths}, in Figure ~\ref{fig:infeasible_paths}, only $r^i$ is an active participant in (a) while both $r^i$ and $r^j$ are active participants in the rest (b-d). 
Due to infeasible paths, only $r^i$ gets killed in (a) while both $r^i$ and $r^j$ get killed in the rest (b-d). 
Though only $r^j$ gets revived in (a, c-d), both $r^i$ and $r^j$ get revived in (b). 
\end{example}

Next, for each pair of distinct participants $r^i$ and $r^j$, we compute the relative precedence $\Theta_r[i][j] \in \mathbb{B}$ (line 3) based on their paths $\omega^i$ and $\omega^j$, respectively, where $\omega^i, \omega^j \in \Omega$. 
$\Theta_r[i][j]$ is set to $1$ if $r^i$ must move before $r^j$, which is necessary when either the start location of $r^i$ is on $\omega^j$, or the goal location of $r^j$ is on $\omega^i$, $0$ otherwise. 
Similarly, we compute $\Theta_r[j][i]$. 
Finally, we apply the topological sorting on $\Theta_r$ to get the absolute precedence $\Theta_a$ (line 4). 
But, it fails if there is any cycle of relative precedence (see Example \ref{ex:cyclic_precedence}). 
In that case, we use \cite{DBLP:journals/ipl/HassinR94, DBLP:books/daglib/0022194} to get a Directed Acyclic Graph (DAG) out of that Directed Cyclic Graph (DCG) by inactivating some active participants in $\Gamma$ and adjusting their feasible paths $\Omega$ accordingly (lines 9-11), which in turn may form new crossover paths. 
Therefore, we reexamine the paths in another iteration of the \textbf{while} loop (line 1). 

\begin{figure}[tb]
    \centering
    \includegraphics[scale=0.36]{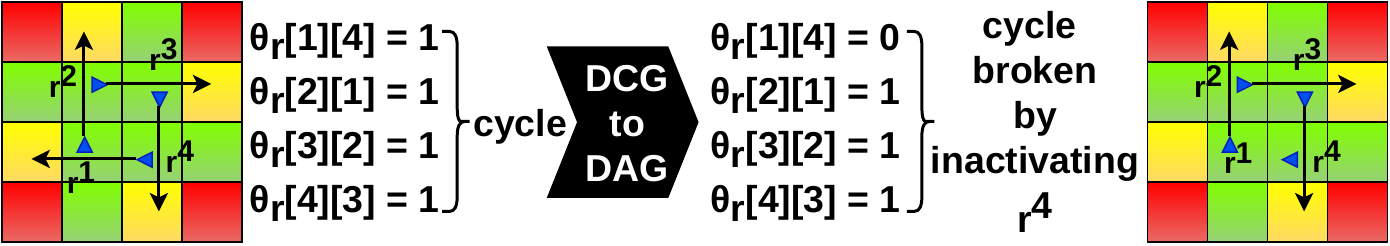}
    \caption{Removal of cyclic precedence}
    \label{fig:cyclic_precedence}
\end{figure}

\begin{example}
\label{ex:cyclic_precedence}
In Figure ~\ref{fig:cyclic_precedence}, we show an example of removing a precedence cycle involving four participants. 
\end{example}

\begin{example}
\label{ex:prioritization}
As a continuation of Example \ref{ex:assignment_and_paths}, the feasible paths $\omega^1$ and $\omega^2$ for the participants $r^1$ and $r^2$ are the same as $\varphi^1$ and $\varphi^2$, respectively. 
Also, $\Theta_r[2][1] = 1$ because $\mathcal{L}(s^2_0)$ is on $\omega^1$. 
Therefore, robot ID $2$ comes before $1$ in $\Theta_a$. 
\end{example}

\smallskip
\subsubsection{Time parameterization}
\label{subsubsec:time_parameterization}

Time parameterization ensures \textit{inter-robot collision avoidance} through decoupled planning. 
Recall that the CP does not change the remaining paths $\Sigma_{rem}$ for the non-participants, implicitly meaning the non-participants have higher priorities than the participants. 
So, for each participant $r^i$ in order of $\Theta_a$, we incrementally compute its \textit{start time offset} $\Upsilon[i] \in \mathbb{N}_0$ (line 7). 
It is the amount of time $r^i$ must wait at its start state to avoid collisions with its higher priority robots (some other participants and the non-participants). 
But, a collision between a participant $r^i$ and non-participant $r^j$ may become inevitable (see Example \ref{ex:sto}). 
In that case, the CP inactivates the participant in $\Gamma$ and reexamines updated paths $\Phi$ (line 11) for the presence of new crossover paths. 
In the worst case, all the participants may get inactivated due to inevitable collisions with the non-participants. 
Otherwise, $r^i$'s collision-free path $\sigma'^i \in \Sigma'$ gets generated (line 12) by inserting its start state $s^i_0$ at the beginning of $\omega^i$ for $\Upsilon[i]$ times. 

\begin{example}
As a continuation of Example \ref{ex:prioritization}, start time offsets for the participants $r^2$ is $\Upsilon[2] = 0$, and $r^1$ is $\Upsilon[1] = 2$ because any value smaller than $2$ would cause a collision. 
Therefore, the collision-free path lengths for the participants $r^1$ and $r^2$ are $|\sigma'^1| = 4$ and $|\sigma'^2| = 3$, respectively. 
Observe that the horizon length $\lambda = 3$. 
\end{example}


\begin{figure}
    \centering
    \includegraphics[scale=0.25]{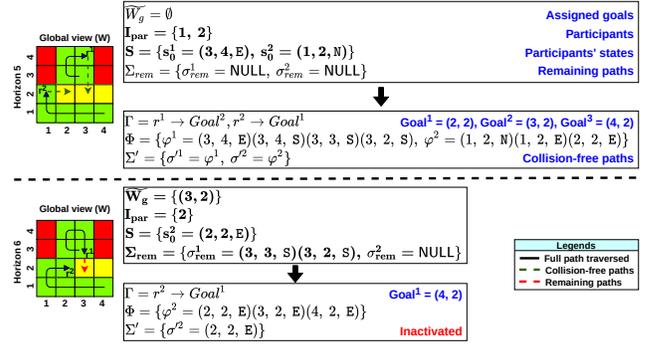}
    \caption{Inactivation of a participant during offsetting}
    \label{fig:sto}
\end{figure}

\begin{example}
\label{ex:sto}
In Figure \ref{fig:sto}, we show an extreme case of replanning where the CP inactivates a participant to avoid collision with a non-participant. 
Notice that both the robots $r^1$ and $r^2$ have traversed their full paths of length five till horizon $4$. 
In horizon $5$, both participate because none has any remaining path. 
The CP assigns the unassigned goal ids $2$ and $1$ to $r^1$ and $r^2$, respectively, and subsequently finds their collision-free paths $\Sigma'$. 
As the horizon length ($\lambda$) is found to be $2$, the CP reserves the goal $(3,\ 2)$ for $r^1$ in $\widetilde{W_g}$ and sets its remaining path $\sigma^1_{rem}$ accordingly. 
Therefore, only $r^2$ participates in horizon $6$. 
The CP finds $r^2$'s optimal path $\varphi^2$, leading to the only unassigned goal $(4,\ 2)$. 
Though the prioritization succeeds trivially, the offsetting fails because the non-participant $r^1$ would reach its goal $(3,\ 2)$ before the participant $r^2$ gets past that goal. 
So, the CP inactivates $r^2$. 
In this horizon, only $r^1$ remains active with $\lambda = 1$. 
Observe that both will participate in the next horizon. 
\end{example}
}
\section{Theoretical Analysis}
\label{sec:theoretical_analysis}

First, we formally prove that \FnOnDemCPP covers $W$ completely and then analyze its time complexity. 

\longversion
{
\subsection{Proof of Complete Coverage}
\label{subsec:proof}

\begin{lemma}[Lemma $4.3$ of \cite{DBLP:conf/iros/MitraS22}]
\label{lemma:gamrcpp_horizon}
\FnGAMRCPPHorizon ensures that at least one goal gets visited in each horizon. 
\end{lemma}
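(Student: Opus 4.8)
The plan is to show that whenever the workspace $W$ still contains an uncovered obstacle-free cell, a single invocation of \FnGAMRCPPHorizon terminates with at least one robot standing on such a cell. I would organize the argument into three stages: (i) the goal-assignment step produces at least one \emph{active} robot whose optimal path ends on an assigned goal; (ii) this property survives the collision-avoidance step, so at least one active robot keeps a collision-free path of finite, non-zero length that still terminates on a goal; and (iii) the horizon length is chosen precisely so that the active robot with the shortest such path traverses it entirely within the horizon, hence reaches its goal.

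For stage (i) I would invoke the standing assumptions that $W_{free}$ is strongly connected and that the goal set is non-empty in this horizon. Fix a goal cell $\gamma$. Strong connectivity guarantees, for every robot $r^i$, a finite-cost route from its current state to a state located at $\gamma$ through cells in $W_g \cup W_c$ (avoiding $W_u \cup W_o$); hence the column corresponding to $\gamma$ in the optimal-cost matrix has a finite entry, so the Hungarian goal assignment leaves at least one robot active, with an optimal path of length $\ge 1$ ending at its assigned goal.

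For stage (ii) the observation special to \FnGAMRCPPHorizon is that \emph{every} robot participates and there are no frozen remaining paths that must be dodged. Consequently the robot placed first in the absolute-precedence order has start-time offset $0$ and no higher-priority robot that could force it to be inactivated, so it retains a collision-free path that still ends on its assigned goal; thus the set $A$ of active robots surviving the collision-avoidance step is non-empty, and each $r^i \in A$ has $1 \le |\sigma'^i| < \infty$. For stage (iii), $\lambda = \min_{i \in A}|\sigma'^i| \ge 1$, so some $r^{i^\star} \in A$ satisfies $|\sigma'^{i^\star}| = \lambda$; this robot executes its \emph{entire} collision-free path in the horizon and therefore ends it at the location of its assigned goal — an uncovered cell that is now visited — which is the claim. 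I expect stage (ii) to be the delicate point: one must argue that the feasibility-repair, cyclic-precedence-removal, and time-parameterization sub-steps cannot inactivate \emph{all} active robots, which holds here precisely because, with no immovable non-participant paths in play, a top-priority active robot can always be served — in contrast to the on-demand setting of this paper, where all participants may be inactivated by the non-participants.
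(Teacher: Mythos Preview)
This lemma is not proved in the present paper: it is quoted verbatim as Lemma~4.3 of the cited \FnGAMRCPP paper, and no argument for it appears here. Consequently there is no proof in this paper against which to compare your proposal.

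Your three-stage reconstruction is a reasonable outline of how such a result would be established, and stage~(iii) is certainly how the horizon-length mechanism is used downstream (compare the proof of Lemma~\ref{lemma:asyncpp_horizon} in this paper). You yourself identify the real issue: stage~(ii). Your sentence ``the robot placed first in the absolute-precedence order has start-time offset~$0$ and no higher-priority robot that could force it to be inactivated'' is not quite an argument, because the top element of $\Theta_a$ need not be an \emph{active} participant, and because inactivation in \FnGAMRCPP can also occur during feasibility repair (\texttt{get\_feasible\_paths}) and cycle removal (\texttt{break\_precedence\_cycles}), not only during time parameterization. So the claim that some active robot survives all three sub-steps requires a genuine invariant about the kill/revive and cycle-breaking procedures --- roughly, that every inactivation frees a goal that gets reassigned (or that the revival step always repopulates at least one active robot) --- which you have flagged but not supplied. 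That invariant is presumably what the original proof in the \FnGAMRCPP paper provides; to complete your argument you would need to import or reproduce it.
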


\begin{lemma}
\label{lemma:asyncpp_gamrcpp_equivalence}
\FnCPPForPar $\equiv$ \FnGAMRCPPHorizon if \mbox{$R^* = R$}. 
\end{lemma}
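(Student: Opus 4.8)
The plan is to show that \FnCPPForPar is obtained from \FnGAMRCPPHorizon (the combined Algorithms~2--4 of \cite{DBLP:conf/iros/MitraS22}) by inserting a fixed set of modifications, each guarded by a condition that mentions the non-participants $\overline{I_{par}}$ or their remaining paths $\Sigma_{rem}$, and then to argue that all these guards degenerate when $R^* = R$. First I would unwind the hypothesis: since $R^* = |I_{par}| = |S|$, the assumption $R^* = R$ forces $I_{par} = [R]$, hence $\overline{I_{par}} = [R]\setminus I_{par} = \emptyset$. A robot is a non-participant in a horizon only if it carries a remaining path, so $\overline{I_{par}} = \emptyset$ gives $\sigma^i_{rem} = \mathsf{NULL}$ for every $i \in [R]$; consequently the reserved-goal set $\widetilde{W_g} = \emptyset$, and the unassigned goals $W_g \setminus \widetilde{W_g}$ coincide with all the goals $W_g$.

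Next I would traverse the two stages of \FnCPPForPar, namely \FnCOPForPar and \FnCFPForPar, matching each against its counterpart in \FnGAMRCPPHorizon. In \FnCOPForPar, the optimal-cost matrix is computed for exactly the pairs $(i,j)$ with $i \in I_{par} = [R]$ and $\gamma^j \in W_g\setminus\widetilde{W_g} = W_g$, and the Hungarian algorithm is applied to this very matrix, so the resulting assignment $\Gamma$ and optimal paths $\Phi$ are precisely those that \cite{DBLP:conf/iros/MitraS22} produces for $R$ robots. In \FnCFPForPar, the definitions of crossover and nested (infeasible) path pairs, the kill--revive procedure that yields the feasible paths $\Omega$, the relative- and absolute-precedence computations, and the precedence-cycle breaking all depend only on the participants' paths; with $I_{par} = [R]$ they reduce to the feasibility and prioritization steps of \cite{DBLP:conf/iros/MitraS22}. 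The only genuinely new ingredients of \FnCFPForPar lie in the time-parameterization step: (i) the non-participants are treated as higher-priority robots whose $\Sigma_{rem}$ constrains every start-time offset $\Upsilon[i]$, and (ii) a participant is inactivated, and the loop restarted, when a collision with a non-participant is unavoidable. Because $\overline{I_{par}} = \emptyset$ and $\Sigma_{rem}$ is entirely $\mathsf{NULL}$, ingredient (i) adds no constraint beyond those coming from the other participants, and the branch in (ii) is never entered, so $\Upsilon$ and the returned collision-free paths $\Sigma'$ equal those of \cite{DBLP:conf/iros/MitraS22}. Composing these stage-by-stage equalities yields \FnCPPForPar $\equiv$ \FnGAMRCPPHorizon under $R^* = R$.

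The main obstacle is bookkeeping rather than insight: one must verify, essentially line by line against Algorithms~2--4 of \cite{DBLP:conf/iros/MitraS22}, that every occurrence of the ``keep $\Sigma_{rem}$ intact'' qualifier, of the reserved set $\widetilde{W_g}$, and of the non-participant set $\overline{I_{par}}$ inside \FnCPPForPar sits within a guard that trivializes when $\overline{I_{par}} = \emptyset$, and that nothing outside \FnCPPForPar (such as $\mathtt{combine\_paths}$, $\mathtt{determine\_horizon\_length}$, or $\mathtt{get\_equal\_length\_paths}$, which belong to \FnOnDemCPPHor) is invoked along the way. Once that degeneration is checked for each guarded modification, the claimed equivalence follows.
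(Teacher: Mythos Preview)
Your proposal is correct and follows essentially the same approach as the paper: you unwind $R^* = R$ to obtain $\overline{I_{par}} = \emptyset$, $\sigma^i_{rem} = \mathsf{NULL}$ for all $i \in [R]$, and $\widetilde{W_g} = \emptyset$, which is exactly what the paper's proof records. The paper stops there and declares the equivalence immediate, whereas you additionally trace through \FnCOPForPar and \FnCFPForPar to check that each non-participant-guarded modification degenerates; this extra bookkeeping is sound but more detailed than what the paper itself provides.
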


\begin{proof}
When all the robots are participants, i.e., $R^* = R$, there are no non-participants, i.e., $\overline{I_{par}} = \emptyset$. 
It implies that in $\Sigma_{rem}$, $\sigma^i_{rem} = \mathsf{NULL}, \forall i \in [R]$, thereby $\widetilde{W_g} = \emptyset$. 
Hence, \FnCPPForPar $\equiv$ \FnGAMRCPPHorizon if $R^* = R$. 
\end{proof}

\begin{lemma}
\label{lemma:asyncpp_horizon}
\FnOnDemCPPHor ensures that at least one robot visits its goal in each horizon. 
\end{lemma}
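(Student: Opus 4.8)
The plan is to perform a case analysis on the branch of \FnOnDemCPPHor taken in the current horizon, reducing each case to the already-established guarantees about goal visitation. The central quantity is the horizon length $\lambda = \min_{i \in [R]}\{|\sigma'^i_{all}|\ :\ |\sigma'^i_{all}| > 0\}$: by construction it equals the shortest path length among the active robots, so the robot $r^{i^*}$ attaining this minimum has $|\sigma^{i^*}| = |\sigma'^{i^*}_{all}|$, i.e., it traverses its entire combined path within the horizon and hence reaches the endpoint of that path. I would therefore structure the argument around showing that in every non-terminating case there exists at least one active robot whose combined path $\sigma'^i_{all}$ terminates at a goal cell.

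First I would dispose of the exit case: if $I_{par} = [R]$ and $W_g \setminus \widetilde{W_g} = \emptyset$, then $\widetilde{W_g} = \emptyset$ forces $W_g = \emptyset$, and combined with $W_u = \emptyset$ (which one argues from $W_g = W_o = \emptyset$ being the only way the frontier closes — or more carefully, the algorithm only exits when no unexplored/goal cells remain reachable), the algorithm halts with complete coverage, so the lemma's hypothesis "in each horizon" is vacuous there. Second, the case where $(W_g \setminus \widetilde{W_g}) = \emptyset$ but $I_{par} \neq [R]$: here the participants get stalled paths $\sigma'^i = s^i_0$, but there is at least one non-participant $r^k$ with $\sigma^k_{rem} \neq \mathsf{NULL}$, and this remaining path, by the construction in \FGetEqLenPaths (lines concerning $\widetilde{W_g}$), terminates at a reserved goal in $\widetilde{W_g}$. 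Since $r^k$ is active and $\lambda$ is the minimum active path length, either $r^k$ itself finishes its remaining path this horizon, or some other active robot with an even shorter path does — and I must argue that *that* robot also ends at a goal. This is the delicate point: a shorter active path could be another non-participant's remaining path (which does end at a reserved goal, fine) but could a priori not be... actually every active robot's combined path is either a participant path produced by \FnCPPForPar heading to an unassigned goal, or a remaining path ending at a reserved goal; in the current sub-case all participant paths have length $0$ hence are inactive, so every active robot is a non-participant ending at a reserved goal. Hence the $\lambda$-attaining robot ends at a goal.

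Third, the main case $(W_g \setminus \widetilde{W_g}) \neq \emptyset$: the CP calls \FnCPPForPar. By Lemma~\ref{lemma:asyncpp_gamrcpp_equivalence}, when $R^* = R$ this coincides with \FnGAMRCPPHorizon, and Lemma~\ref{lemma:gamrcpp_horizon} gives the conclusion directly. When $R^* < R$, I would argue as follows: after goal assignment and the prioritization/time-parameterization of \FnCPPForPar, either at least one participant remains active with a feasible collision-free path to an unassigned goal, or all participants get inactivated (by cyclic-precedence breaking or by inevitable collisions with non-participants) — but in the latter situation there must still be at least one active non-participant (otherwise $N_{act}$ would be $0$ and there would be nothing to plan), whose remaining path ends at a reserved goal. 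In either sub-case there is an active robot whose combined path ends at a goal cell; pick among all active robots the one $r^{i^*}$ with $|\sigma'^{i^*}_{all}| = \lambda$. If $r^{i^*}$'s path ends at a goal we are done; if not, I need the stronger observation that *every* active robot's combined path ends at a goal cell — which, as noted above, holds because each active robot is either a participant with an assigned unassigned goal (its A\textsuperscript{*} optimal path $\varphi^i$ terminates at $\Gamma[i]$, and time-parameterization only prepends waits, preserving the terminal cell) or a non-participant whose $\sigma^i_{rem}$ was carved off in a prior horizon to terminate at a cell placed in $\widetilde{W_g}$. Therefore $r^{i^*}$ reaches a goal cell in the current horizon, completing the proof.

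The main obstacle I anticipate is the bookkeeping in the mixed case $R^* < R$: one must be careful that "active" robots are exactly those whose combined path has positive length and that each such path genuinely terminates at a cell in $W_g$ (for participants) or $\widetilde{W_g} \subseteq W_g$-at-reservation-time (for non-participants), and that splitting at $\lambda$ does not truncate the $\lambda$-minimal robot's path short of its goal — which is immediate from $\lambda$ being the minimum, so $|\sigma^{i^*}| = \min = |\sigma'^{i^*}_{all}|$ and $\sigma^{i^*}_{rem}$ is empty. A secondary subtlety is ensuring the horizon is genuinely non-empty of active robots (i.e., $\lambda$ is well-defined), which follows because whenever the algorithm does not exit, either \FnCPPForPar yields an active participant or a non-participant remains active.
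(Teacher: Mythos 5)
Your proposal is correct and follows essentially the same route as the paper's proof: establish that $\Sigma'_{all}$ always contains at least one active robot (via Lemma~\ref{lemma:asyncpp_gamrcpp_equivalence} and Lemma~\ref{lemma:gamrcpp_horizon} when $R^* = R$, and via the inherently active non-participants when $R^* < R$), and then use the fact that $\lambda$ is the minimum active path length so the minimizing robot traverses its entire path. Your argument is in fact somewhat more careful than the paper's, which asserts without elaboration that the $\lambda$-attaining robot ``visits its goal,'' whereas you make explicit the needed invariant that every active robot's combined path terminates at a goal cell (an assigned unassigned goal for participants, a reserved goal in $\widetilde{W_g}$ for non-participants).
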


\begin{proof}
The set of all collision-free paths $\Sigma'_{all}$ (line 47 of Algorithm \ref{algo:ondemcpp}) consists of two kinds of paths, namely the paths $\Sigma'$ of the participants $I_{par}$ and the remaining paths $\Sigma_{rem}$ of the non-participants $\overline{I_{par}}$. 
Now, consider two cases: (\romannum{1}) $R^* = R$ and (\romannum{2}) $R^* < R$. 
In the case of (\romannum{1}), $\widetilde{W_g} = \emptyset$. 
Further, by Lemma~\ref{lemma:asyncpp_gamrcpp_equivalence} and Lemma~\ref{lemma:gamrcpp_horizon}, when $W_g \neq \emptyset$ (lines 39-40), there is at least one active robot in $\Sigma'$, and so in $\Sigma'_{all}$. 
In the case of (\romannum{2}), however, $\widetilde{W_g} \neq \emptyset$ because there are $R - R^*$ non-participants. 
Thus, $\Sigma'_{all}$ contains at least one active robot because the non-participants are active. 
Moreover, the horizon length $\lambda$ (line 48) ensures that at least one robot (participant or non-participant) remains active in the $\lambda$-length paths $\Sigma$ (line 49) to visit its goal. 
\end{proof}
}
\begin{theorem}
\label{theorem:asyncpp_complete_coverage}
\FnOnDemCPP eventually terminates, and when it does, it ensures complete coverage of $W$. 
\end{theorem}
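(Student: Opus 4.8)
The plan is to establish the two assertions of the theorem separately: \emph{termination} and \emph{complete coverage upon termination}. The key enabling fact is Lemma~\ref{lemma:asyncpp_horizon}, which guarantees that \FnOnDemCPPHor{} makes at least one robot visit its goal in each horizon in which unassigned goals remain. First I would set up the basic invariants maintained across horizons: (a) every cell ever marked covered in $W_c$ stays covered (the global view is updated monotonically on $W_c$, lines 21--23); (b) the set of goals $W_g$ is exactly the set of explored obstacle-free cells not yet covered, and a cell leaves $W_g$ only by becoming covered (line 27); (c) $\widetilde{W_g} \subseteq W_g$ always, since a goal is reserved only when it is the terminus of some robot's remaining path, and it is released precisely when that robot traverses the remaining path and covers it. These invariants justify treating ``progress'' as the strictly growing quantity $|W_c|$, bounded above by $|W_{free}| \le X \cdot Y$.

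For termination, I would argue as follows. In any horizon, \FnOnDemCPPHor{} either (i) finds unassigned goals $W_g \setminus \widetilde{W_g} \neq \emptyset$ and calls \FnCPPForPar{}; (ii) has no unassigned goals but $I_{par} \neq [R]$, so non-participants still carry remaining paths toward reserved goals; or (iii) has no unassigned goals \emph{and} all robots are participants, whereupon it exits. I must show case (iii) is eventually reached. The argument is that each horizon makes measurable progress: by Lemma~\ref{lemma:asyncpp_horizon}, whenever a path of positive length is produced, the horizon length $\lambda \ge 1$ and at least one active robot reaches its goal, so $|W_c|$ strictly increases in that horizon (the newly reached goal cell is added to some robot's $W^i_c$ and hence to $W_c$). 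The only horizons that do \emph{not} immediately grow $|W_c|$ are those in case (ii) where participants are frozen at their start states while non-participants advance; but a non-participant's remaining path has bounded length, so after finitely many such horizons every reserved goal in $\widetilde{W_g}$ is reached, $\widetilde{W_g}$ shrinks, and these goals get absorbed into $W_c$ — again increasing $|W_c|$. Hence $|W_c|$ cannot stall forever; it strictly increases at least once every finitely many horizons, and since it is bounded by $|W_{free}|$, after finitely many horizons we have $W_c = W_{free}$, so $W_g = \emptyset$, hence $W_g \setminus \widetilde{W_g} = \emptyset$ and (using that $\widetilde{W_g} \subseteq W_g = \emptyset$ forces every remaining path to be exhausted, so every robot is a participant) case (iii) triggers and the algorithm exits.

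For complete coverage at termination, I would note that the algorithm exits only via the \Exit{} on line~\shortversion{32}\longversion{46}, which fires exactly when $W_g \setminus \widetilde{W_g} = \emptyset$ and $I_{par} = [R]$. Since $I_{par} = [R]$ means no robot has a remaining path, $\Sigma_{rem}$ is all $\mathsf{NULL}$, so $\widetilde{W_g} = \emptyset$ and therefore $W_g = \emptyset$. By invariant (b), $W_g = \emptyset$ together with the fact that the union of all robots' local views covers the explored region means every explored obstacle-free cell is in $W_c$. It remains to rule out \emph{unexplored} obstacle-free cells: I would invoke the strong connectivity of $W_{free}$ (assumed in Section~\ref{subsubsec:workspace}) — if some obstacle-free cell were never explored, then on the boundary between explored and unexplored regions there would be an explored obstacle-free cell adjacent to an unexplored obstacle-free cell, which a robot's rangefinder would have classified as a goal rather than an obstacle, contradicting $W_g = \emptyset$. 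Hence $W_c = W_{free}$, and since every cell in $W_c$ was visited by some robot, Cond.~II holds; Cond.~I (collision-freedom) is maintained throughout by \FnCPPForPar{} and the intactness of $\Sigma_{rem}$. This closes the proof.

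The main obstacle I anticipate is the termination argument in the presence of case (ii): one must carefully show that horizons in which participants are frozen cannot chain indefinitely. The subtlety is that after non-participants finish their remaining paths, \emph{new} goals may be discovered, potentially re-entering case (i) or (ii) again — so the induction is not simply on a single remaining-path length but requires the global measure $|W_c|$ plus a secondary measure (total length of outstanding remaining paths) to handle the frozen horizons, with a lexicographic descent argument. I would also need to confirm that \FnCPPForPar{} genuinely keeps $\Sigma_{rem}$ intact (so that non-participant progress is never undone), which the construction in Section~\ref{subsec:cpp_for_par} ensures by giving non-participants strictly higher priority and inactivating participants rather than perturbing non-participant paths.
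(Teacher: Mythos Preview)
Your proposal is correct and follows essentially the same approach as the paper's proof: both rely on Lemma~\ref{lemma:asyncpp_horizon} to guarantee per-horizon progress, both observe that non-participants' remaining paths have bounded length so reserved goals are eventually released, and both close by noting that the exit condition $I_{par}=[R]$ together with $W_g\setminus\widetilde{W_g}=\emptyset$ forces $W_g=\emptyset$ and hence (via strong connectivity of $W_{free}$) $W_c=W_{free}$. Your write-up is more explicit than the paper's---you spell out invariants and name $|W_c|$ (with a secondary lexicographic measure on total remaining-path length) as the termination variant---but the underlying argument is the same; one minor caution is that your invariant (c), $\widetilde{W_g}\subseteq W_g$, can fail transiently (a reserved goal may be covered by another robot before the non-participant arrives), though you never actually need (c) since $I_{par}=[R]$ already yields $\widetilde{W_g}=\emptyset$ directly.
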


\begin{proof}
\shortversion
{
Please refer to \cite{DBLP:conf/arxiv/MitraS23} for the proof.
}
\longversion
{
By Lemma \ref{lemma:asyncpp_horizon}, in a horizon, at least one active robot visits its goal, thereby exploring $W_u$ (as $W_{free}$ is strongly connected) and adding new goals (if any) into $W_g$. 
If an active participant visits its goal, the CP marks that goal as covered. 
So, in the next horizon, $W_g$ decreases, and $W_c$ increases. 
But, if a non-participant (inherently active) visits its goal, the CP removes that reserved goal from $\widetilde{W_g}$ in the next horizon, thereby decreasing $\widetilde{W_g}$. 
But, it may not increase $W_c$ as other robots might have already passed through that goal in the past horizons, already covering it in $W_c$. 
These two types of active robots and the inactive participants $I^{inact}_{par}$ participate in the next horizon (line 13 of Algorithm \ref{algo:ondemcpp}). 
However, the rest of the active robots who cannot visit their goals in the current horizon become non-participants in the next horizon as the CP reserves their goals into $\widetilde{W_g}$. 
Each reserved goal in $\widetilde{W_g}$ gets visited eventually in some horizon (as the CP does not alter the corresponding robot's remaining path), making the corresponding robot participant in the next horizon. 
Therefore, eventually, all the robots become participants (i.e., $I_{par} = [R]$), falsifying the \textbf{if} condition (line 41), and hence $\widetilde{W_g} = \emptyset$. 
Now, if $W_g = \emptyset$ (line 39), it entails $W_c = W_{free}$ (lines~45-46). 
}
\end{proof}

\longversion
{
Note that \FnOnDemCPP can also ensure complete coverage of a $W$, where $W_{free}$ is not strongly connected, provided we deploy at least one robot in each component. 
}
\longversion
{
\subsection{Time Complexity Analysis}
\label{subsec:time_complexity}

%
%

\begin{lemma}
\label{lemma:cfp_for_par_tc}
\FnCFPForPar takes $\mathcal{O}(R^3)$. 
\end{lemma}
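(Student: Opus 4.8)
\textbf{Proof proposal for Lemma~\ref{lemma:cfp_for_par_tc}.}

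The plan is to walk through Algorithm~\ref{algo:cfp_for_par} line by line, bound the cost of one iteration of the outer \textbf{while} loop, bound the number of iterations, and multiply. First I would argue that each individual subroutine inside the loop body is at most cubic in $R$ (equivalently in $R^*$, since $R^* \le R$). The call $\mathtt{get\_feasible\_paths}$ performs the killing/revival procedure: for each killed participant its feasible path is scanned in reverse and checked against the other participants for infeasible pairs, which is $\mathcal{O}(R)$ work per participant pair, hence $\mathcal{O}(R^2)$ over all pairs, and the revival loop repeats this at most $R$ times in the worst case, giving $\mathcal{O}(R^3)$. Computing relative precedences $\Theta_r$ is a check for each ordered pair $(i,j)$ of whether a start/goal location lies on the other's path, i.e. $\mathcal{O}(R^2)$ pairs times $\mathcal{O}(R)$ path length (paths have length bounded by the number of reachable cells, but for the purpose of this lemma treated as $\mathcal{O}(R)$, or absorbed as a constant/polynomial factor consistent with the paper's accounting) — I would flag this as the point where the intended normalization of path lengths needs to be stated. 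Topological sorting for $\Theta_a$ is $\mathcal{O}(R^2)$ on the precedence graph; $\mathtt{break\_precedence\_cycles}$ via the DAG-extraction reference is polynomial, and I would cite it as $\mathcal{O}(R^2)$ or $\mathcal{O}(R^3)$; $\mathtt{compute\_sto}$ computes, for each participant in precedence order, a start-time offset by comparing its path against all higher-priority robots' paths, i.e. $\mathcal{O}(R)$ comparisons each of $\mathcal{O}(R)$ cost, so $\mathcal{O}(R^2)$ per participant and $\mathcal{O}(R^3)$ total (or less); $\mathtt{adjust\_paths}$ and $\mathtt{get\_collision\_free\_paths}$ are linear passes, $\mathcal{O}(R^2)$. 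So one loop iteration costs $\mathcal{O}(R^3)$.

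Next I would bound the number of outer iterations. The key monotonicity observation is that the loop only re-iterates when a participant gets inactivated — either in $\mathtt{break\_precedence\_cycles}$ (when $\Theta_a$ is invalid, lines 9--11) or in $\mathtt{compute\_sto}$ (when $\Gamma' \neq \Gamma$, line 8 fails to break). Each such re-iteration strictly decreases the number of active participants, which starts at most $R^*\le R$ and cannot go below $0$. When no participant is inactivated in an iteration ($\Gamma' = \Gamma$ after $\mathtt{compute\_sto}$ and $\Theta_a$ valid), the loop breaks. Hence there are at most $R+1$ iterations. Multiplying, $\mathtt{CFPForPar}$ runs in $\mathcal{O}(R) \cdot \mathcal{O}(R^2) = \mathcal{O}(R^3)$ if the per-iteration bound is genuinely $\mathcal{O}(R^2)$, or I re-examine whether the dominating subroutine is really $\mathcal{O}(R^2)$ per iteration so the product is $\mathcal{O}(R^3)$; if a subroutine is $\mathcal{O}(R^3)$ per iteration I would need a sharper iteration count (e.g. amortizing the revival work across inactivations) to still land at $\mathcal{O}(R^3)$.

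I expect the main obstacle to be exactly this tension between the per-iteration cost and the iteration count: a naive reading gives $\mathcal{O}(R^3)$ per iteration times $\mathcal{O}(R)$ iterations $=\mathcal{O}(R^4)$, so the proof must either (a) show the expensive subroutines are actually $\mathcal{O}(R^2)$ per call under the right data structures, or (b) amortize — observing that the total revival/killing work and the total start-time-offset work summed over all iterations is itself $\mathcal{O}(R^3)$ because each participant can be killed or inactivated only once and the work is charged to that event. I would pursue the amortized argument as the cleaner route: maintain the invariant that the set of active participants only shrinks across iterations, charge each $\mathcal{O}(R)$ or $\mathcal{O}(R^2)$ unit of rework to the inactivation that caused it, and conclude the grand total is $\mathcal{O}(R^3)$. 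A secondary subtlety is the treatment of path lengths, which are geometric quantities not obviously bounded by $R$; I would either assume the paper's convention that per-horizon path lengths are $\mathcal{O}(R)$ (consistent with how $\lambda$ and the horizon are set up) or carry an explicit path-length parameter and note it is dominated in the regimes of interest. Modulo these accounting choices the lemma follows.
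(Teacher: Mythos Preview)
Your proposal takes a genuinely different route from the paper. The paper's proof is essentially a one-line reduction: it observes that \texttt{CFPForPar} differs from $\mathtt{GAMRCPP\_CFP}$ (Algorithm~3 of \cite{DBLP:conf/iros/MitraS22}) only in that $\mathtt{compute\_sto}$ is moved inside the \textbf{while} loop rather than after it, and then defers the entire complexity argument to Theorem~4.6 of that prior paper, noting $R^* = \mathcal{O}(R)$. No per-subroutine accounting, no iteration-count argument, and no amortization appear explicitly here.

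By contrast, you attempt a self-contained, first-principles analysis. You correctly identify the central tension the paper's proof hides behind its citation: a na\"ive per-iteration bound of $\mathcal{O}(R^3)$ times $\mathcal{O}(R)$ iterations gives $\mathcal{O}(R^4)$, so either the per-iteration work is really $\mathcal{O}(R^2)$ or an amortized argument is needed. Your proposed resolution --- charge rework to the at-most-$R$ inactivation events --- is the right instinct and is presumably what underlies the cited Theorem~4.6. You also raise the path-length issue, which the paper simply absorbs into its citation. What your approach buys is a standalone argument that does not require the reader to chase a reference; what the paper's approach buys is brevity and avoidance of exactly the accounting headaches you flag. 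Your proposal would be complete once you actually execute the amortization (or pin down the $\mathcal{O}(R^2)$ per-iteration claim) rather than leaving it as a plan; as written it is a correct sketch with the hard step still outstanding.
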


\begin{proof}
The only difference between \FnCFPForPar (Algorithm \ref{algo:cfp_for_par}) and $\mathtt{GAMRCPP\_CFP}$ (Algorithm 3 in \cite{DBLP:conf/iros/MitraS22}) is that $\mathtt{compute\_sto}$ (line 7 of Algorithm \ref{algo:cfp_for_par}) gets called just after the \textbf{while} loop in $\mathtt{GAMRCPP\_CFP}$. 
Still, the time complexity of $\mathtt{compute\_sto}$ and the rest of the body of the \textbf{while} loop (lines 1-11 of Algorithm \ref{algo:cfp_for_par}) remain $\mathcal{O}(R^3)$ (interested reader may read Theorem 4.6 of \cite{DBLP:conf/iros/MitraS22}) as $R^* = \mathcal{O}(R)$. 
\end{proof}
\begin{lemma}
\label{lemma:cpp_for_par_tc}
\FnCPPForPar takes $\mathcal{O}(|W|^3)$. 
\end{lemma}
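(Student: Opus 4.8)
The plan is to decompose the running time of \FnCPPForPar along the two lines of Algorithm~\ref{algo:cpp_for_participants}, namely the call to \FnCOPForPar and the call to \FnCFPForPar, and to argue that the first dominates with a bound of $\mathcal{O}(|W|^3)$ while the second is only $\mathcal{O}(R^3)$ by Lemma~\ref{lemma:cfp_for_par_tc}. Since $R^* = \mathcal{O}(|W|)$ (there cannot be more robots active than obstacle-free cells, and likewise $G^* = \mathcal{O}(|W|)$), it suffices to show \FnCOPForPar runs in $\mathcal{O}(|W|^3)$; then the total is $\mathcal{O}(|W|^3) + \mathcal{O}(R^3) = \mathcal{O}(|W|^3)$ because $R = \mathcal{O}(|W|)$ as well.

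First I would bound the three steps of \FnCOPForPar (Algorithm~\ref{algo:cop_for_par}). For $\mathtt{compute\_optimal\_costs}$, the construction of the weighted state transition graph $\mathcal{G}_\Delta$ has $|\mathcal{G}_\Delta.V| = \mathcal{O}(|W|)$ vertices (a constant number of orientations per cell) and $|\mathcal{G}_\Delta.E| = \mathcal{O}(|W|)$ edges (a constant number of motion primitives per state). One runs A* (equivalently Dijkstra) from each participant's start state; a single-source shortest-path computation on this graph costs $\mathcal{O}(|W|\log|W|)$, and since there are $R^* = \mathcal{O}(|W|)$ participants, this step is $\mathcal{O}(|W|^2\log|W|)$, which is $\mathcal{O}(|W|^3)$. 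Next, $\mathtt{compute\_optimal\_assignments}$ runs the Hungarian algorithm on the $R^* \times G^*$ cost matrix $\Delta$; with $R^*, G^* = \mathcal{O}(|W|)$ this is $\mathcal{O}(|W|^3)$ in the standard implementation. Finally $\mathtt{get\_optimal\_paths}$ just looks up at most $R^*$ precomputed paths, each of length $\mathcal{O}(|W|)$, costing $\mathcal{O}(|W|^2)$. Summing, \FnCOPForPar is $\mathcal{O}(|W|^3)$.

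Combining, \FnCPPForPar costs $\mathcal{O}(|W|^3)$ from line~1 plus $\mathcal{O}(R^3)$ from line~2 (Lemma~\ref{lemma:cfp_for_par_tc}); since $R \le |W_{free}| \le |W|$, the second term is absorbed, giving the claimed $\mathcal{O}(|W|^3)$.

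I expect the main obstacle to be pinning down the cost of $\mathtt{compute\_optimal\_costs}$ honestly: one must be careful whether the $R^* \cdot G^*$ pairs each trigger a separate A* search (which would naively give $\mathcal{O}(|W|^2)$ searches, i.e.\ $\mathcal{O}(|W|^3)$ and still fine) or whether, as I assumed, a single-source computation per participant is reused across all goals. Either accounting lands at $\mathcal{O}(|W|^3)$, but the proof should state which one it uses and justify that a graph search here is $\mathcal{O}(|W|)$-sized rather than $\mathcal{O}(|W|^2)$-sized. The secondary subtlety is justifying $R^*, G^* = \mathcal{O}(|W|)$ and $R = \mathcal{O}(|W|)$ so that the Hungarian-algorithm and Lemma~\ref{lemma:cfp_for_par_tc} bounds both collapse into $\mathcal{O}(|W|^3)$; this follows since robots occupy distinct obstacle-free cells and goals are obstacle-free cells, but it is worth one sentence to make the dominance argument airtight.
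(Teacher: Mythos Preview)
Your decomposition is correct and more explicit than what the paper does. The paper's own proof is a one-liner: it observes that \FnCPPForPar performs exactly the work of lines~1--4 of \FnGAMRCPPHorizon (Algorithm~2 of~\cite{DBLP:conf/iros/MitraS22}) and therefore inherits the $\mathcal{O}(|W|^3)$ bound established in Theorem~4.6 of that prior paper. In other words, the paper delegates the entire analysis to the earlier reference rather than re-deriving anything.

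Your route, by contrast, unpacks \FnCOPForPar step by step (A*/Dijkstra per participant, Hungarian on an $R^*\times G^*$ matrix, path extraction) and then invokes Lemma~\ref{lemma:cfp_for_par_tc} for \FnCFPForPar, closing with $R=\mathcal{O}(|W|)$. This is self-contained and arguably more informative for a reader who does not have~\cite{DBLP:conf/iros/MitraS22} at hand; the paper's approach is terser but forces the reader to chase the citation. One minor point: in the ``$R^*\cdot G^*$ separate A* runs'' accounting you mention, each search is $\mathcal{O}(|W|\log|W|)$ rather than $\mathcal{O}(|W|)$, so that branch naively gives $\mathcal{O}(|W|^3\log|W|)$; your single-source-per-participant accounting avoids this, and in any case the path-storage cost of filling $L_\Delta$ (up to $R^*G^*$ paths of length $\mathcal{O}(|W|)$) already sits at $\mathcal{O}(|W|^3)$, so the stated bound survives.
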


\begin{proof}
\FnCPPForPar (Algorithm \ref{algo:cpp_for_participants}) does the task performed by lines 1-4 of \FnGAMRCPPHorizon (Algorithm 2 in \cite{DBLP:conf/iros/MitraS22}). 
Therefore, \FnCPPForPar takes as much time as \FnGAMRCPPHorizon, which is $\mathcal{O}(|W|^3)$ (interested reader may read Theorem 4.6 of \cite{DBLP:conf/iros/MitraS22}). 
\end{proof}
\begin{lemma}
\label{theorem:asyncpp_tc}
\FnOnDemCPPHor takes $\mathcal{O}(|W|^3)$. 
\end{lemma}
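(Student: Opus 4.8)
The plan is to bound the running time of \FnOnDemCPPHor by walking through its body, sub-call by sub-call, and to show that the single call to \FnCPPForPar dominates everything else. The first fact I would record is that $R = \mathcal{O}(|W|)$: each robot sits in a distinct obstacle-free cell, so $R \le |W_{free}| \le |W|$. Hence any $\mathcal{O}(R^k)$ bound is also $\mathcal{O}(|W|^k)$, and I can freely trade $R$ for $|W|$ in the accounting.

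Next I would handle the three-way branch inside \FnOnDemCPPHor. When an unassigned goal exists ($W_g \setminus \widetilde{W_g} \neq \emptyset$), the cost is that of \FnCPPForPar, which is $\mathcal{O}(|W|^3)$ by Lemma~\ref{lemma:cpp_for_par_tc}; this is the term I expect to dominate. In the branch where only the current state of each participant is copied into $\Sigma'$, the cost is $\mathcal{O}(R) = \mathcal{O}(|W|)$, and the termination branch is $\mathcal{O}(1)$. It then remains to bound $\mathtt{combine\_paths}$, $\mathtt{determine\_horizon\_length}$, and \FGetEqLenPaths, each of which iterates over the $R$ robots and, per robot, copies, splits, or concatenates a path, or takes a minimum over path-length values; so each of these costs $\mathcal{O}(R \cdot \ell_{\max})$, where $\ell_{\max}$ upper-bounds the lengths of the paths manipulated in one horizon.

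The crux is therefore a polynomial bound on $\ell_{\max}$. I would argue that every optimal participant path $\varphi^i$ returned inside \FnCPPForPar is a shortest path in the state graph over locations in $W_g \cup W_c$, so $|\varphi^i| = \mathcal{O}(|W|)$ up to the constant number of orientations per cell; a collision-free path $\sigma'^i$ is $\varphi^i$ with at most a bounded number of halts prepended per higher-priority robot, so $|\sigma'^i| = \mathcal{O}(R\,|W|) = \mathcal{O}(|W|^2)$, and each stored remaining path $\sigma^i_{rem}$ is a suffix of such a path, hence no longer. Taking $\ell_{\max} = \mathcal{O}(|W|^2)$, the three bookkeeping routines run in $\mathcal{O}(R\,|W|^2) = \mathcal{O}(|W|^3)$, which does not exceed the cost of \FnCPPForPar. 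Summing the branch cost and the bookkeeping cost gives $\mathcal{O}(|W|^3)$ for \FnOnDemCPPHor, as claimed.

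The step I expect to be the main obstacle is precisely this bound on $\ell_{\max}$: one has to ensure the start-time offsets $\Upsilon[i]$ produced during the offsetting step do not accumulate super-polynomially down the priority order, and that charging the per-horizon concatenations into the full paths $\pi^i$ here (rather than amortizing them over the whole run) stays within the $\mathcal{O}(|W|^3)$ budget. If a sharper statement were required, I would amortize the $\pi^i$ concatenations across all horizons; for the per-horizon bound in the lemma, the coarse $\mathcal{O}(|W|^2)$ estimate on individual path lengths is enough.
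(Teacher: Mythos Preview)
Your proof follows the same line-by-line decomposition as the paper: walk through the body of \FnOnDemCPPHor, invoke Lemma~\ref{lemma:cpp_for_par_tc} for the dominant \FnCPPForPar call, bound the remaining branches and bookkeeping steps, and use $R = \mathcal{O}(|W|)$ to fold everything into $\mathcal{O}(|W|^3)$. The one substantive difference is in how the bookkeeping routines are costed: the paper simply asserts that $\mathtt{combine\_paths}$, $\mathtt{determine\_horizon\_length}$, and \FGetEqLenPaths each take $\mathcal{O}(R)$, implicitly treating per-path operations (split, copy, concatenate) as constant-time, whereas you charge $\mathcal{O}(R\cdot\ell_{\max})$ and then argue $\ell_{\max} = \mathcal{O}(|W|^2)$ via a bound on optimal path lengths plus accumulated offsets. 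Your more careful accounting still lands at $\mathcal{O}(|W|^3)$, so the conclusions agree; the paper's proof just does not engage with the offset-accumulation issue you correctly flag as the delicate point.
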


\begin{proof}
Finding the unassigned goals $W_g \setminus \widetilde{W_g}$ (line 39 of Algorithm \ref{algo:ondemcpp}) takes $\mathcal{O}(|W|)$ as both $W_g$ and $\widetilde{W_g}$ are bounded by $\mathcal{O}(|W|)$. 
As per Lemma ~\ref{lemma:cpp_for_par_tc}, \FnCPPForPar (line 40) takes $\mathcal{O}(|W|^3)$. 
Checking the existence of non-participants (line 41) takes $\mathcal{O}(R)$. 
The generation of the zero-length paths for the participants (lines 42-44) takes $\mathcal{O}(R^*)$. 
Combining all the paths into $\Sigma'_{all}$ (line 47) and determining the horizon length $\lambda$ (line 48) take $\mathcal{O}(R)$. 
Subsequently, generating the $\lambda$-length paths $\Sigma$ (lines 49 and 51-70) take $\mathcal{O}(R)$ as the \textbf{for} loop in \FGetEqLenPaths (lines 54-69) takes $\mathcal{O}(R)$. 
So, \FnOnDemCPPHor takes $\mathcal{O}((|W| + |W|^3) + (R + R^*) + 3 \cdot R)$, which can be written as $\mathcal{O}(|W|^3)$ as $R = \mathcal{O}(|W|)$. 
\end{proof}
}


\begin{theorem}
\label{theorem:asyncpp_total_tc}
\FnOnDemCPP's time complexity is $\mathcal{O}(|W|^4)$. 
\end{theorem}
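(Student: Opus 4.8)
The plan is to write the total cost as (number of horizons) $\times$ (cost of one horizon), bound each factor separately, and multiply. By Lemma~\ref{theorem:asyncpp_tc} the invocation of \FnOnDemCPPHor that carries out a horizon costs $\mathcal{O}(|W|^3)$. Everything else done during that horizon happens inside the $N_{act}\le R$ calls to \FRcvLV: each such call runs \FnUpdateGlobalview once, which touches $\mathcal{O}(|W|)$ cells, and the last call additionally runs \FnSndPathsToActRobs in $\mathcal{O}(R)$. Since $R=\mathcal{O}(|W|)$, a single horizon therefore costs $\mathcal{O}(|W|^3)+\mathcal{O}(|W|)\cdot\mathcal{O}(|W|)+\mathcal{O}(|W|)=\mathcal{O}(|W|^3)$. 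That \FnOnDemCPP performs only finitely many horizons is Theorem~\ref{theorem:asyncpp_complete_coverage}; the remaining work is to show this number is $\mathcal{O}(|W|)$.

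\textbf{Bounding the number of horizons.} I would use Lemma~\ref{lemma:asyncpp_horizon}: every horizon contains at least one robot that reaches its goal, and that robot is always active --- either an active participant heading to a freshly assigned goal, or an (inherently active) non-participant completing its remaining path at a reserved goal. Split the horizons accordingly. In the first case the assigned goal lies in $W_g\setminus\widetilde{W_g}$, hence is not yet in $W_c$, so reaching it strictly increases $|W_c|$; as $W_c$ is monotone non-decreasing and contained in $W_{free}$, at most $|W_{free}|\le|W|$ horizons are of this kind. In the second case the reserved goal is deleted from $\widetilde{W_g}$. Here I would argue by amortization: a cell enters $\widetilde{W_g}$ only when it is reserved for some active robot in \FGetEqLenPaths, and once that robot visits it the cell is marked covered and permanently removed from $W_g$ by the filter $W_g\gets W_g\setminus W_c$ in \FnUpdateGlobalview, so it can never be reserved again; hence each cell enters $\widetilde{W_g}$ at most once over the whole run, giving at most $|W|$ deletions and so at most $|W|$ horizons of this second kind. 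Altogether \FnOnDemCPP runs at most $2|W|=\mathcal{O}(|W|)$ horizons, and the total cost is $\mathcal{O}(|W|)\cdot\mathcal{O}(|W|^3)=\mathcal{O}(|W|^4)$.

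\textbf{Main obstacle.} Unlike \FnGAMRCPP, a horizon of \FnOnDemCPP need make no coverage progress at all: it may merely let one non-participant drain a single reserved goal while $W_c$ stays fixed, so the naive ``one new covered cell per horizon'' bound breaks. The crux is therefore the amortized argument that the lifetime churn of $\widetilde{W_g}$ is bounded by the workspace size, which hinges on the fact that a cell, once covered, is purged from $W_g$ for good and hence can never be re-reserved. Two secondary points also need care: checking that the goal-visiting robot of Lemma~\ref{lemma:asyncpp_horizon} can always be placed in exactly one of the two classes (an inactive robot cannot ``visit a goal'', since it sits on an already-covered cell), and confirming that resetting $\widetilde{W_g}$ to $\emptyset$ and rebuilding it from $\Sigma_{rem}$ inside \FGetEqLenPaths does not let a cell re-enter after it has been visited.
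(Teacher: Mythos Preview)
Your proposal is correct and follows essentially the same decomposition as the paper: bound the per-horizon cost by $\mathcal{O}(|W|^3)$ (via Lemma~\ref{theorem:asyncpp_tc} plus the bookkeeping in \FRcvLV) and multiply by an $\mathcal{O}(|W|)$ bound on the number of horizons derived from Lemma~\ref{lemma:asyncpp_horizon}. The paper simply asserts that the horizon count is at most $|W_g|$ because one goal is visited per horizon, whereas you spell out the amortization that makes this rigorous even when a non-participant's reserved goal was already covered; one minor discrepancy is that the paper charges \FnUpdateGlobalview\ $\mathcal{O}(|W|^2)$ per call (naive set operations) rather than your $\mathcal{O}(|W|)$, but this does not affect the final $\mathcal{O}(|W|^4)$ bound.
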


\begin{proof}
\shortversion
{
Please refer to \cite{DBLP:conf/arxiv/MitraS23} for the proof.
}
\longversion
{
Each call to \FnUpdateGlobalview (line 10 of Algorithm \ref{algo:ondemcpp}) takes $\mathcal{O}(|W|^2)$ as follows: three \textbf{for} loops (lines 21-23, 24-26, and 28-30) and computing $W_g,\ W_o$, and $W_u$ (lines 27, 31-32) take $\mathcal{O}(|W|^2)$. 
In a horizon, \mbox{$N_{req} = \mathcal{O}(R)$} (line 11). 
So, \FnUpdateGlobalview total takes $\mathcal{O}(R \cdot |W|^2)$, which is $\mathcal{O}(|W|^3)$. 
According to Lemma ~\ref{lemma:asyncpp_horizon}, at least one robot visits its goal in each horizon. 
In the worst case, in each horizon, only one robot visits its goal while the rest remain inactive or make progress toward their goals. 
As the number of horizons is at most $|W_g|$, the service \FRcvLV invokes \FnOnDemCPPHor (line 15) at most $|W_g|$ times. 
So, \FnOnDemCPP takes $\mathcal{O}(|W_g| \cdot (|W|^3 + |W|^3))$, which is $\mathcal{O}(|W|^4)$. 
}
\end{proof}

\section{Evaluation}
\label{sec:evaluation}

\shortversion
{
    \begin{table*}[t]
    \caption{Experimental results}
    \label{tab:experimental_results}
    \centering
    \small
    \resizebox{0.99\textwidth}{!}{      
        \begin{tabular}{@{}lccccrrrrrrrrrrc@{}}
            \toprule
            
             &  &  &  &  & \multicolumn{2}{c}{$T_c\ (\si{\second})$} & \multicolumn{2}{c}{$T_p\ (\si{\second})$} & \multicolumn{2}{c}{$T_{Halt}\ (\si{\second})$} & \multicolumn{2}{c}{$T_{non-Halt}\ (\si{\second})$} & \multicolumn{2}{c}{$T_m\ (\si{\second})$} & \textbf{Mission} \\
             
            \cmidrule(lr){6-7}
            \cmidrule(lr){8-9}
            \cmidrule(lr){10-11}
            \cmidrule(lr){12-13}
            \cmidrule(lr){14-15}
            
            $M$ & \multicolumn{2}{c}{\textit{Workspace}} & $R$ & $R^*$ & \FnGAMRCPP & \FnOnDemCPP & \FnGAMRCPP & \FnOnDemCPP & \FnGAMRCPP & \FnOnDemCPP & \FnGAMRCPP & \FnOnDemCPP & \FnGAMRCPP & \FnOnDemCPP & \textbf{speed up} \\
            
            \midrule
            
            \multirow{12}{*}{\begin{turn}{90}Quadcopter\end{turn}} & \multirow{3}{*}{\shortstack{w\_woundedcoast\\  $578 \times 642$\\ (34,020)}} & \multirow{3}{*}{\includegraphics[scale=0.03]{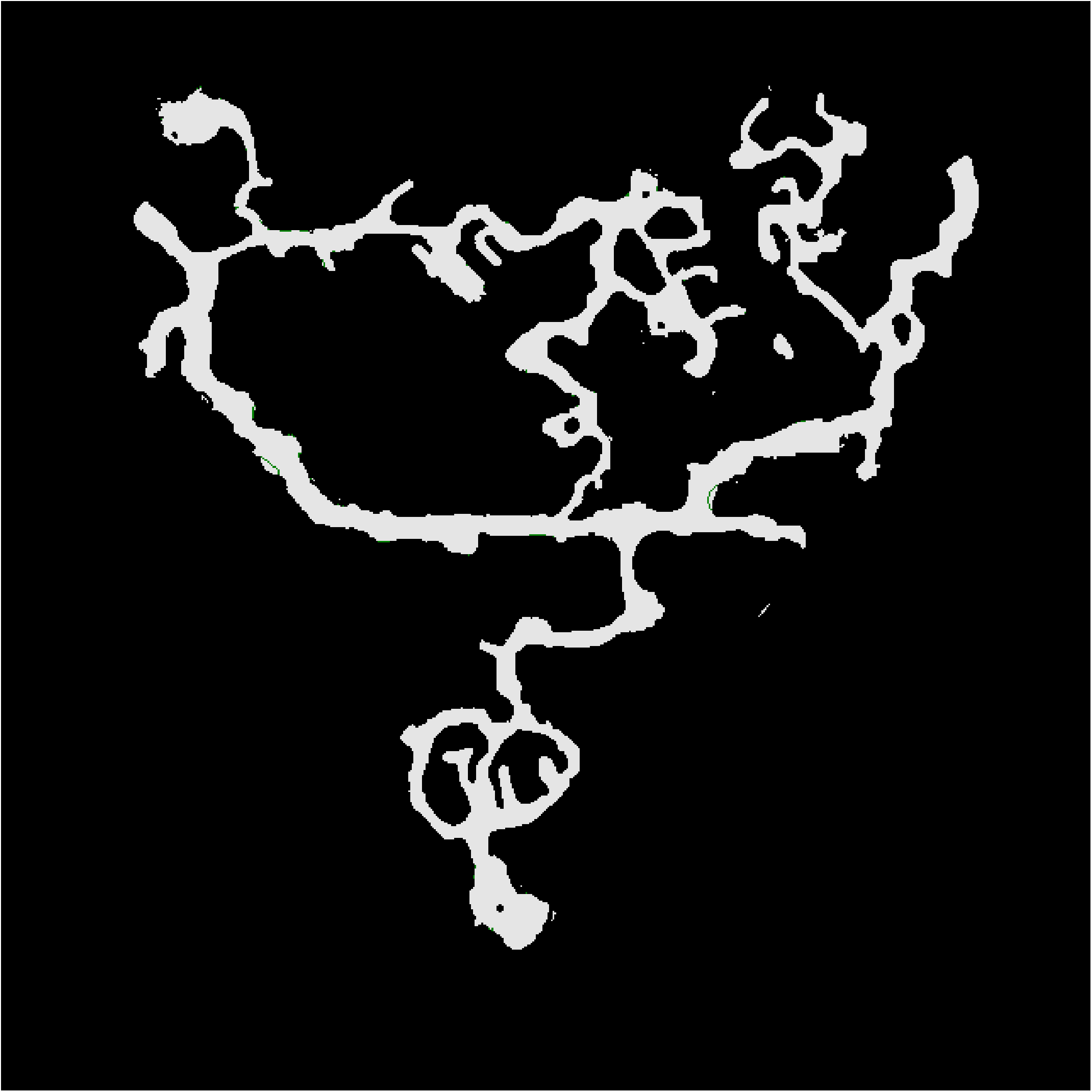}} & 128 & \phantom{0}80.4 & 619.5 & 362.2 & 696.2 & 1042.0 & 230.8 & 487.4 & 465.4 & 554.5 & 1315.7 & 1404.2 & 0.9 \\
             &  &  & 256 & 170.7 & 1091.3 & 521.7 & 416.8 & 804.2 & 182.3 & 512.2 & 234.5 & 291.9 & 1508.1 & 1325.9 & \textbf{1.1} \\
             &  &  & 512 & 361.5 & 1815.2 & 569.1 & 231.8 & 557.2 & 115.7 & 404.1 & 116.1 & 153.0 & 2047.0 & 1126.3 & \textbf{1.8} \\
             
            \cmidrule(lr){2-16}            
            
             & \multirow{3}{*}{\shortstack{Paris\_1\_256\\ $256 \times 256$\\ (47,240)}} & \multirow{3}{*}{\includegraphics[scale=0.03]{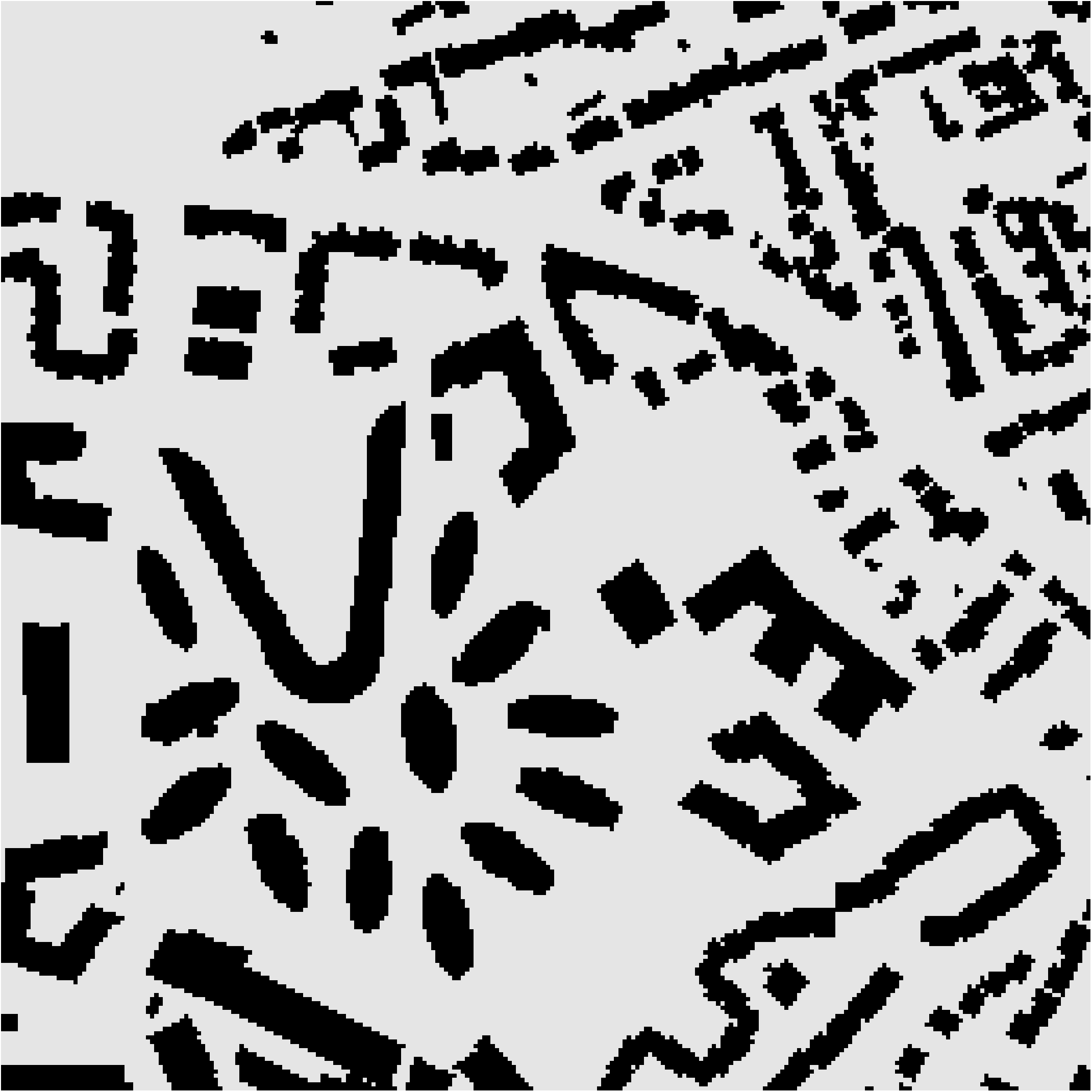}} & 128 & \phantom{0}81.3 & 513.5 & 270.4 & 815.2 & 962.6 & 214.9 & 298.2 & 600.3 & 664.3 & 1328.7 & 1233.0 & \textbf{1.1} \\
             &  &  & 256 & 157.2 & 1461.9 & 537.2 & 508.0 & 656.2 & 184.2 & 273.0 & 323.8 & 383.1 & 1969.9 & 1193.4 & \textbf{1.7} \\
             &  &  & 512 & 335.6 & 3338.8 & 894.0 & 301.1 & 455.7 & 137.6 & 247.1 & 163.5 & 208.5 & 3639.9 & 1349.7 & \textbf{2.7} \\
             
            \cmidrule(lr){2-16}            
            
             & \multirow{3}{*}{\shortstack{Berlin\_1\_256\\ $256 \times 256$\\ (47,540)}} & \multirow{3}{*}{\includegraphics[scale=0.03]{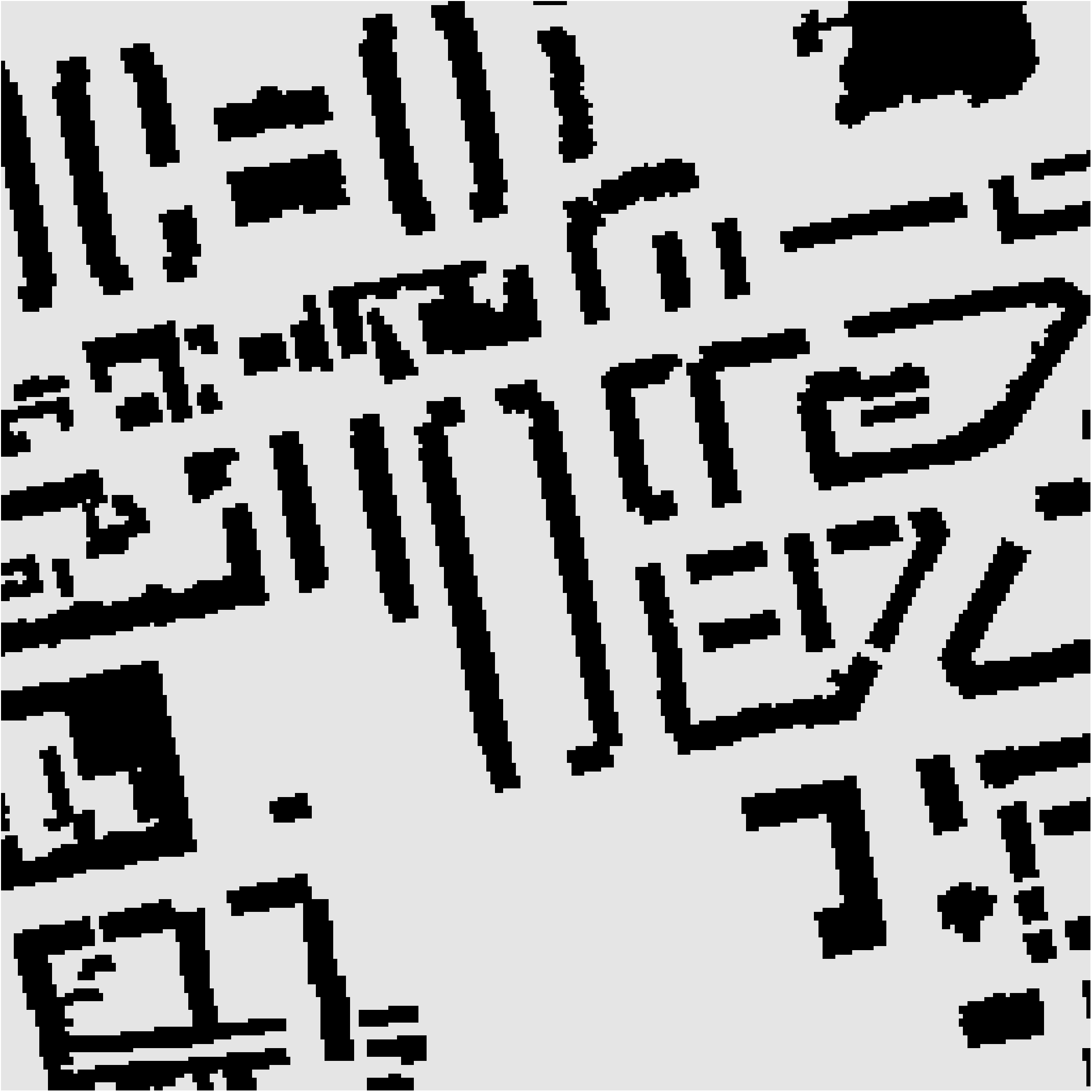}} & 128 & \phantom{0}83.2 & 489.3 & 238.4 & 752.7 & 898.2 & 169.9 & 272.9 & 582.8 & 625.2 & 1242.0 & 1136.6 & \textbf{1.1} \\
             &  &  & 256 & 167.4 & 1323.9 & 515.2 & 564.6 & 669.6 & 258.0 & 301.0 & 306.6 & 368.5 & 1888.5 & 1184.8 & \textbf{1.6} \\
             &  &  & 512 & 365.2 & 3196.5 & 856.0 & 434.5 & 524.9 & 275.8 & 324.8 & 158.7 & 200.0 & 3631.0 & 1380.9 & \textbf{2.6} \\
             
            \cmidrule(lr){2-16}            
            
             & \multirow{3}{*}{\shortstack{Boston\_0\_256\\ $256 \times 256$\\ (47,768)}} & \multirow{3}{*}{\includegraphics[scale=0.03]{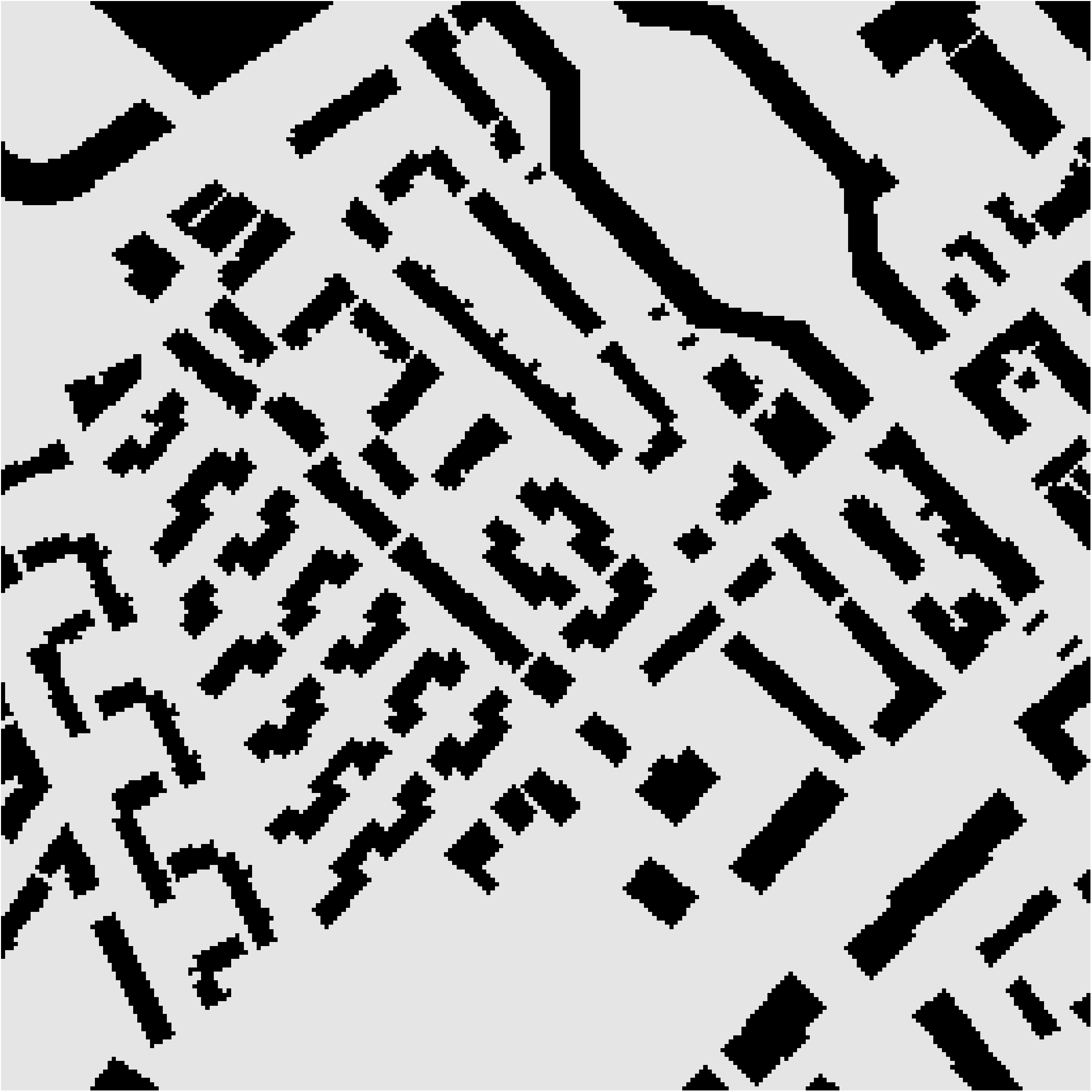}} & 128 & \phantom{0}79.6 & 509.5 & 250.6 & 773.6 & 928.0 & 169.4 & 262.6 & 604.2 & 665.3 & 1283.1 & 1178.6 & \textbf{1.1} \\
             &  &  & 256 & 157.8 & 1220.1 & 455.7 & 460.3 & 625.9 & 142.5 & 252.0 & 317.8 & 373.8 & 1680.4 & 1081.6 & \textbf{1.6} \\
             &  &  & 512 & 345.3 & 2454.7 & 628.7 & 252.9 & 428.7 & 95.6 & 236.7 & 157.3 & 191.9 & 2707.6 & 1057.4 & \textbf{2.6} \\
             
            \midrule
            
            \multirow{12}{*}{\begin{turn}{90}TurtleBot\end{turn}} & \multirow{3}{*}{\shortstack{maze-128-128-2\\ $128 \times 128$\\ (10,858)}} & \multirow{3}{*}{\includegraphics[scale=0.03]{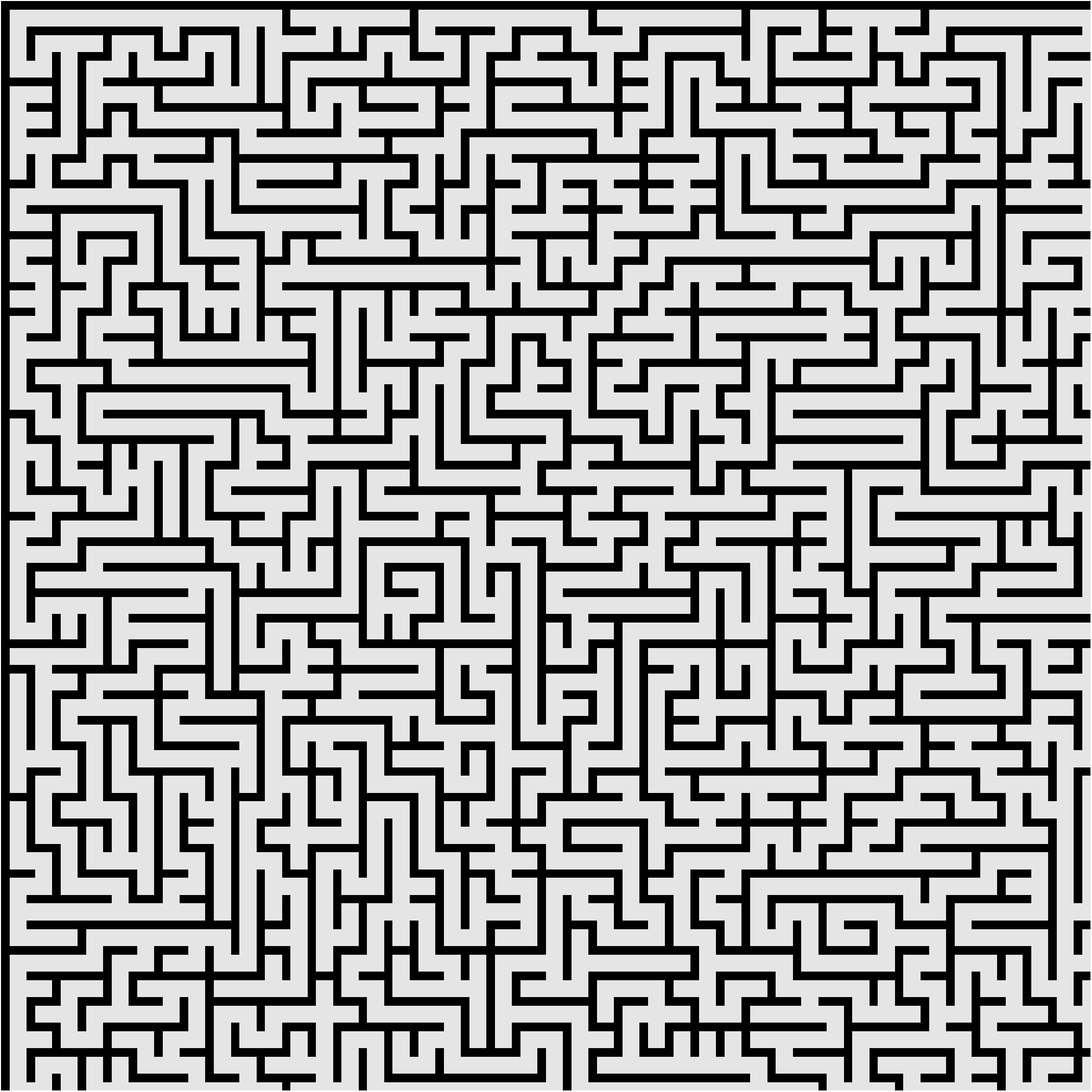}} & 128 & \phantom{0}75.5 & 90.8 & 49.5 & 451.9 & 791.5 & 217.1 & 509.8 & 234.8 & 281.6 & 542.7 & 841.0 & 0.6 \\
             &  &  & 256 & 174.7 & 193.6 & 68.1 & 218.6 & 417.6 & 115.7 &	294.8 & 102.9 & 122.7 & 412.2 & 485.7 & 0.8 \\
             &  &  & 512 & 378.4 & 411.1 & 124.1 & 125.9 & 224.3 & 79.4 & 165.9 & 46.5 & 58.3 & 537.0 & 348.4 & \textbf{1.5} \\
             
            \cmidrule(lr){2-16}            
            
             & \multirow{3}{*}{\shortstack{den520d\\ $257 \times 256$\\ (28,178)}} & \multirow{3}{*}{\includegraphics[scale=0.03]{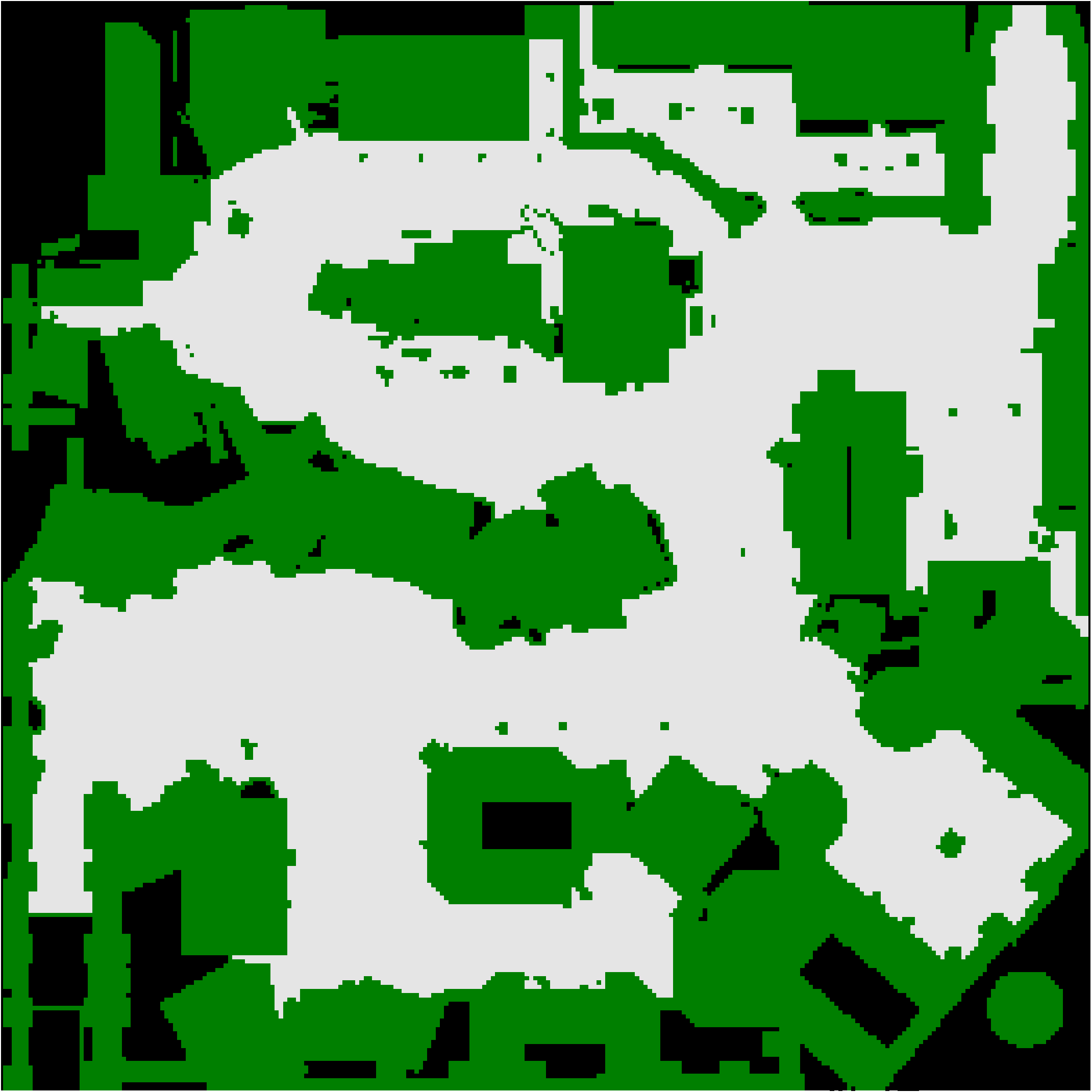}} & 128 & \phantom{0}71.9 & 341.8 & 147.6 & 556.4 & 773.1 & 151.6 & 286.1 & 404.8 & 486.9 & 898.2 & 920.7 & \textbf{1.0} \\
             &  &  & 256 & 153.8 & 685.4 & 253.9 & 324.2 & 506.4 & 118.5 & 247.1 & 205.7 & 259.3 & 1009.6 & 760.3 & \textbf{1.3} \\
             &  &  & 512 & 331.1 & 1292.4 & 316.0 & 188.7 & 398.6 & 82.8 & 252.9 & 105.9 & 145.6 & 1481.1 & 714.6 & \textbf{2.1} \\
             
            \cmidrule(lr){2-16}            
            
             & \multirow{3}{*}{\shortstack{warehouse-20-40-10-2-2\\ $164 \times 340$\\ (38,756)}} & \multirow{3}{*}{\includegraphics[scale=0.03]{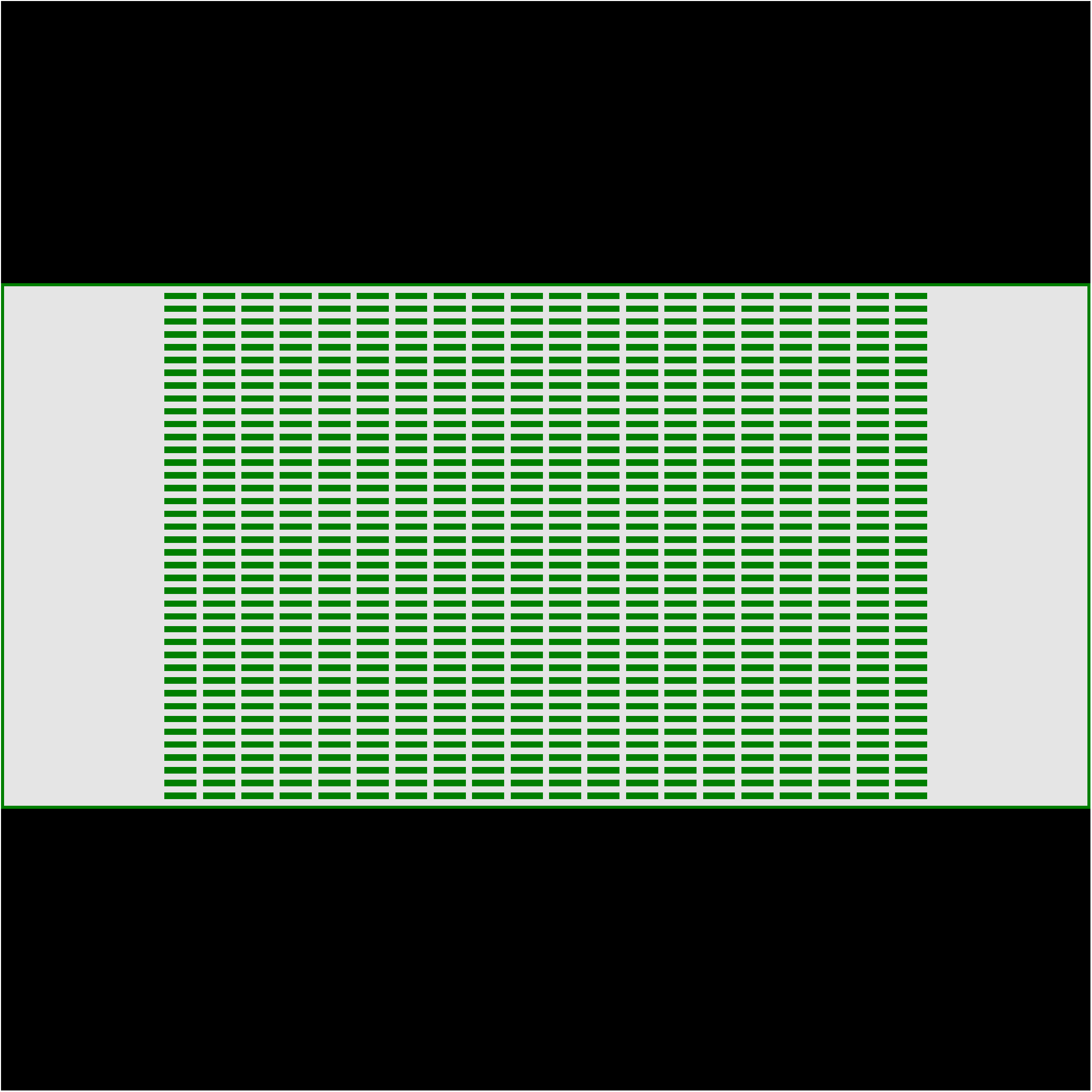}} & 128 & \phantom{0}81.2 & 467.1 & 211.3 & 595.6 & 707.6 & 115.2 & 187.2 & 480.4 & 520.3 & 1062.7 & 918.9 & \textbf{1.2} \\
             &  &  & 256 & 149.3 & 1009.5 & 336.0 & 361.6 & 502.5 & 102.2 & 182.8 & 259.4 & 319.6 & 1371.1 & 838.5 & \textbf{1.6} \\
             &  &  & 512 & 335.5 & 1646.2 & 394.7 & 208.9 & 375.2 & 75.5 & 204.8 & 133.4 & 170.3 & 1855.1 & 769.9 & \textbf{2.4} \\
             
            \cmidrule(lr){2-16}            
            
             & \multirow{3}{*}{\shortstack{brc202d\\ $481 \times 530$\\ (43,151)}} & \multirow{3}{*}{\includegraphics[scale=0.03]{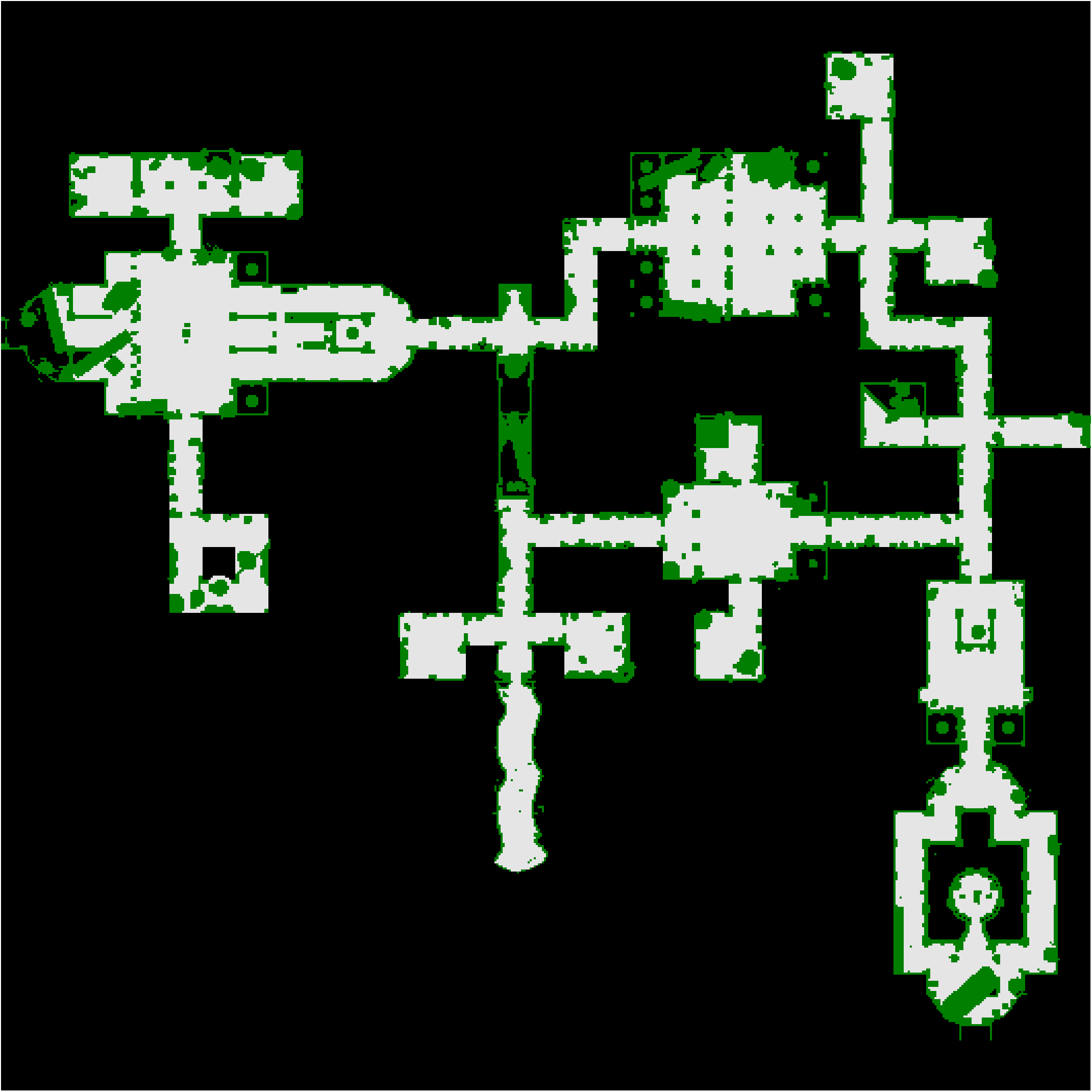}} & 128 & \phantom{0}65.1 & 1019.0 & 395.2 & 915.9 & 1444.9 & 242.5 & 587.4 & 673.4 & 857.4 & 1934.9 & 1840.1 & \textbf{1.1} \\
             &  &  & 256 & 142.0 & 1880.7 & 697.7 & 512.5 & 972.5 & 177.9 & 510.9 & 334.6 & 461.5 & 2393.2 & 1670.2 & \textbf{1.4} \\
             &  &  & 512 & 309.6 & 3026.8 & 705.6 & 302.6 & 780.9 & 133.9 & 514.6 & 168.7 & 266.2 & 3329.4 & 1486.5 & \textbf{2.2} \\
            \bottomrule
        \end{tabular}
   }
\vspace{-0.3cm}
\end{table*}
}
\longversion
{
    \begin{table*}[t]
    \caption{Experimental results}
    \label{tab:experimental_results}
    \centering
    \small
    \resizebox{0.99\textwidth}{!}{
        \begin{tabular}{@{}lccccrrrrrrc@{}}
            \toprule
            
             &  &  &  &  & \multicolumn{2}{c}{$T_c\ (\si{\second})$} & \multicolumn{2}{c}{$T_p\ (\si{\second})$} & \multicolumn{2}{c}{$T_m\ (\si{\second})$} & \textbf{Mission} \\
             
            \cmidrule(lr){6-7}
            \cmidrule(lr){8-9}
            \cmidrule(lr){10-11}
            
            $M$ & \multicolumn{2}{c}{\textit{Workspace}} & $R$ & $R^*$ & \FnGAMRCPP & \FnOnDemCPP & \FnGAMRCPP & \FnOnDemCPP & \FnGAMRCPP & \FnOnDemCPP & \textbf{speed up} \\
            
            \midrule
            
            \multirow{12}{*}{\begin{turn}{90}Quadcopter\end{turn}} & \multirow{3}{*}{\shortstack{W1:\\ w\_woundedcoast\\ $578 \times 642$\ (34,020)}} & \multirow{3}{*}{\includegraphics[scale=0.029]{figures/workspaces/w_woundedcoast.pdf}} & 128 & \phantom{0}80.4 (62.9 \%) $\pm$ 3.8 & 619.5 $\pm$ \phantom{0}97.9 & 362.2 $\pm$ \phantom{0}58.4 & 696.2 $\pm$ 55.5 & 1042.0 $\pm$ 149.5 & 1315.7 $\pm$ 148.6 & 1404.2 $\pm$ 167.1 & 0.9 \\
             &  &  & 256 & 170.7 (66.7 \%) $\pm$ 5.7 & 1091.3 $\pm$ 255.3 & 521.7 $\pm$ 150.2 & 416.8 $\pm$ 95.0 & 804.2 $\pm$ 250.9 & 1508.1 $\pm$ 345.3 & 1325.9 $\pm$ 383.0 & \textbf{1.1} \\
             &  &  & 512 & 361.5 (70.6 \%) $\pm$ 9.7 & 1815.2 $\pm$ 359.4 & 569.1 $\pm$ 168.6 & 231.8 $\pm$ 33.0 & 557.2 $\pm$ 138.9 & 2047.0 $\pm$ 388.0 & 1126.3 $\pm$ 255.4 & \textbf{1.8} \\
             
            \cmidrule(lr){2-12}            
            
             & \multirow{3}{*}{\shortstack{W2:\\ Paris\_1\_256\\ $256 \times 256$\ (47,240)}} & \multirow{3}{*}{\includegraphics[scale=0.029]{figures/workspaces/Paris_1_256.pdf}} & 128 & \phantom{0}81.3 (63.6 \%) $\pm$ 2.6 & 513.5 $\pm$ \phantom{0}38.4 & 270.4 $\pm$ \phantom{0}37.8 & 815.2 $\pm$ 77.4 & 962.6 $\pm$ 100.8 & 1328.7 $\pm$ 109.8 & 1233.0 $\pm$ 118.8 & \textbf{1.1} \\
             &  &  & 256 & 157.2 (61.4 \%) $\pm$ 3.7 & 1461.9 $\pm$ 164.0 & 537.2 $\pm$ \phantom{0}79.6 & 508.0 $\pm$ 22.0 & 656.2 $\pm$ \phantom{0}49.4 & 1969.9 $\pm$ 172.2 & 1193.4 $\pm$ 110.8 & \textbf{1.7} \\
             &  &  & 512 & 335.6 (65.6 \%) $\pm$ 7.0 & 3338.8 $\pm$ 321.9 & 894.0 $\pm$ 113.6 & 301.1 $\pm$ 19.2 & 455.7 $\pm$ \phantom{0}50.1 & 3639.9 $\pm$ 333.6 & 1349.7 $\pm$ 117.1 & \textbf{2.7} \\
             
            \cmidrule(lr){2-12}            
            
             & \multirow{3}{*}{\shortstack{W3:\\ Berlin\_1\_256\\ $256 \times 256$\ (47,540)}} & \multirow{3}{*}{\includegraphics[scale=0.029]{figures/workspaces/Berlin_1_256.pdf}} & 128 & \phantom{0}83.2 (65.1 \%) $\pm$ 2.5 & 489.3 $\pm$ \phantom{0}36.5 & 238.4 $\pm$ \phantom{0}17.2 & 752.7 $\pm$ \phantom{0}32.6 & 898.2 $\pm$ \phantom{0}63.9 & 1242.0 $\pm$ \phantom{0}67.4 & 1136.6 $\pm$ \phantom{0}65.0 & \textbf{1.1} \\
             &  &  & 256 & 167.4 (65.4 \%) $\pm$ 12.1 & 1323.9 $\pm$ 154.2 & 515.2 $\pm$ \phantom{0}69.8 & 564.6 $\pm$ 132.5 & 669.6 $\pm$ \phantom{0}48.9 & 1888.5 $\pm$ 234.1 & 1184.8 $\pm$ 103.3 & \textbf{1.6} \\
             &  &  & 512 & 365.2 (71.3 \%) $\pm$ 34.5 & 3196.5 $\pm$ 329.4 & 856.0 $\pm$ 115.1 & 434.5 $\pm$ 209.9 & 524.9 $\pm$ 140.0 & 3631.0 $\pm$ 434.1 & 1380.9 $\pm$ 238.9 & \textbf{2.6} \\
             
            \cmidrule(lr){2-12}            
            
             & \multirow{3}{*}{\shortstack{W4:\\ Boston\_0\_256\\ $256 \times 256$\ (47,768)}} & \multirow{3}{*}{\includegraphics[scale=0.029]{figures/workspaces/Boston_0_256.pdf}} & 128 & \phantom{0}79.6 (62.2 \%) $\pm$ 4.5 & 509.5 $\pm$ \phantom{0}36.4 & 250.6 $\pm$ \phantom{0}30.1 & 773.6 $\pm$ 61.7 & 928.0 $\pm$ 100.6 & 1283.1 $\pm$ \phantom{0}94.0 & 1178.6 $\pm$ 116.3 & \textbf{1.1} \\
             &  &  & 256 & 157.8 (61.7 \%) $\pm$ 9.3 & 1220.1 $\pm$ 215.0 & 455.7 $\pm$ \phantom{0}61.2 & 460.3 $\pm$ 44.8 & 625.9 $\pm$ \phantom{0}46.9 & 1680.4 $\pm$ 256.8 & 1081.6 $\pm$ \phantom{0}92.4 & \textbf{1.6} \\
             &  &  & 512 & 345.3 (67.5 \%) $\pm$ 7.7 & 2454.7 $\pm$ 525.7 & 628.7 $\pm$ 119.3 & 252.9 $\pm$ 23.5 & 428.7 $\pm$ \phantom{0}69.2 & 2707.6 $\pm$ 546.1 & 1057.4 $\pm$ 182.2 & \textbf{2.6} \\
             
            \midrule
            
            \multirow{12}{*}{\begin{turn}{90}TurtleBot\end{turn}} & \multirow{3}{*}{\shortstack{W5:\\ maze-128-128-2\\ $128 \times 128$\ (10,858)}} & \multirow{3}{*}{\includegraphics[scale=0.029]{figures/workspaces/maze-128-128-2.pdf}} & 128 & \phantom{0}75.5 (59.0 \%) $\pm$ 7.4 & 90.8 $\pm$ 19.3 & 49.5 $\pm$ 19.8 & 451.9 $\pm$ 71.7 & 791.5 $\pm$ 202.1 & 542.7 $\pm$ 88.6 & 841.0 $\pm$ 217.7 & 0.6 \\
             &  &  & 256 & 174.7 (68.3 \%) $\pm$ 7.5 & 193.6 $\pm$ 27.1 & 68.1 $\pm$ 16.9 & 218.6 $\pm$ 35.6 & 417.6 $\pm$ \phantom{0}97.4 & 412.2 $\pm$ 58.7 & 485.7 $\pm$ 113.7 & 0.8 \\
             &  &  & 512 & 378.4 (73.9 \%) $\pm$ 6.8 & 411.1 $\pm$ 64.7 & 124.1 $\pm$ 23.9 & 125.9 $\pm$ 24.8 & 224.3 $\pm$ \phantom{0}32.1 & 537.0 $\pm$ 71.1 & 348.4 $\pm$ \phantom{0}54.1 & \textbf{1.5} \\
             
            \cmidrule(lr){2-12}            
            
             & \multirow{3}{*}{\shortstack{W6:\\ den520d\\ $257 \times 256$\ (28,178)}} & \multirow{3}{*}{\includegraphics[scale=0.029]{figures/workspaces/den520d.pdf}} & 128 & \phantom{0}71.9 (56.2 \%) $\pm$ 3.2 & 341.8 $\pm$ \phantom{0}50.5 & 147.6 $\pm$ 17.7 & 556.4 $\pm$ 71.4 & 773.1 $\pm$ 106.9 & 898.2 $\pm$ 121.2 & 920.7 $\pm$ 117.7 & \textbf{1.0} \\
             &  &  & 256 & 153.8 (60.1 \%) $\pm$ 3.7 & 685.4 $\pm$ 121.5 & 253.9 $\pm$ 63.1 & 324.2 $\pm$ 56.5 & 506.4 $\pm$ \phantom{0}57.4 & 1009.6 $\pm$ 175.2 & 760.3 $\pm$ 103.4 & \textbf{1.3} \\
             &  &  & 512 & 331.1 (64.7 \%) $\pm$ 6.6 & 1292.4 $\pm$ 304.0 & 316.0 $\pm$ 29.2 & 188.7 $\pm$ \phantom{0}9.2 & 398.6 $\pm$ \phantom{0}37.9 & 1481.1 $\pm$ 301.9 & 714.6 $\pm$ \phantom{0}51.7 & \textbf{2.1} \\
             
            \cmidrule(lr){2-12}            
            
             & \multirow{3}{*}{\shortstack{W7:\\ warehouse-20-40-10-2-2\\ $164 \times 340$\ (38,756)}} & \multirow{3}{*}{\includegraphics[scale=0.029]{figures/workspaces/warehouse-20-40-10-2-2.pdf}} & 128 & \phantom{0}81.2 (63.5 \%) $\pm$ 2.6 & 467.1 $\pm$ \phantom{0}40.0 & 211.3 $\pm$ 22.2 & 595.6 $\pm$ 43.1 & 707.6 $\pm$ \phantom{0}82.5 & 1062.7 $\pm$ \phantom{0}75.2 & 918.9 $\pm$ \phantom{0}98.3 & \textbf{1.2} \\
             &  &  & 256 & 149.3 (58.3 \%) $\pm$ 8.1 & 1009.5 $\pm$ 165.2 & 336.0 $\pm$ 31.2 & 361.6 $\pm$ 25.2 & 502.5 $\pm$ \phantom{0}78.2 & 1371.1 $\pm$ 185.3 & 838.5 $\pm$ 100.4 & \textbf{1.6} \\
             &  &  & 512 & 335.5 (65.5 \%) $\pm$ 7.2 & 1646.2 $\pm$ 343.7 & 394.7 $\pm$ 36.6 & 208.9 $\pm$ 18.0 & 375.2 $\pm$ \phantom{0}29.5 & 1855.1 $\pm$ 357.5 & 769.9 $\pm$ \phantom{0}54.7 & \textbf{2.4} \\
             
            \cmidrule(lr){2-12}            
            
             & \multirow{3}{*}{\shortstack{W8:\\ brc202d\\ $481 \times 530$\ (43,151)}} & \multirow{3}{*}{\includegraphics[scale=0.029]{figures/workspaces/brc202d.pdf}} & 128 & \phantom{0}65.1 (50.9 \%) $\pm$ 3.5 & 1019.0 $\pm$ 160.9 & 395.2 $\pm$ \phantom{0}70.7 & 915.9 $\pm$ 116.4 & 1444.9 $\pm$ 232.0 & 1934.9 $\pm$ 274.3 & 1840.1 $\pm$ 273.6 & \textbf{1.1} \\
             &  &  & 256 & 142.0 (55.5 \%) $\pm$ 8.7 & 1880.7 $\pm$ 275.0 & 697.7 $\pm$ 152.3 & 512.5 $\pm$ \phantom{0}60.8 & 972.5 $\pm$ 147.1 & 2393.2 $\pm$ 326.5 & 1670.2 $\pm$ 230.3 & \textbf{1.4} \\
             &  &  & 512 & 309.6 (60.5 \%) $\pm$ 9.5 & 3026.8 $\pm$ 326.8 & 705.6 $\pm$ 112.5 & 302.6 $\pm$ \phantom{0}21.6 & 780.9 $\pm$ \phantom{0}98.5 & 3329.4 $\pm$ 322.3 & 1486.5 $\pm$ 163.1 & \textbf{2.2} \\
            \bottomrule
        \end{tabular}
   }
\vspace{-0.3cm}
\end{table*}
}

\subsection{Implementation and Experimental Setup}
\label{subsec:implementation_and_experimental_setup}

We implement \FnRobot (Algorithm \ref{algo:robot}) and \FnOnDemCPP (Algorithm \ref{algo:ondemcpp}) in a common ROS \longversion{\cite{key_ros}} package\footnote{https://github.com/iitkcpslab/OnDemCPP} 
and run it in a computer having Intel\textsuperscript{\textregistered} Core\textsuperscript{\texttrademark} i7-4770 CPU @ 3.40 GHz and 16 GB of RAM. 
For experimentation, we consider eight large $2$D grid benchmark workspaces from \cite{DBLP:conf/socs/SternSFK0WLA0KB19} of varying size and obstacle density. 
In each workspace, we \textit{incrementally} deploy $R \in \{128, 256, 512\}$ robots and repeat each experiment $10$ times with different initial deployments of the robots to report their mean in the performance metrics (standard deviations are available in~\cite{DBLP:conf/arxiv/MitraS23}).

\subsubsection{Robots for evaluation}

We consider both TurtleBot (introduced before) and Quadcopter in the experimentation. 
A Quadcopter's state $s^i_j = (x, y) \in W$ is its location in the workspace. 
Its set of motion primitives $M$ = \{$\mathtt{Halt (H)}$, $\mathtt{MoveEast (ME)}$, $\mathtt{MoveNorth (MN)}$, $\mathtt{MoveWest (MW)}$, $\mathtt{MoveSouth (MS)}$\}, where $\mathtt{ME, MN, MW}$, and $\mathtt{MS}$ move it to the immediate \textit{east, north, west}, and \textit{south} cell, respectively. 

\subsubsection{Evaluation metrics}

We compare \FnOnDemCPP with \FnGAMRCPP\cite{DBLP:conf/iros/MitraS22} in terms of \textit{the mission time} ($T_m$), which is the sum of \textit{the total computation time} ($T_c$) and \textit{the total path execution time} ($T_p$) while ignoring \textit{the total communication time} between the CP and the robots. 
\textit{Total computation time} ($T_c$) is the sum of the computation times spent on \FnOnDemCPPHor across all horizons. 
\textit{Total path execution time} ($T_p$) can be expressed as $T_p = \Lambda \times \tau$, where $\Lambda$ is \textit{the total horizon length} (i.e., the sum of the horizon lengths). 
We can also express $T_p$ as the sum of $T_{Halt}$ and $T_{non-Halt}$, where $T_{Halt}$ is the duration for which the robots remain stationary while following respective paths, i.e., when they execute $\mathtt{Halt(H)}$ moves, and $T_{non-Halt}$ is the duration for which the robots move, i.e., when they do not. 

In our evaluation, we take the path cost as the number of moves the corresponding robot performs, where each move takes $\tau = 1\si{\second}$. 
This is realistic because the maximum translational velocity of a TurtleBot2 is $0.65\si{\meter}/\si{\second}$~\cite{key_turtlebot2}, and that of a Quadcopter is $\sim$ $16\si{\meter}/\si{\second}$~\cite{dji_drones}. 
Thus, to ensure $\tau = 1\si{\second}$, we can keep the grid cell size for a TurtleBot2 up to $0.65\si{\meter}$, which is sufficient considering its size. 
Similarly, a grid cell size of up to $16\si{\meter}$ is sufficient for most practical applications involving Quadcopters. 
We demonstrate this in the real experiments described in Section \ref{subsec:sim_and_real}.

\subsection{Results and Analysis}
\label{subsec:results_analysis}

We show the experimental results in Table \ref{tab:experimental_results}, where we list workspaces in increasing order of their obstacle-free cell count $|W_{free}|$ for each type of robot. 
In the table, $R^*$ denotes the average number of participants over all horizons. 

\longversion
{
    \subsubsection{Effects of the non-participants on the participants}
\label{subsubsec:nonpar_par}


\begin{figure*}[t]
	\centering
	\includegraphics[scale=0.38]{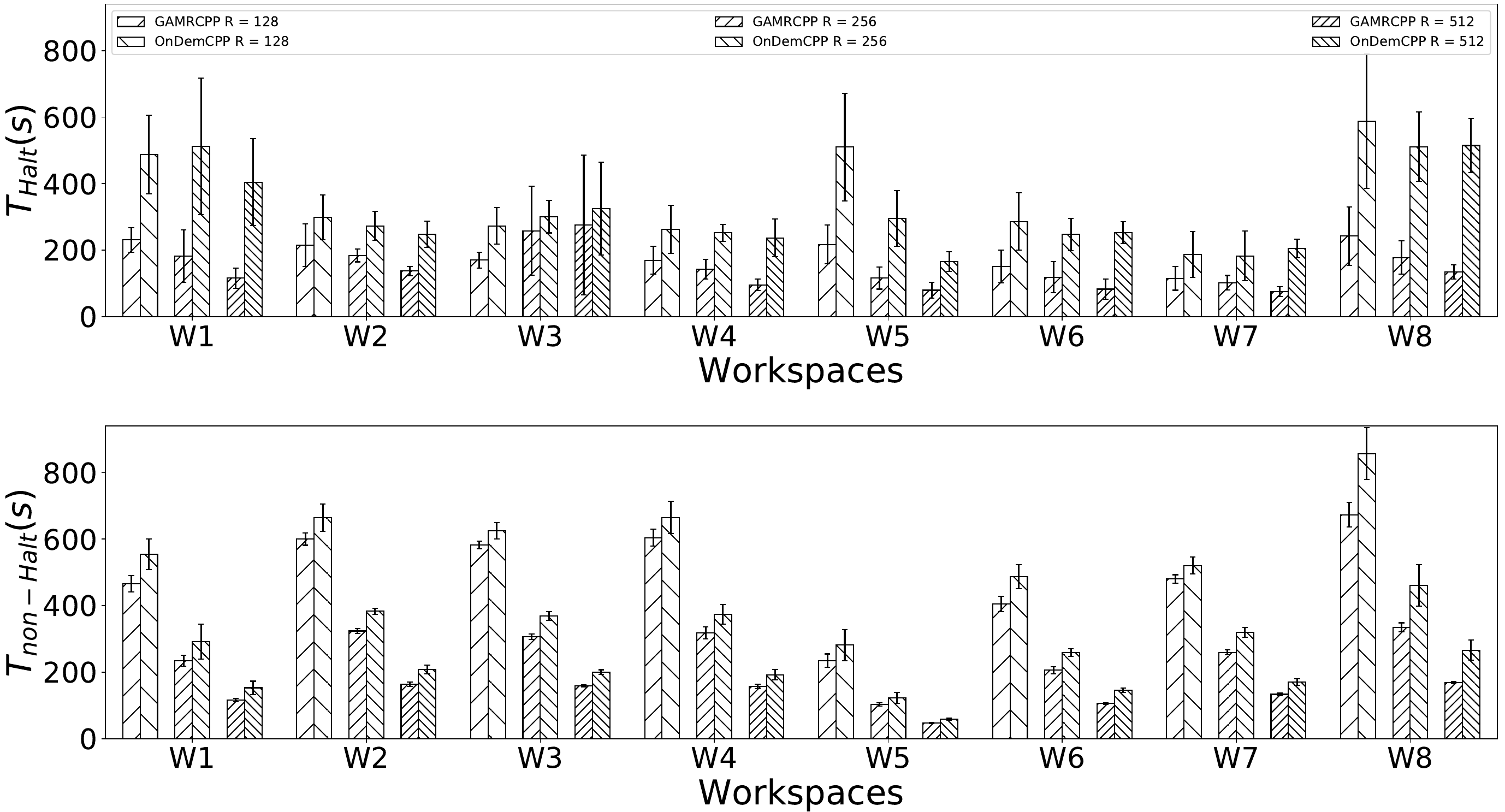}		
	\caption{Decomposition of $T_p$ into $T_{Halt}$ and $T_{non-Halt}$}
	\label{fig:halt_nonhalt}
\end{figure*}

Recall that the CP keeps the goals $\widetilde{W_g}$ reserved for the non-participants $\overline{I_{par}}$ in a horizon. 
So, the CP cannot assign these reserved goals to the participants $I_{par}$ in \FnCOPForPar (Algorithm \ref{algo:cop_for_par}) even if some participants can visit some reserved goals in a more cost-optimal manner than their corresponding non-participants. 
Consequently, the CP assigns the participants to the unreserved goals $W_g \setminus \widetilde{W_g}$, which could be far away. 
Thus, the participants' optimal paths $\Phi$, consisting of non-$\mathtt{Halt}$ moves, become marginally longer in \FnOnDemCPP as $T_{non-Halt}$ justifies in Figure~\ref{fig:halt_nonhalt}. 



\begin{figure*}[t]
	\centering
	\includegraphics[scale=0.38]{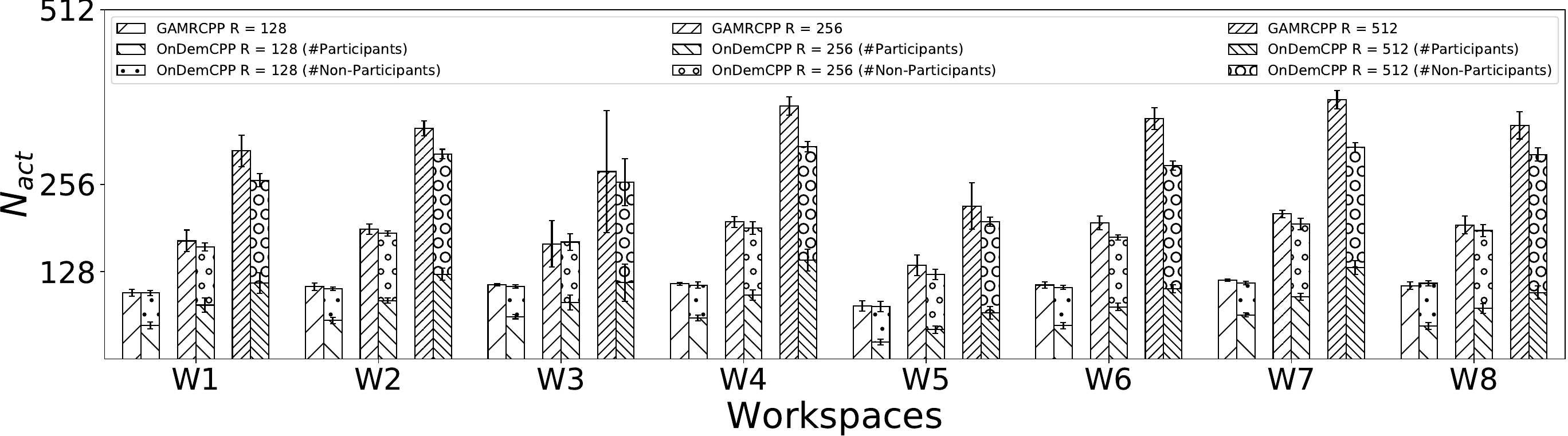}
	\caption{The number of active robots ($N_{act}$) per horizon}
	\label{fig:active_count}
\end{figure*}

Also, recall that the CP does not change the non-participants' remaining paths $\Sigma_{rem}$ in a horizon. 
So, it makes the participants' optimal paths $\Phi$ inter-robot collision-free in \FnCFPForPar (Algorithm \ref{algo:cfp_for_par}) by either prefixing the optimal paths with offsets (i.e., $\mathtt{Halt}$ moves) or inactivating some active participants in $\Phi$. 
Firstly, $T_{Halt}$ in Figure~\ref{fig:halt_nonhalt} justifies that the CP prefixes the participants' optimal paths with a substantial number of $\mathtt{Halt}$ moves to generate their collision-free paths $\Sigma'$. 

Lastly, in Figure~\ref{fig:active_count}, we show the number of active robots ($N_{act}$) per horizon to measure the extent of inactivation of the active participants in $\Phi$ during prioritized planning in a horizon. 
Unlike \FnGAMRCPP, where all $R$ robots participate in a horizon, only $R^*$ robots participate in \FnOnDemCPP. 
So, during prioritized planning, the CP may inactivate some active participants in $\Phi$ while finding their collision-free paths $\Sigma'$ under the added constraint in the form of the remaining paths of the $R - R^*$ non-participants. 
Figure~\ref{fig:active_count} justifies that the inactivation becomes more severe in \FnOnDemCPP as $R$ increases because the constraint also gets stronger for a larger number of non-participants. 

}

\subsubsection{Comparison of Mission time}

The total computation time $T_c$ increases with the number of robots $R$ because more robots become participants in a horizon. 
So, assigning $R^*$ participants to $G^*$ unassigned goals and getting their collision-free paths $\Sigma'$ becomes computationally intensive. 
Notice that $T_c$ also increases with the workspace size, specifically with $|W_{free}|$ as the \textit{state-space} increases. 
Unlike \FnGAMRCPP, where all $R$ robots participate in replanning, \FnOnDemCPP replans with $R^* \leq R$ robots. 
It results in substantially smaller $T_c$ in \FnOnDemCPP compared to \FnGAMRCPP.     

Next, the total horizon length $\Lambda$ decreases with $R$ as the deployment of more robots expedites the coverage. 
But, it increases with the workspace size since more $W_{free}$ must be covered. 
During replanning, \FnGAMRCPP finds paths for $R$ robots without constraint. 
But, \FnOnDemCPP replans for $R^*$ participants while respecting the non-participants' constraint $\Sigma_{rem}$. 
As a result, there is an increase in the participants' collision-free path lengths, increasing individual horizon length $\lambda$ and so $\Lambda$. 
Thus, \FnGAMRCPP yields better $T_p$ compared to \FnOnDemCPP. 
\shortversion
{
For more insights on $T_p$, please refer to Section \RomanNumeralCaps{5}-B.1 in \cite{DBLP:conf/arxiv/MitraS23}. 
}

In summary, \FnOnDemCPP outperforms \FnGAMRCPP in terms of $T_c$ but underperforms in terms of $T_p$. 
As $R$ or the workspace size increases, \FnOnDemCPP's gain in terms of $T_c$ surpasses its loss in terms of $T_p$ by a noticeable margin. 
So, \FnOnDemCPP beats \FnGAMRCPP w.r.t. the mission time $T_m$.

\subsubsection{Implication on Energy Consumption}

A ground robot like TurtleBot consumes energy only for $T_{non-Halt}$ duration. 
Table~\ref{tab:experimental_results} shows that $T_{non-Halt}$ is $7\%-57\%$ more for \FnOnDemCPP than for \FnGAMRCPP. 
Thus, the energy consumption for the ground robots for \FnOnDemCPP is also proportionately more than that for \FnGAMRCPP. 

The situation is quite different for aerial robots like quadcopters. 
Once the mission starts, a quadcopter either keeps on \textit{hovering} (during $T_c$ and $T_{Halt}$) or makes translational moves (during $T_{non-Halt}$). 
Thus, a quadcopter consumes energy throughout the mission, i.e., during $T_m$. 
As \FnOnDemCPP significantly reduces $T_m$ for hundreds of robots compared to \FnGAMRCPP, it also helps reduce power consumption in the quadcopters during a mission. 

Given that reducing energy consumption during a mission is more crucial for aerial robots than for ground robots and that \FnOnDemCPP significantly reduces $T_m$ for both types of robots, \FnOnDemCPP establishes itself to be superior to \FnGAMRCPP for hundreds of robots. 


\subsubsection{Limitation of OnDemCPP}

The results obtained from the \textit{maze-128-128-2} workspace for $R \in \{128,\ 256\}$ show the limitation of \FnOnDemCPP. 
In a \textit{cluttered} workspace with narrow passageways, the flexibility of \FnGAMRCPP allows it to find a more efficient $\Sigma'$, thereby outmatching \FnOnDemCPP in instances with a smaller $R$.

\subsection{Simulations and Real Experiments}
\label{subsec:sim_and_real}

\longversion{
\begin{figure*}[t]
    \centering
    \begin{subfigure}{0.49\linewidth}
        \centering
        \includegraphics[scale=0.28]{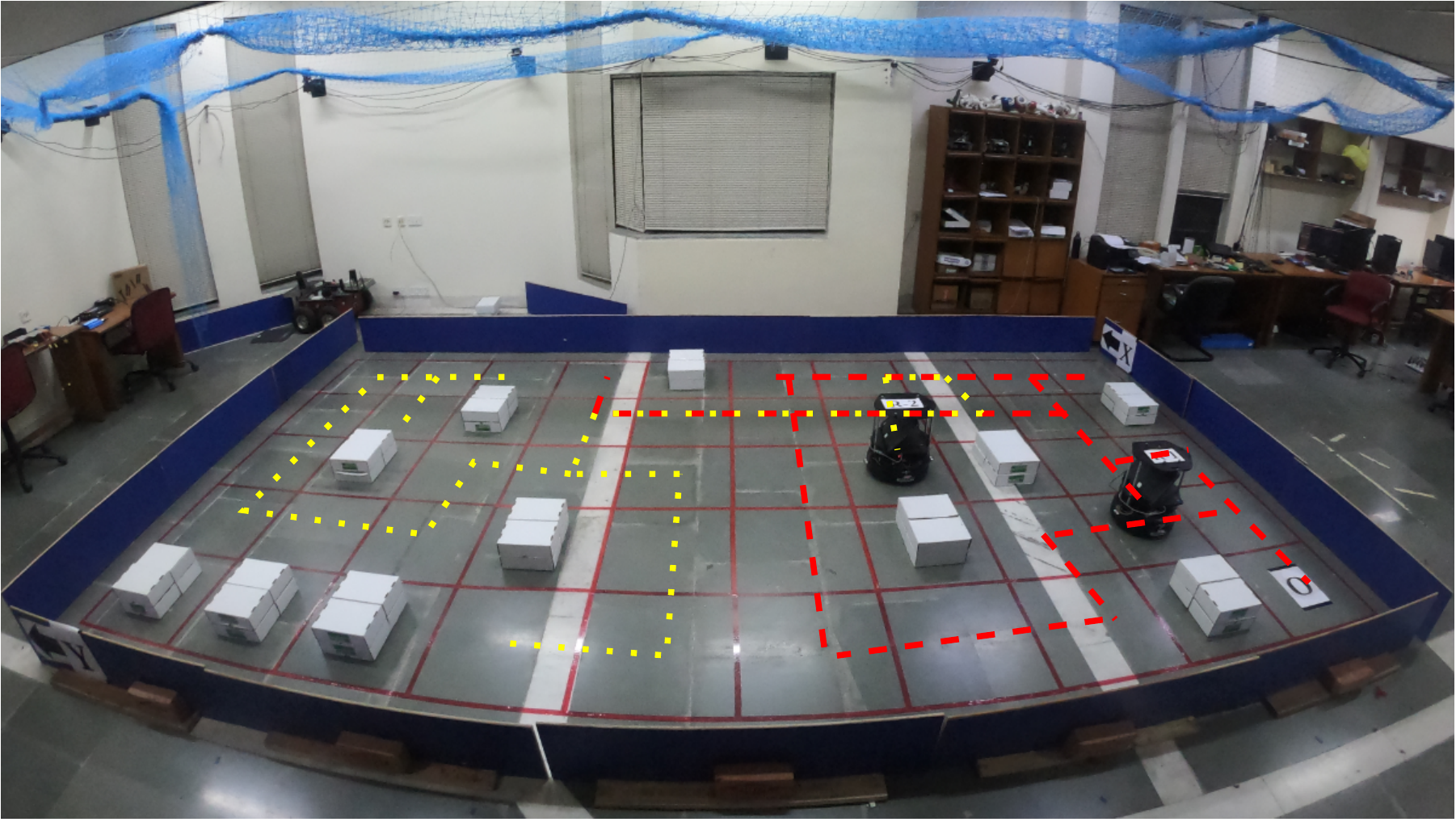}
        \caption{$\phantom{0}5\times10$ indoor workspace [cell size = $0.61\si{\meter}$]}       
        \label{subfig:indoor_ws}
    \end{subfigure}
    \hfill
    \begin{subfigure}{0.49\linewidth}
        \centering
        \includegraphics[scale=0.418]{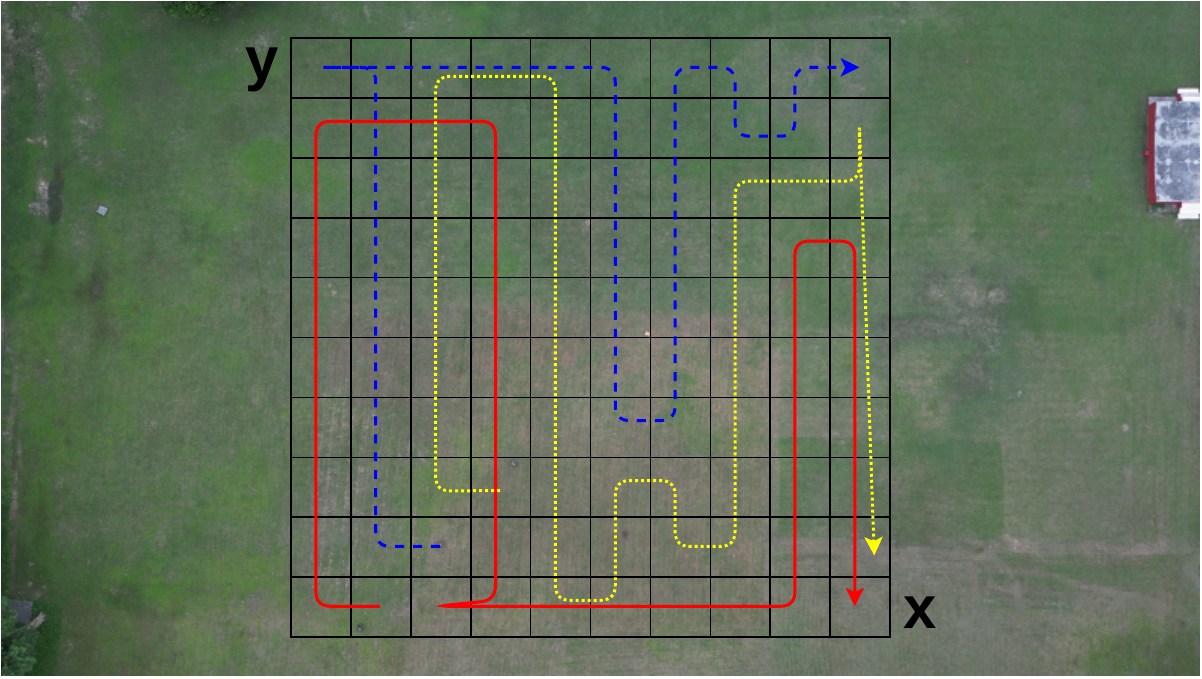}
        \caption{$10\times10$ outdoor workspace [cell size = $5\si{\meter}$]}       
        \label{subfig:outdoor_ws}
    \end{subfigure}
    \caption{Workspaces for the real experiments}
    \label{fig:real_ws}
\end{figure*}
}


For validation, we perform Gazebo\longversion{\cite{DBLP:conf/iros/KoenigH04}} simulations in five $2$D grid benchmark workspaces from \cite{DBLP:conf/socs/SternSFK0WLA0KB19} with $10$ Quadcopters and $10$ TurtleBots, respectively. 
We also perform two real experiments - one indoor with two TurtleBot2s, each fitted with four HC SR$04$ Ultrasonic Sound Sensors for obstacle detection and using Vicon\cite{key_vicon} for localization, 
and one outdoor with three Quadcopters, each fitted with one Cube Orange autopilot, one Herelink Air Unit for communication with the remote controller, and one Here$3$ GPS for localization. 
Figure \ref{fig:real_ws} shows the workspaces for these real experiments. 
The video containing the real experiments is available at
\url{https://youtu.be/5nhysTTp2Fw}.


\shortversion{
\begin{figure}[t]
    \centering
    \begin{subfigure}{0.49\linewidth}
        \centering
        \includegraphics[scale=0.025]{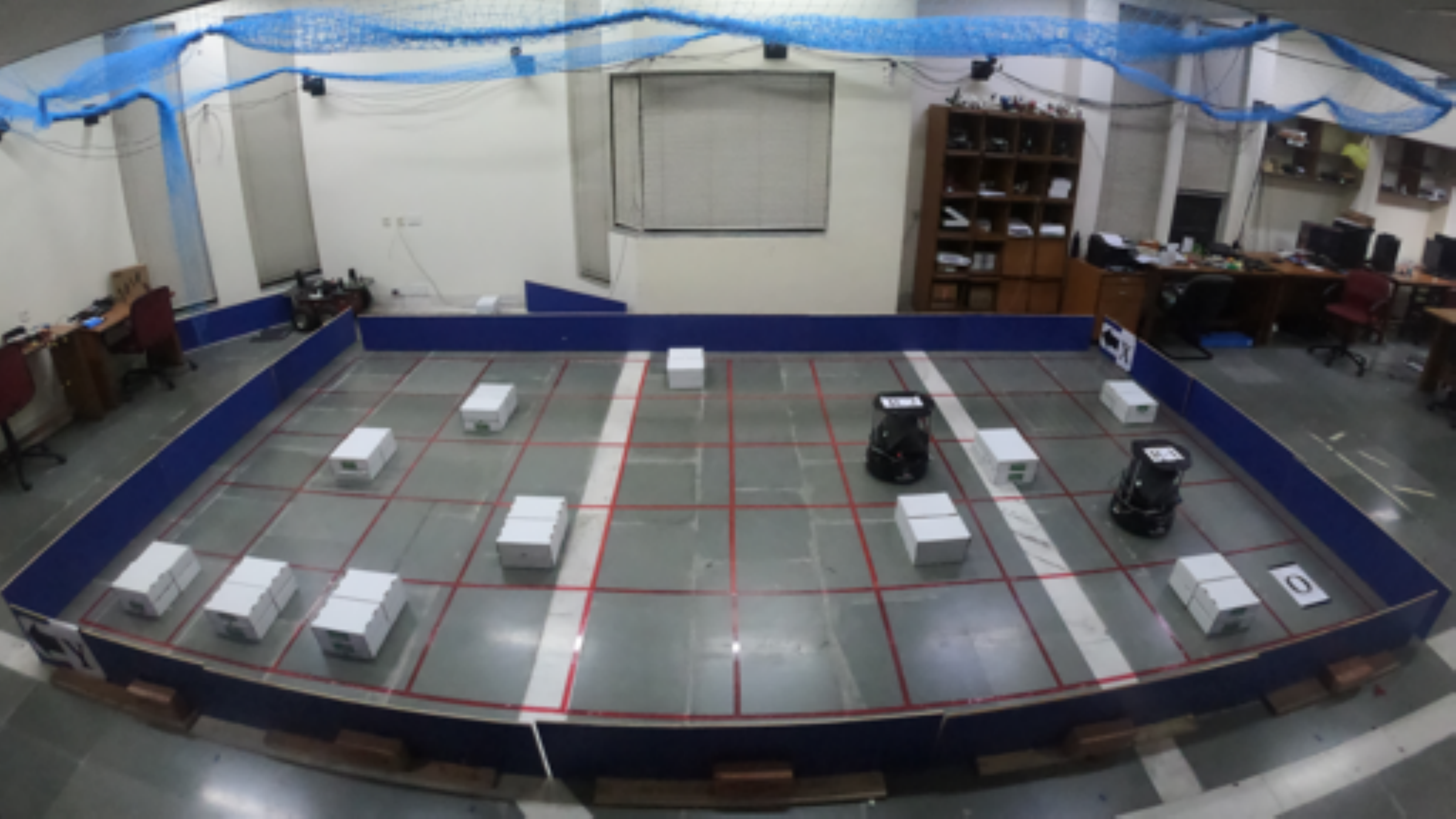}
        \caption{$\phantom{0}5\times10$ indoor workspace [cell size = $0.61\si{\meter}$]}       
        \label{subfig:indoor_ws}
    \end{subfigure}
    \hfill
    \begin{subfigure}{0.49\linewidth}
        \centering
        \includegraphics[scale=0.125]{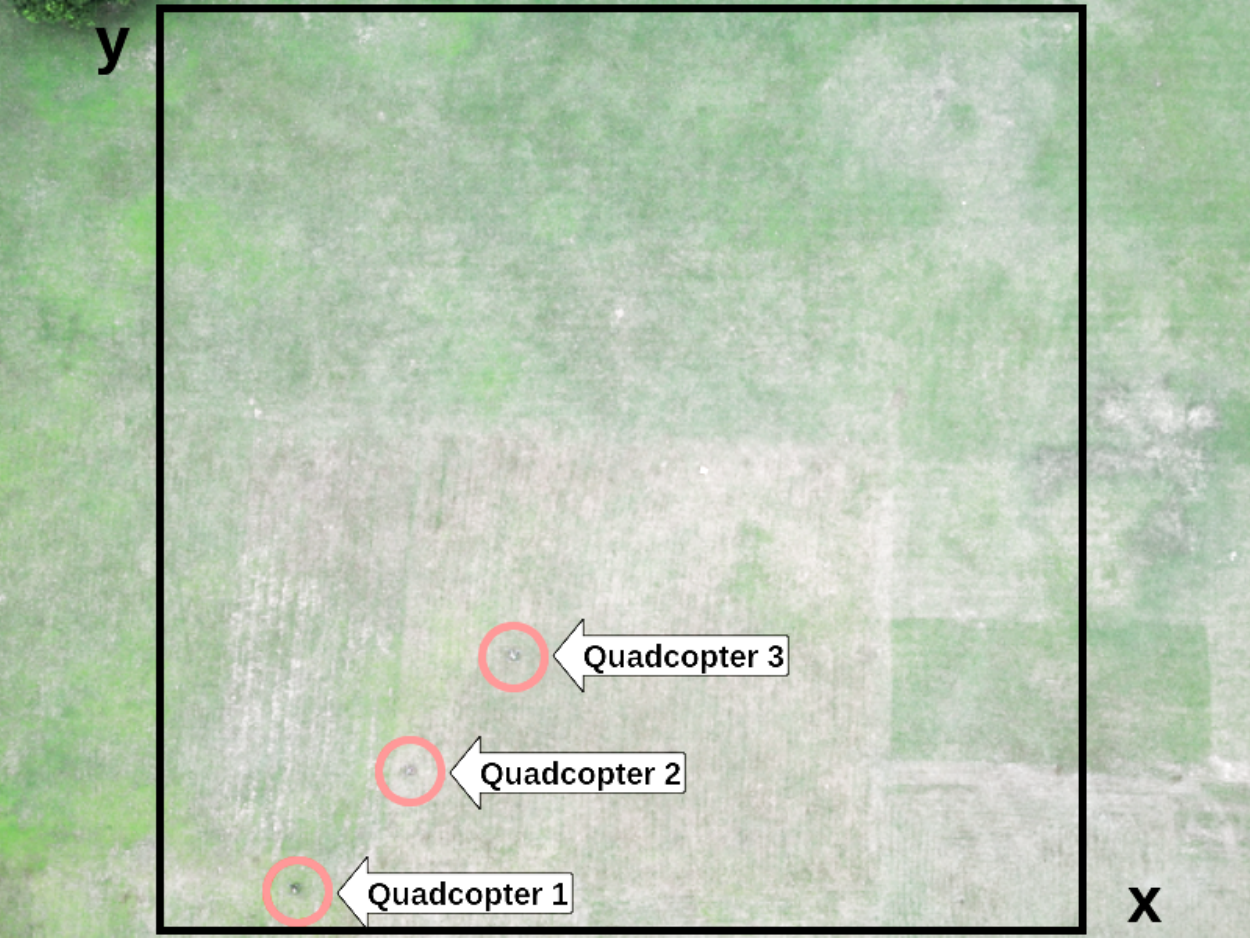}
        \caption{$10\times10$ outdoor workspace [cell size = $5\si{\meter}$]}       
        \label{subfig:outdoor_ws}
    \end{subfigure}
    \caption{Workspaces for the real experiments}
    \label{fig:real_ws}
\end{figure}
}

\section{Conclusion}
\label{sec:conclusion}

We have proposed a centralized online on-demand CPP algorithm that uses a goal assignment-based prioritized planning method at its core. 
Our CP guarantees complete coverage of an unknown workspace with multiple homogeneous failure-free robots. 
Experimental results demonstrate its superiority over its counterpart by decreasing the mission time significantly in large workspaces with hundreds of robots. 
In the future, we will extend our work to an approach dealing with simultaneous path planning and plan execution.

\bibliographystyle{IEEEtran}
\bibliography{9_references_short}

\begin{thebibliography}{10}
\providecommand{\url}[1]{#1}
\csname url@samestyle\endcsname
\providecommand{\newblock}{\relax}
\providecommand{\bibinfo}[2]{#2}
\providecommand{\BIBentrySTDinterwordspacing}{\spaceskip=0pt\relax}
\providecommand{\BIBentryALTinterwordstretchfactor}{4}
\providecommand{\BIBentryALTinterwordspacing}{\spaceskip=\fontdimen2\font plus
\BIBentryALTinterwordstretchfactor\fontdimen3\font minus \fontdimen4\font\relax}
\providecommand{\BIBforeignlanguage}[2]{{%
\expandafter\ifx\csname l@#1\endcsname\relax
\typeout{** WARNING: IEEEtran.bst: No hyphenation pattern has been}%
\typeout{** loaded for the language `#1'. Using the pattern for}%
\typeout{** the default language instead.}%
\else
\language=\csname l@#1\endcsname
\fi
#2}}
\providecommand{\BIBdecl}{\relax}
\BIBdecl

\bibitem{DBLP:conf/icra/BormannJHH18}
R.~Bormann, F.~Jordan, J.~Hampp, and M.~H{\"{a}}gele, ``Indoor coverage path planning: Survey, implementation, analysis,'' in \emph{ICRA}, 2018, pp. 1718--1725.

\bibitem{DBLP:conf/icra/VandermeulenGK19}
I.~Vandermeulen, R.~Gro{\ss}, and A.~Kolling, ``Turn-minimizing multirobot coverage,'' in \emph{ICRA}, 2019, pp. 1014--1020.

\bibitem{DBLP:conf/iros/JingPGRLS17}
W.~Jing, J.~Polden, C.~F. Goh, M.~Rajaraman, W.~Lin, and K.~Shimada, ``Sampling-based coverage motion planning for industrial inspection application with redundant robotic system,'' in \emph{IROS}, 2017, pp. 5211--5218.

\bibitem{DBLP:conf/icra/EngelsonsTH22}
D.~Engelsons, M.~Tiger, and F.~Heintz, ``Coverage path planning in large-scale multi-floor urban environments with applications to autonomous road sweeping,'' in \emph{ICRA}, 2022, pp. 3328--3334.

\bibitem{DBLP:conf/iros/SchirmerBS17}
R.~Schirmer, P.~Biber, and C.~Stachniss, ``Efficient path planning in belief space for safe navigation,'' in \emph{IROS}, 2017, pp. 2857--2863.

\bibitem{DBLP:journals/jfr/BarrientosCCMRSV11}
A.~Barrientos, J.~Colorado, J.~del Cerro, A.~Martinez, C.~Rossi, D.~Sanz, and J.~Valente, ``Aerial remote sensing in agriculture: {A} practical approach to area coverage and path planning for fleets of mini aerial robots,'' \emph{J. Field Robotics}, vol.~28, no.~5, pp. 667--689, 2011.

\bibitem{DBLP:conf/atal/LuoNKS19}
W.~Luo, C.~Nam, G.~Kantor, and K.~P. Sycara, ``Distributed environmental modeling and adaptive sampling for multi-robot sensor coverage,'' in \emph{AAMAS}, 2019, pp. 1488--1496.

\bibitem{DBLP:conf/icra/KarapetyanMLLOR18}
N.~Karapetyan, J.~Moulton, J.~S. Lewis, A.~Q. Li, J.~M. O'Kane, and I.~M. Rekleitis, ``Multi-robot dubins coverage with autonomous surface vehicles,'' in \emph{ICRA}, 2018, pp. 2373--2379.

\bibitem{DBLP:conf/icra/AgarwalA20}
S.~Agarwal and S.~Akella, ``Line coverage with multiple robots,'' in \emph{ICRA}, 2020, pp. 3248--3254.

\bibitem{DBLP:conf/iros/LewisEBRO17}
J.~S. Lewis, W.~Edwards, K.~Benson, I.~M. Rekleitis, and J.~M. O'Kane, ``Semi-boustrophedon coverage with a dubins vehicle,'' in \emph{IROS}, 2017, pp. 5630--5637.

\bibitem{cabreira2019survey}
T.~M. Cabreira, L.~B. Brisolara, and P.~R. Ferreira~Jr, ``Survey on coverage path planning with unmanned aerial vehicles,'' \emph{Drones}, vol.~3, no.~1, p.~4, 2019.

\bibitem{DBLP:conf/icra/DogruM19}
S.~Dogru and L.~Marques, ``Improved coverage path planning using a virtual sensor footprint: a case study on demining,'' in \emph{ICRA}, 2019, pp. 4410--4415.

\bibitem{DBLP:conf/iros/BouzidBS17}
Y.~Bouzid, Y.~Bestaoui, and H.~Siguerdidjane, ``Quadrotor-uav optimal coverage path planning in cluttered environment with a limited onboard energy,'' in \emph{IROS}, 2017, pp. 979--984.

\bibitem{DBLP:conf/iros/KleinerBKPM17}
A.~Kleiner, R.~Baravalle, A.~Kolling, P.~Pilotti, and M.~Munich, ``A solution to room-by-room coverage for autonomous cleaning robots,'' in \emph{IROS}, 2017, pp. 5346--5352.

\bibitem{DBLP:conf/atal/SharmaDK19}
G.~Sharma, A.~Dutta, and J.~Kim, ``Optimal online coverage path planning with energy constraints,'' in \emph{AAMAS}, 2019, pp. 1189--1197.

\bibitem{DBLP:conf/icra/WeiI18}
M.~Wei and V.~Isler, ``Coverage path planning under the energy constraint,'' in \emph{ICRA}, 2018, pp. 368--373.

\bibitem{DBLP:conf/icra/CoombesCL19}
M.~Coombes, W.~Chen, and C.~Liu, ``Flight testing boustrophedon coverage path planning for fixed wing uavs in wind,'' in \emph{ICRA}, 2019, pp. 711--717.

\bibitem{DBLP:conf/iros/ChenTKV19}
X.~Chen, T.~M. Tucker, T.~R. Kurfess, and R.~W. Vuduc, ``Adaptive deep path: Efficient coverage of a known environment under various configurations,'' in \emph{IROS}, 2019, pp. 3549--3556.

\bibitem{DBLP:conf/icra/DharmadhikariDS20}
M.~Dharmadhikari, T.~Dang, L.~Solanka, J.~Loje, H.~Nguyen, N.~Khedekar, and K.~Alexis, ``Motion primitives-based path planning for fast and agile exploration using aerial robots,'' in \emph{ICRA}, 2020, pp. 179--185.

\bibitem{DBLP:conf/icra/ZhuYLSJM21}
L.~Zhu, S.~Yao, B.~Li, A.~Song, Y.~Jia, and J.~Mitani, ``A geometric folding pattern for robot coverage path planning,'' in \emph{ICRA}, 2021, pp. 8509--8515.

\bibitem{DBLP:journals/ras/GalceranC13}
E.~Galceran and M.~Carreras, ``A survey on coverage path planning for robotics,'' \emph{Robotics Auton. Syst.}, vol.~61, no.~12, pp. 1258--1276, 2013.

\bibitem{DBLP:conf/icra/ModaresGMD17}
J.~Modares, F.~Ghanei, N.~Mastronarde, and K.~Dantu, ``{UB-ANC} planner: Energy efficient coverage path planning with multiple drones,'' in \emph{ICRA}, 2017, pp. 6182--6189.

\bibitem{DBLP:conf/iros/KarapetyanBMTR17}
N.~Karapetyan, K.~Benson, C.~McKinney, P.~Taslakian, and I.~M. Rekleitis, ``Efficient multi-robot coverage of a known environment,'' in \emph{IROS}, 2017, pp. 1846--1852.

\bibitem{DBLP:conf/atal/SalarisRA20}
M.~Salaris, A.~Riva, and F.~Amigoni, ``Multirobot coverage of modular environments,'' in \emph{AAMAS}, 2020, pp. 1178--1186.

\bibitem{DBLP:conf/iros/HardouinMMMM20}
G.~Hardouin, J.~Moras, F.~Morbidi, J.~Marzat, and E.~M. Mouaddib, ``Next-best-view planning for surface reconstruction of large-scale {3D} environments with multiple uavs,'' in \emph{IROS}, 2020, pp. 1567--1574.

\bibitem{DBLP:conf/icra/TangSZ21}
J.~Tang, C.~Sun, and X.~Zhang, ``Mstc\({}^{\mbox{{\({_\ast}\)}}}\): Multi-robot coverage path planning under physical constrain,'' in \emph{ICRA}, 2021, pp. 2518--2524.

\bibitem{DBLP:conf/icra/CollinsGEDDC21}
L.~Collins, P.~Ghassemi, E.~T. Esfahani, D.~S. Doermann, K.~Dantu, and S.~Chowdhury, ``Scalable coverage path planning of multi-robot teams for monitoring non-convex areas,'' in \emph{ICRA}, 2021, pp. 7393--7399.

\bibitem{DBLP:conf/icra/HazonK05}
N.~Hazon and G.~A. Kaminka, ``Redundancy, efficiency and robustness in multi-robot coverage,'' in \emph{ICRA}, 2005, pp. 735--741.

\bibitem{DBLP:conf/iros/ZhengJKK05}
X.~Zheng, S.~Jain, S.~Koenig, and D.~Kempe, ``Multi-robot forest coverage,'' in \emph{IROS}, 2005, pp. 3852--3857.

\bibitem{DBLP:conf/atal/RivaA17}
A.~Riva and F.~Amigoni, ``A {GRASP} metaheuristic for the coverage of grid environments with limited-footprint tools,'' in \emph{AAMAS}, 2017, pp. 484--491.

\bibitem{DBLP:conf/icra/CaoZTC20}
C.~Cao, J.~Zhang, M.~J. Travers, and H.~Choset, ``Hierarchical coverage path planning in complex 3d environments,'' in \emph{ICRA}, 2020, pp. 3206--3212.

\bibitem{DBLP:conf/iros/RamachandranZPS20}
R.~K. Ramachandran, L.~Zhou, J.~A. Preiss, and G.~S. Sukhatme, ``Resilient coverage: Exploring the local-to-global trade-off,'' in \emph{IROS}, 2020, pp. 11\,740--11\,747.

\bibitem{DBLP:conf/icra/SongYQSLL0022}
H.~Song, J.~Yu, J.~Qiu, Z.~Sun, K.~Lang, Q.~Luo, Y.~Shen, and Y.~Wang, ``Multi-uav disaster environment coverage planning with limited-endurance,'' in \emph{ICRA}, 2022, pp. 10\,760--10\,766.

\bibitem{DBLP:conf/agents/Yamauchi98}
B.~Yamauchi, ``Frontier-based exploration using multiple robots,'' in \emph{AGENTS}, K.~P. Sycara and M.~J. Wooldridge, Eds., 1998, pp. 47--53.

\bibitem{DBLP:journals/ijrr/HowardPS06}
A.~Howard, L.~E. Parker, and G.~S. Sukhatme, ``Experiments with a large heterogeneous mobile robot team: Exploration, mapping, deployment and detection,'' \emph{Int. J. Robotics Res.}, vol.~25, no. 5-6, pp. 431--447, 2006.

\bibitem{DBLP:conf/icra/BircherKAOS16}
A.~Bircher, M.~Kamel, K.~Alexis, H.~Oleynikova, and R.~Siegwart, ``Receding horizon ``next-best-view'' planner for {3D} exploration,'' in \emph{ICRA}, 2016, pp. 1462--1468.

\bibitem{DBLP:conf/mesas/MannucciNP17}
A.~Mannucci, S.~Nardi, and L.~Pallottino, ``Autonomous {3D} exploration of large areas: {A} cooperative frontier-based approach,'' in \emph{MESAS}, vol. 10756.\hskip 1em plus 0.5em minus 0.4em\relax Springer, 2017, pp. 18--39.

\bibitem{DBLP:conf/atal/JensenG18}
E.~A. Jensen and M.~L. Gini, ``Online multi-robot coverage: Algorithm comparisons,'' in \emph{AAMAS}, 2018, pp. 1974--1976.

\bibitem{DBLP:conf/iccps/DasS18}
S.~N. Das and I.~Saha, ``Rhocop: receding horizon multi-robot coverage,'' in \emph{ICCPS}, 2018, pp. 174--185.

\bibitem{DBLP:conf/icra/OzdemirGKHG19}
A.~{\"{O}}zdemir, M.~Gauci, A.~Kolling, M.~D. Hall, and R.~Gro{\ss}, ``Spatial coverage without computation,'' in \emph{ICRA}, 2019, pp. 9674--9680.

\bibitem{DBLP:conf/iros/DattaA21}
S.~Datta and S.~Akella, ``Prioritized indoor exploration with a dynamic deadline,'' in \emph{IROS}, 2021, pp. 3131--3137.

\bibitem{DBLP:conf/icra/GabrielyR01}
Y.~Gabriely and E.~Rimon, ``Spanning-tree based coverage of continuous areas by a mobile robot,'' in \emph{ICRA}, 2001, pp. 1927--1933.

\bibitem{DBLP:conf/icra/LiRS21}
M.~Li, A.~Richards, and M.~Sooriyabandara, ``Asynchronous reliability-aware multi-uav coverage path planning,'' in \emph{ICRA}, 2021, pp. 10\,023--10\,029.

\bibitem{DBLP:conf/iros/MitraS22}
R.~Mitra and I.~Saha, ``Scalable online coverage path planning for multi-robot systems,'' in \emph{IROS}, 2022, pp. 10\,102--10\,109.

\bibitem{butler2001complete}
Z.~J. Butler, A.~A. Rizzi, and R.~L. Hollis, ``Complete distributed coverage of rectilinear environments,'' in \emph{Algorithmic and Computational Robotics}.\hskip 1em plus 0.5em minus 0.4em\relax AK Peters/CRC Press, 2001, pp. 61--68.

\bibitem{DBLP:conf/icra/HazonMK06}
N.~Hazon, F.~Mieli, and G.~A. Kaminka, ``Towards robust on-line multi-robot coverage,'' in \emph{ICRA}, 2006, pp. 1710--1715.

\bibitem{DBLP:journals/amai/RekleitisNRC08}
I.~M. Rekleitis, A.~P. New, E.~S. Rankin, and H.~Choset, ``Efficient boustrophedon multi-robot coverage: an algorithmic approach,'' \emph{Ann. Math. Artif. Intell.}, vol.~52, no. 2-4, pp. 109--142, 2008.

\bibitem{DBLP:conf/ijcai/JensenG13}
E.~A. Jensen and M.~L. Gini, ``Rolling dispersion for robot teams,'' in \emph{IJCAI}, 2013, pp. 2473--2479.

\bibitem{DBLP:conf/dars/HungerfordDG14}
K.~Hungerford, P.~Dasgupta, and K.~R. Guruprasad, ``A repartitioning algorithm to guarantee complete, non-overlapping planar coverage with multiple robots,'' in \emph{DARS}, N.~Y. Chong and Y.~Cho, Eds., vol. 112, 2014, pp. 33--48.

\bibitem{DBLP:journals/apin/VietDCC15}
H.~H. Viet, V.~Dang, S.~Y. Choi, and T.~Chung, ``{BoB}: an online coverage approach for multi-robot systems,'' \emph{Appl. Intell.}, vol.~42, no.~2, pp. 157--173, 2015.

\bibitem{DBLP:conf/icra/SiligardiPKMGBS19}
L.~Siligardi, J.~Panerati, M.~Kaufmann, M.~Minelli, C.~Ghedini, G.~Beltrame, and L.~Sabattini, ``Robust area coverage with connectivity maintenance,'' in \emph{ICRA}, 2019, pp. 2202--2208.

\bibitem{DBLP:journals/arobots/SongG20}
J.~Song and S.~Gupta, ``{CARE:} cooperative autonomy for resilience and efficiency of robot teams for complete coverage of unknown environments under robot failures,'' \emph{Auton. Robots}, vol.~44, no. 3-4, pp. 647--671, 2020.

\bibitem{DBLP:conf/iros/HassanML20}
M.~Hassan, D.~Mustafic, and D.~Liu, ``Dec-ppcpp: {A} decentralized predator-prey-based approach to adaptive coverage path planning amid moving obstacles,'' in \emph{IROS}, 2020, pp. 11\,732--11\,739.

\bibitem{key_turtlebot}
\BIBentryALTinterwordspacing
Turtle{B}ot. [Online]. Available: \url{https://www.turtlebot.com/}
\BIBentrySTDinterwordspacing

\bibitem{key_turtlebot2}
\BIBentryALTinterwordspacing
Turtle{B}ot 2. [Online]. Available: \url{https://clearpathrobotics.com/turtlebot-2-open-source-robot/}
\BIBentrySTDinterwordspacing

\bibitem{DBLP:conf/amcc/OberlinRD09}
P.~Oberlin, S.~Rathinam, and S.~Darbha, ``A transformation for a heterogeneous, multiple depot, multiple traveling salesman problem,'' in \emph{ACC}, 2009, pp. 1292--1297.

\bibitem{DBLP:books/cu/L2006}
S.~M. LaValle, \emph{Planning Algorithms}.\hskip 1em plus 0.5em minus 0.4em\relax Cambridge University Press, 2006.

\bibitem{DBLP:books/daglib/0023820}
S.~J. Russell and P.~Norvig, \emph{Artificial Intelligence - {A} Modern Approach, Third International Edition}.\hskip 1em plus 0.5em minus 0.4em\relax Pearson Education, 2010.

\bibitem{kuhn1955hungarian}
H.~W. Kuhn, ``The hungarian method for the assignment problem,'' \emph{Nav. Res. Logist.}, vol.~2, no. 1-2, pp. 83--97, 1955.

\bibitem{DBLP:journals/cacm/BourgeoisL71}
F.~Bourgeois and J.~Lassalle, ``An extension of the munkres algorithm for the assignment problem to rectangular matrices,'' \emph{Commun. {ACM}}, vol.~14, no.~12, pp. 802--804, 1971.

\bibitem{DBLP:conf/icra/ErdmannL86}
M.~A. Erdmann and T.~Lozano{-}P{\'{e}}rez, ``On multiple moving objects,'' in \emph{ICRA}, 1986, pp. 1419--1424.

\bibitem{DBLP:conf/aiide/Silver05}
D.~Silver, ``Cooperative pathfinding,'' in \emph{AIIDE}, 2005, pp. 117--122.

\bibitem{DBLP:conf/iros/BergO05}
J.~P. van~den Berg and M.~H. Overmars, ``Prioritized motion planning for multiple robots,'' in \emph{IROS}, 2005, pp. 430--435.

\bibitem{DBLP:conf/rss/TurpinMMK13}
M.~Turpin, K.~Mohta, N.~Michael, and V.~Kumar, ``Goal assignment and trajectory planning for large teams of aerial robots,'' in \emph{RSS}, 2013.

\bibitem{DBLP:journals/ipl/HassinR94}
R.~Hassin and S.~Rubinstein, ``Approximations for the maximum acyclic subgraph problem,'' \emph{Inf. Process. Lett.}, vol.~51, no.~3, pp. 133--140, 1994.

\bibitem{DBLP:books/daglib/0022194}
S.~Skiena, \emph{The Algorithm Design Manual, Second Edition}.\hskip 1em plus 0.5em minus 0.4em\relax Springer, 2008.

\bibitem{key_ros}
\BIBentryALTinterwordspacing
Robot {O}perating {S}ystem. [Online]. Available: \url{https://www.ros.org/}
\BIBentrySTDinterwordspacing

\bibitem{DBLP:conf/socs/SternSFK0WLA0KB19}
R.~Stern, N.~R. Sturtevant, A.~Felner, S.~Koenig, H.~Ma, T.~T. Walker, J.~Li, D.~Atzmon, L.~Cohen, T.~K.~S. Kumar, R.~Bart{\'{a}}k, and E.~Boyarski, ``Multi-agent pathfinding: Definitions, variants, and benchmarks,'' in \emph{SOCS}, 2019, pp. 151--159.

\bibitem{DBLP:conf/arxiv/MitraS23}
\BIBentryALTinterwordspacing
R.~Mitra and I.~Saha, ``Online on-demand multi-robot coverage path planning,'' \emph{CoRR}, vol. abs/2303.00047, 2023. [Online]. Available: \url{https://arxiv.org/abs/2303.00047}
\BIBentrySTDinterwordspacing

\bibitem{dji_drones}
\BIBentryALTinterwordspacing
{DJI} {C}onsumer {D}rones {C}omparison. [Online]. Available: \url{https://www.dji.com/global/products/comparison-consumer-drones}
\BIBentrySTDinterwordspacing

\bibitem{DBLP:conf/iros/KoenigH04}
N.~P. Koenig and A.~Howard, ``Design and use paradigms for gazebo, an open-source multi-robot simulator,'' in \emph{IROS}, 2004, pp. 2149--2154.

\bibitem{key_vicon}
\BIBentryALTinterwordspacing
Vicon {M}otion {C}apture {S}ystems. [Online]. Available: \url{https://www.vicon.com/}
\BIBentrySTDinterwordspacing

\end{thebibliography}
\end{document}